\documentclass{article}
\usepackage[preprint]{icml2026}

\usepackage[utf8]{inputenc}
\usepackage{amsmath,amssymb,amsthm,amsfonts}
\usepackage{algorithm}
\usepackage{algorithmic}
\usepackage{graphicx}
\usepackage{subcaption}
\usepackage{booktabs}
\usepackage{float}
\usepackage{natbib}
\usepackage{enumitem}
\usepackage{mathtools}
\usepackage{tikz}
\usepackage{pgfplots}
\usepackage{fontspec}
\usepackage[english,provide=*]{babel}
\usepackage{changepage}
\usepackage[breaklinks=true]{hyperref}
\usepackage{url}
\usepackage{listings}
\usepackage{caption}

\pgfplotsset{compat=1.18}

\usetikzlibrary{arrows.meta, positioning, calc, shadows, decorations.pathreplacing, backgrounds, shapes.geometric, shadows.blur}

\definecolor{InputBlue}{RGB}{59, 130, 246}
\definecolor{LinearPurple}{RGB}{168, 85, 247}
\definecolor{OperationOrange}{RGB}{251, 146, 60}
\definecolor{ActivationGreen}{RGB}{34, 197, 94}
\definecolor{OutputRed}{RGB}{239, 68, 68}
\definecolor{TextDark}{RGB}{17, 24, 39}
\definecolor{TextLight}{RGB}{107, 114, 128}
\definecolor{BorderGray}{RGB}{209, 213, 219}
\definecolor{BackgroundGray}{RGB}{249, 250, 251}

\newtheorem{theorem}{Theorem}
\newtheorem{proposition}{Proposition}
\newtheorem{lemma}{Lemma}
\newtheorem{remark}{Remark}
\newtheorem{definition}{Definition}
\newtheorem{corollary}{Corollary}

\babelprovide[import,onchar=ids fonts]{english}
\babelprovide[onchar=ids fonts]{tifinagh}
\babelfont[tifinagh]{rm}[Path=fonts/, Extension=.ttf, UprightFont=*]{ebrima}

\newfontfamily\tifinaghfont[Path=fonts/, Extension=.ttf, UprightFont=*]{ebrima}
\DeclareRobustCommand{\E}{\text{\normalfont\tifinaghfont ⵟ}}
\DeclareRobustCommand{\ⵟ}{\text{\normalfont\tifinaghfont ⵟ}}

\DeclareMathOperator{\supp}{supp}

\icmltitlerunning{No More DeLuLu: Physics-Inspired Kernel Networks for Geometrically-Grounded Neural Computation}
\begin{document}

\twocolumn[
\icmltitle{No More DeLuLu: Physics-Inspired Kernel Networks for Geometrically-Grounded Neural Computation}

\begin{icmlauthorlist}
\icmlauthor{Taha Bouhsine}{azetta}
\end{icmlauthorlist}

\icmlaffiliation{azetta}{Azetta AI}
\icmlcorrespondingauthor{Taha Bouhsine}{taha@azetta.ai}

\icmlkeywords{Kernel Methods, Neural Networks, Mercer Kernels, RKHS, Universal Approximation, Geometric Deep Learning, Whitebox Neural Matter Networks}

\vskip 0.3in
]

\printAffiliationsAndNotice{}

\begin{abstract}
We introduce the \E-product, a kernel operator combining quadratic alignment with inverse-square proximity. We prove it is a Mercer kernel, analytic, Lipschitz on bounded domains, and self-regularizing, admitting a unique RKHS embedding. Neural Matter Networks (NMNs) use \E-product as the sole non-linearity, replacing conventional linear-activation-normalization blocks with a single geometrically-grounded operation. This architectural simplification preserves universal approximation while shifting normalization into the kernel itself via the denominator, rather than relying on separate normalization layers. Empirically, NMN-based classifiers match linear baselines on MNIST while exhibiting bounded prototype evolution and superposition robustness. In language modeling, Aether-GPT2 achieves lower validation loss than GPT-2 with a comparable parameter budget while using \E-based attention and MLP blocks. Our framework unifies kernel learning, gradient stability, and information geometry, establishing NMNs as a principled alternative to conventional neural architectures.
\end{abstract}

\section{Introduction}
\label{sec:introduction}

\begin{figure*}[ht]
    \centering
    \includegraphics[width=.7\textwidth]{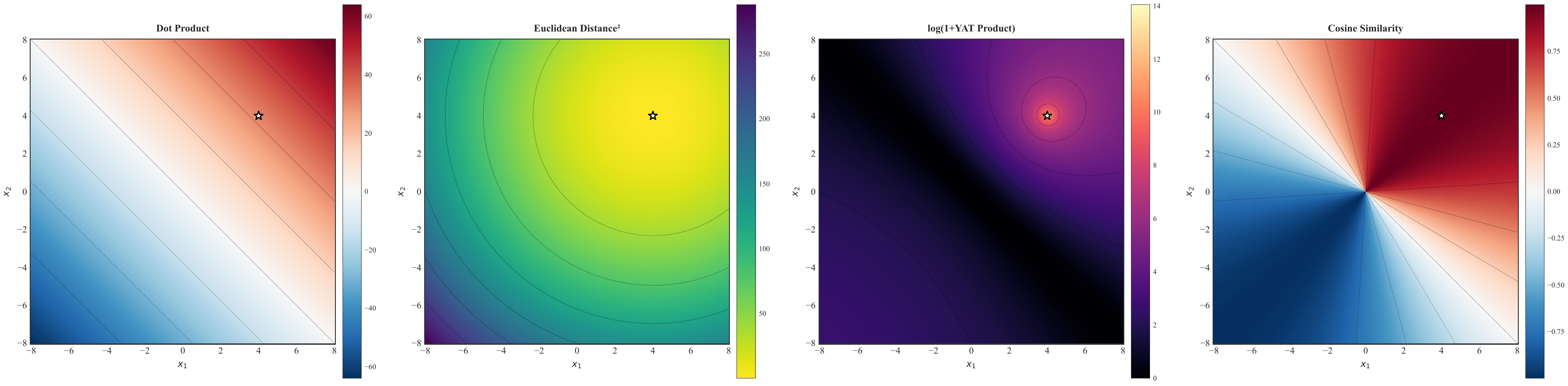}
    \includegraphics[width=.7\textwidth]{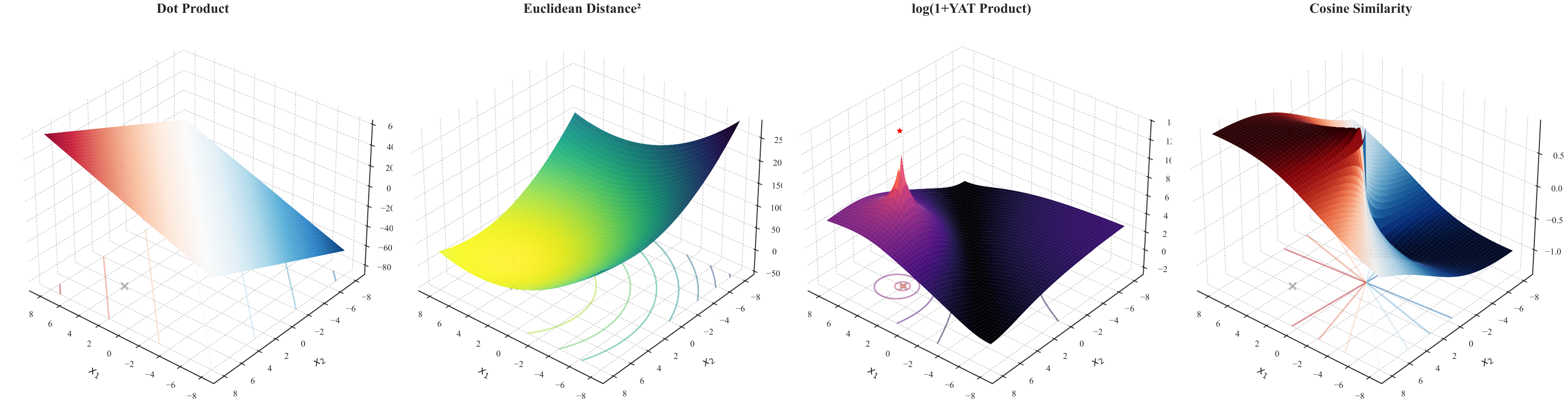}
    \includegraphics[width=.7\textwidth]{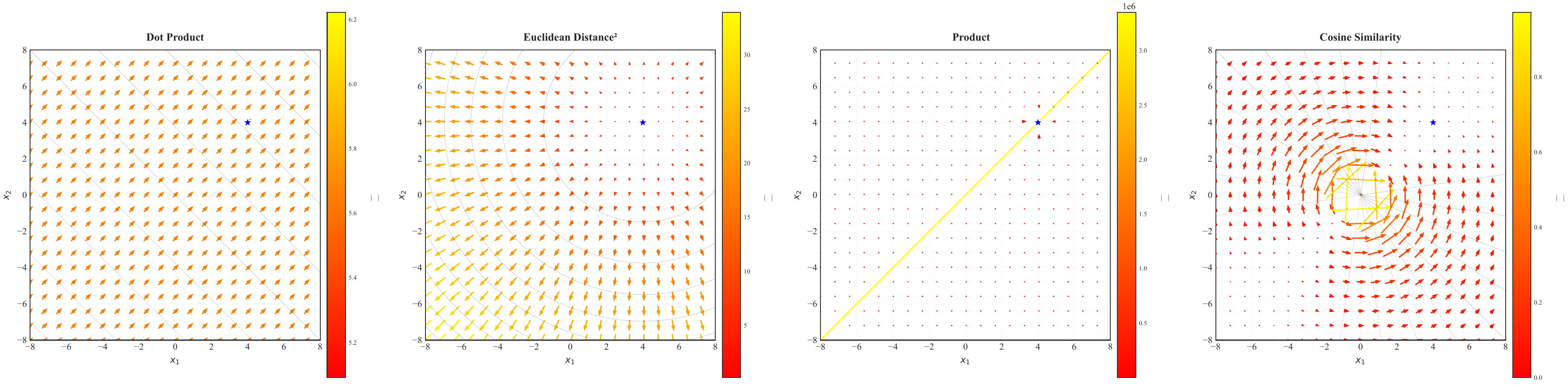}

    \caption{Comparison of the gradient field and vector field for Dot Product, Euclidean Distance, \ⵟ-product, and Cosine Similarity (from left to right). The heatmaps illustrate how the \ⵟ-product, unlike traditional similarity measures, creates a potential well around the weight vector $\mathbf{w}$, reflecting both alignment and proximity.}
    \label{fig:yat_product_analysis} 
\end{figure*}

Modern neural networks separate geometry from non-linearity: dot products compute alignment, then activation functions like ReLU threshold the result \citep{goodfellow2016deep}. This discards information—all negative activations become zero—requiring normalization layers and attention mechanisms to recover expressiveness \citep{ioffe2015batch, vaswani2017attention}.

We propose the \E-product, a neural operator that unifies alignment and proximity in a single computation:
\begin{equation}
\label{eq:yat_product}
\E(\mathbf{w},\mathbf{x}) := \frac{\langle \mathbf{w}, \mathbf{x} \rangle^2}{\|\mathbf{w} - \mathbf{x}\|^2 + \varepsilon}
\end{equation}
Inspired by inverse-square laws in physics, this operator creates a ``potential well'' around the weight vector $\mathbf{w}$: responses are high when inputs are both aligned \emph{and} close, providing intrinsic non-linearity without thresholding. The \E-product is a Mercer kernel (Theorem~\ref{thm:mercer}) with universal approximation (Theorem~\ref{thm:uat-external-bias}), self-regulation (Proposition~\ref{prop:self-regulation}), and stable gradients (Proposition~\ref{prop:stable-learning}).

Using this kernel in primal form, we construct Neural-Matter Networks (NMNs)—networks where neurons interact through potential fields without requiring Gram matrix inversion. Our contributions:
Our contributions span theory, architecture, and interpretability: the \E-product eliminates activation functions while maintaining Mercer kernel properties; NMNs reduce memory by 15-25\% with infinite differentiability for physics-informed applications; and the geometric structure preserves spatial relationships (Theorems~\ref{thm:min-similarity}, \ref{thm:max-similarity}), enabling principled analysis of learned representations.

\section{Methodology: A Framework for Geometry-Aware Computation}
\label{sec:methodology}

The \E-product is formally defined as $\E(\mathbf{w}, \mathbf{x}) = \frac{(\mathbf{w}^\top\mathbf{x})^2}{\|\mathbf{w} - \mathbf{x}\|^2 + \varepsilon}$.
It exhibits a unique form of non-linearity. Unlike conventional activation functions (e.g., ReLU \citep{nair2010rectified}, sigmoid) which are often applied as separate, somewhat heuristic, transformations to introduce non-linearity after a linear operation, the non-linearity in the \E-product arises directly from its mathematical structure. It is a function of the squared dot product (capturing alignment) and the inverse squared Euclidean distance (capturing proximity) between the weight vector $\mathbf{w}$ and the input vector $\mathbf{x}$. This formulation provides a rich, explainable non-linearity based on fundamental geometric and algebraic relationships, rather than an imposed, "artificial" non-linear mapping. The interaction between the numerator and the denominator allows for complex responses that are inherently tied to the geometric interplay of the input vectors.

The \E-product creates a potential well around the weight vector $\mathbf{w}$, reflecting both alignment and proximity.

\begin{figure}[htbp]
    \centering
    \includegraphics[width=0.45\linewidth]{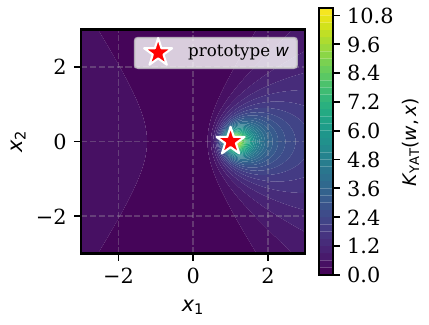}
    \caption{Potential well induced by a single \E-neuron. High response occurs near $\mathbf{w}$ when inputs are both aligned and close, in contrast to unbounded linear hyperplanes.}
    \label{fig:yat_potential_well}
\end{figure}

\begin{figure}[htbp]
    \centering
    \begin{subfigure}{0.48\linewidth}
        \centering
        \includegraphics[width=\textwidth]{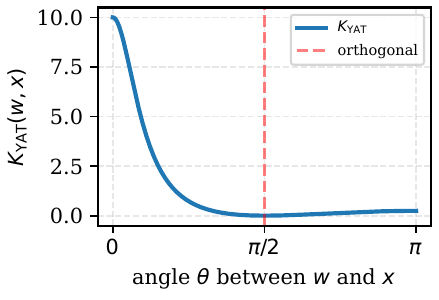}
        \caption{Angle response}
    \end{subfigure}
    \hfill
    \begin{subfigure}{0.48\linewidth}
        \centering
        \includegraphics[width=\textwidth]{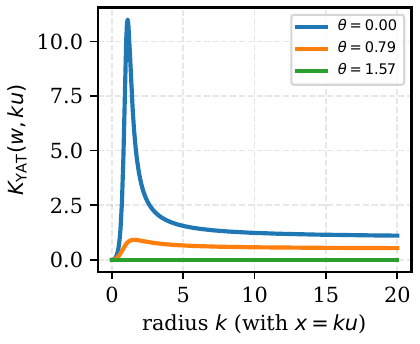}
        \caption{Radius behavior}
    \end{subfigure}
    \caption{(a) \E-product response as a function of angle $\theta$ between $\mathbf{w}$ and $\mathbf{x}$, demonstrating orthogonality sensitivity ($K_\E = 0$ at $\theta = \pi/2$). (b) Self-regulation property: response converges to $\|\mathbf{w}\|^2\cos^2\theta$ as radius $k \to \infty$, ensuring bounded outputs.}
    \label{fig:yat_geometry_properties}
\end{figure}

At initialization, this geometry also exhibits a favorable high-dimensional scaling behavior. Under standard assumptions of i.i.d. zero-mean, constant-variance coordinates for $\mathbf{x}, \mathbf{w} \in \mathbb{R}^d$, both the numerator $A(\mathbf{x},\mathbf{w}) = (\mathbf{w}^\top\mathbf{x})^2$ and the denominator $r(\mathbf{x},\mathbf{w}) = \|\mathbf{w}-\mathbf{x}\|^2$ grow linearly with dimension, while their ratio $K(\mathbf{x},\mathbf{w}) = A/(r+\varepsilon)$ remains $\mathcal{O}(1)$ in expectation (Corollary~\ref{cor:dimensional-scaling}). This self-normalizing $\mathcal{O}(1)$ scaling directly counters high-dimensional ``saturation'' concerns that arise for RBF kernels, whose values vanish exponentially with dimension.

As a Mercer kernel (Theorem~\ref{thm:mercer}), on every compact domain $K$ the \E-product admits a unique RKHS (up to isometry) (Theorem~\ref{thm:rkhs-embedding}) and inherits kernel method advantages. Importantly, this kernel is used in its primal form for weight prototype learning and optimization. Consequently, we do not use any Gram matrix, thereby bypassing the stability issues associated with its inversion in dual-form kernel regression \citep{scholkopf2002learning}.

When the \E-product is applied to probability distributions in the simplex, its extremal values admit an information-geometric characterization:

\begin{theorem}[Mercer property of the \E-product kernel]
\label{thm:mercer}
Let $\varepsilon>0$ and define
\[
k_{\E}(\mathbf{x},\mathbf{w})
\;=\;
\frac{(\mathbf{x}^\top \mathbf{w})^2}{\|\mathbf{x}-\mathbf{w}\|^2+\varepsilon},
\qquad \mathbf{x},\mathbf{w}\in\mathbb R^d.
\]
Then for every compact set $K\subset\mathbb R^d$, the kernel
$k_{\E}$ is symmetric, continuous, and positive definite on $K$.
Consequently, $k_{\E}$ is a Mercer kernel on $K$.
\end{theorem}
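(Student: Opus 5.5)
Positive semidefiniteness is the only real issue; symmetry and continuity are immediate. Symmetry holds because $(\mathbf{x}^\top\mathbf{w})^2$ and $\|\mathbf{x}-\mathbf{w}\|^2$ are symmetric in their two arguments. Continuity holds because the denominator is at least $\varepsilon>0$, so $k_{\E}$ is a quotient of polynomials with a nonvanishing denominator; it is in fact real-analytic on all of $\mathbb R^d\times\mathbb R^d$. The plan is to write $k_{\E}$ as a product of two positive semidefinite kernels and apply the Schur product theorem. Mercer's theorem on the compact set $K$ then gives the eigen-expansion.

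\textbf{Step 1 (numerator).} The factor $(\mathbf{x}^\top\mathbf{w})^2=\langle \mathbf{x}\mathbf{x}^\top,\mathbf{w}\mathbf{w}^\top\rangle_F$ is the homogeneous polynomial kernel of degree two. It is PSD via the explicit feature map $\phi(\mathbf{x})=\mathbf{x}\otimes\mathbf{x}$.

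\textbf{Step 2 (denominator).} I would show that $g(\mathbf{x},\mathbf{w})=1/(\|\mathbf{x}-\mathbf{w}\|^2+\varepsilon)$ is PSD. This is the inverse multiquadric kernel with exponent $1$. The route is the Laplace-transform identity
\[
\frac{1}{t+\varepsilon}=\int_0^\infty e^{-s\varepsilon}e^{-st}\,ds,\qquad t\ge 0 .
\]
Substituting $t=\|\mathbf{x}-\mathbf{w}\|^2$ writes $g$ as a nonnegative mixture of Gaussian kernels $e^{-s\|\mathbf{x}-\mathbf{w}\|^2}$. Each Gaussian kernel is PSD by Bochner's theorem, since its Fourier transform is a positive Gaussian. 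Consequently, for any points $\mathbf{x}_1,\dots,\mathbf{x}_n$ and coefficients $c\in\mathbb R^n$,
\[
\sum_{i,j}c_ic_j\,g(\mathbf{x}_i,\mathbf{x}_j)=\int_0^\infty e^{-s\varepsilon}\sum_{i,j}c_ic_je^{-s\|\mathbf{x}_i-\mathbf{x}_j\|^2}\,ds\ \ge 0 .
\]
Equivalently, one can cite Schoenberg's theorem: $t\mapsto 1/(t+\varepsilon)$ is completely monotone on $[0,\infty)$. In fact $g$ is strictly positive definite on distinct points, because each Gaussian kernel is.

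\textbf{Step 3 (product and Mercer).} By the Schur product theorem, the Gram matrix of $k_{\E}$ is the Hadamard product of two PSD Gram matrices, so it is PSD. The kernel is therefore symmetric, continuous and PSD on the compact $K$, and Mercer's theorem applies. This gives a nonnegative eigen-expansion converging absolutely and uniformly on $K\times K$, together with the associated RKHS.

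The main obstacle is reading ``positive definite'' in the strict sense. Strict positive definiteness is false: every point on the orthogonal complement of another gives a zero off-diagonal entry, and at $\mathbf{x}=0$ we have $k_{\E}(0,\cdot)\equiv 0$. So the Gram matrix is singular whenever $0$ lies among the sample points. I would therefore prove positive \emph{semi}definiteness, which is exactly what Mercer's theorem requires, and state this reading explicitly. If a strict statement is wanted, I would use the Schur product theorem's strict version (one factor strictly PD, the other PSD with positive diagonal). This shows strict positive definiteness on point sets in $K\setminus\{0\}$, since there the diagonal of the numerator Gram matrix, $\|\mathbf{x}_i\|^4$, is positive and $g$ is strictly PD.
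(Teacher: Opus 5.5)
Your proposal is correct and uses the same ingredients as the paper's proof: the Laplace identity $1/a=\int_0^\infty e^{-ta}\,dt$ writing the inverse-quadratic factor as a nonnegative mixture of Gaussian kernels, the feature map $\mathbf{x}\otimes\mathbf{x}$ for $(\mathbf{x}^\top\mathbf{w})^2$, and the Schur product theorem. The only difference is ordering: the paper applies Schur inside the integral to $(\mathbf{x}^\top\mathbf{w})^2e^{-t\|\mathbf{x}-\mathbf{w}\|^2}$ and then integrates, using a compact-set domination bound for the interchange and for continuity, whereas you integrate first and multiply afterwards, which together with your rational-function continuity argument avoids that step and gives positive semidefiniteness on all of $\mathbb{R}^d$.

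Your reading of ``positive definite'' as semidefinite matches the paper's own definition, and your strict-Schur refinement (strict positive definiteness on distinct nonzero points) is a correct addition. Note, though, that the obstruction to strictness is the point $\mathbf{0}$, not the zero off-diagonal entries produced by orthogonal pairs.
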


\begin{theorem}[Minimal Similarity and Statistical Orthogonality]\label{thm:min-similarity}
Let $\mathbf{p}, \mathbf{q} \in \Delta^{n-1}$ be distinct distributions. Then $\E(\mathbf{p}, \mathbf{q}) = 0$ if and only if their supports are disjoint, $\supp(\mathbf{p}) \cap \supp(\mathbf{q}) = \emptyset$. In this case $D_{\mathrm{KL}}(\mathbf{p}\|\mathbf{q}) = \infty$ and the cross-entropy $H(\mathbf{p},\mathbf{q})$ is infinite.
\end{theorem}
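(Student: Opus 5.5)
The plan is to reduce the vanishing of the \E-product to the vanishing of the inner product, and then use nonnegativity of the coordinates on the simplex. Because $\varepsilon>0$, the denominator $\|\mathbf{p}-\mathbf{q}\|^2+\varepsilon$ is strictly positive and finite. So
\[
\E(\mathbf{p},\mathbf{q})=0 \iff (\mathbf{p}^\top\mathbf{q})^2=0 \iff \sum_{i=1}^n p_i q_i = 0 .
\]
The hypothesis that $\mathbf{p}\neq\mathbf{q}$ is not actually needed for this equivalence. It only rules out the trivial case, since $\mathbf{p}=\mathbf{q}$ would give $\mathbf{p}^\top\mathbf{p}=\|\mathbf{p}\|^2>0$ anyway.

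Next I characterize when $\sum_i p_i q_i=0$. Every term $p_iq_i$ is nonnegative because $\mathbf{p},\mathbf{q}\in\Delta^{n-1}$. Hence the sum vanishes if and only if $p_iq_i=0$ for every $i$. That says no index has both $p_i>0$ and $q_i>0$, which is exactly $\supp(\mathbf{p})\cap\supp(\mathbf{q})=\emptyset$. Both directions follow at once: disjoint supports make every term zero, and conversely a common index $i$ in both supports gives a strictly positive term $p_iq_i>0$, so the sum is positive.

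For the information-theoretic consequences I assume disjoint supports. Since $\sum_i p_i=1$, the support of $\mathbf{p}$ is nonempty, so I can pick an $i$ with $p_i>0$. Disjointness then forces $q_i=0$. With the standard conventions $0\log(0/q)=0$ and $p\log(p/0)=+\infty$ for $p>0$, the term of $D_{\mathrm{KL}}(\mathbf{p}\|\mathbf{q})=\sum_i p_i\log(p_i/q_i)$ at this index is $+\infty$. The remaining terms are either $0$ or finite, or also $+\infty$, so none of them can cancel it, and $D_{\mathrm{KL}}=\infty$. The same index gives the term $-p_i\log q_i=+\infty$ in $H(\mathbf{p},\mathbf{q})=-\sum_i p_i\log q_i$. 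Alternatively, the identity $H(\mathbf{p},\mathbf{q})=H(\mathbf{p})+D_{\mathrm{KL}}(\mathbf{p}\|\mathbf{q})$ together with the finiteness $H(\mathbf{p})\le\log n$ gives the same conclusion.

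There is no real obstacle here. The only care needed is to state the extended-real conventions explicitly, so that the infinite values are well defined and no $\infty-\infty$ ambiguity can arise. This works because every term in both sums lies in $[0,\infty]$ (cross-entropy) or in $(-\infty,\infty]$ with only finitely many finite terms (KL).
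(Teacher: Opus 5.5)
Your proposal is correct and follows essentially the same route as the paper. In both, positivity of the denominator reduces $\E(\mathbf{p},\mathbf{q})=0$ to $\mathbf{p}^\top\mathbf{q}=0$, nonnegativity of the terms $p_iq_i$ turns this into disjoint supports, and an index with $p_i>0$ and $q_i=0$ forces both $D_{\mathrm{KL}}(\mathbf{p}\|\mathbf{q})$ and $H(\mathbf{p},\mathbf{q})$ to be infinite; your explicit treatment of the extended-real conventions and of $\supp(\mathbf{p})\neq\emptyset$ is just a more careful write-up of that argument.
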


\begin{theorem}[Maximal (Singular) Similarity]\label{thm:max-similarity}
Define the $\varepsilon$-dependent \E-product
\[
\E_{\varepsilon}(\mathbf{p},\mathbf{q}) := \frac{(\mathbf{p}^\top\mathbf{q})^2}{\|\mathbf{p}-\mathbf{q}\|_2^2+\varepsilon}.
\]
Let $\varepsilon>0$ and $\mathbf{p}, \mathbf{q} \in \Delta^{n-1}$. Then $\E_{\varepsilon}(\mathbf{p}, \mathbf{q})$ is finite for all $\mathbf{p},\mathbf{q}$, and
\[
\E_{\varepsilon}(\mathbf{p}, \mathbf{p})=\frac{\|\mathbf{p}\|_2^4}{\varepsilon}.
\]
In the singular limit $\varepsilon\to 0^+$, the self-similarity $\E_{\varepsilon}(\mathbf{p},\mathbf{p})$ diverges.
\newline
\textbf{Singular joint limit.} If $(\mathbf{q}_k)_{k\ge 1}\subset \Delta^{n-1}$ satisfies $\mathbf{q}_k\neq \mathbf{p}$ and $\|\mathbf{q}_k-\mathbf{p}\|_2\to 0$, and if $\varepsilon_k\to 0^+$, then
\[
\E_{\varepsilon_k}(\mathbf{p},\mathbf{q}_k)\to \infty.
\]
\end{theorem}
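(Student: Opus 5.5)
The proof is a direct computation with elementary bounds on the simplex; no earlier result beyond the definition of $\E_\varepsilon$ is needed.

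\textbf{Finiteness.} For $\mathbf{p},\mathbf{q}\in\Delta^{n-1}$ the denominator satisfies $\|\mathbf{p}-\mathbf{q}\|_2^2+\varepsilon\ge\varepsilon>0$. For the numerator, $0\le \mathbf{p}^\top\mathbf{q}\le \|\mathbf{p}\|_2\|\mathbf{q}\|_2\le 1$, because every simplex vector has $\|\cdot\|_2\le\|\cdot\|_1=1$. Hence
\[
0\le\E_\varepsilon(\mathbf{p},\mathbf{q})\le 1/\varepsilon<\infty .
\]

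\textbf{Self-similarity and the limit in $\varepsilon$.} Setting $\mathbf{q}=\mathbf{p}$ makes the distance term vanish, and $(\mathbf{p}^\top\mathbf{p})^2=\|\mathbf{p}\|_2^4$, which gives $\E_\varepsilon(\mathbf{p},\mathbf{p})=\|\mathbf{p}\|_2^4/\varepsilon$. Divergence as $\varepsilon\to0^+$ needs a positive lower bound on the numerator. Cauchy--Schwarz gives $1=\sum_i p_i\le\sqrt{n}\,\|\mathbf{p}\|_2$, so $\|\mathbf{p}\|_2^4\ge 1/n^2$. Therefore
\[
\E_\varepsilon(\mathbf{p},\mathbf{p})\ge \frac{1}{n^2\varepsilon}\to\infty .
\]

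\textbf{Joint limit.} The numerator converges: $\mathbf{p}^\top\mathbf{q}_k\to\|\mathbf{p}\|_2^2$ by continuity of the inner product, and $\|\mathbf{p}\|_2^2\ge 1/n$. So for all large $k$ we have $(\mathbf{p}^\top\mathbf{q}_k)^2\ge \|\mathbf{p}\|_2^4/4\ge 1/(4n^2)$. The denominator satisfies $d_k:=\|\mathbf{p}-\mathbf{q}_k\|_2^2+\varepsilon_k\to 0^+$, since both terms tend to zero; it stays strictly positive because $\varepsilon_k>0$, and $\mathbf{q}_k\neq\mathbf{p}$ keeps the distance term positive as well. The ratio is therefore eventually at least $1/(4n^2 d_k)\to\infty$.

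\textbf{Main obstacle.} The only real subtlety is that the numerator must stay bounded away from zero, so that the vanishing denominator actually forces divergence rather than producing a $0/0$ form. The bound $\|\mathbf{p}\|_2\ge n^{-1/2}$ on the simplex settles this uniformly.
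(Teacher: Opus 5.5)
Your proof is correct and follows essentially the same route as the paper: finiteness comes from the denominator being at least $\varepsilon$, substituting $\mathbf{q}=\mathbf{p}$ gives $\|\mathbf{p}\|_2^4/\varepsilon$, and in the joint limit the numerator tends to $\|\mathbf{p}\|_2^4>0$ while the strictly positive denominator tends to zero. Your extra quantitative bounds (the uniform upper bound $1/\varepsilon$, and $\|\mathbf{p}\|_2^4\ge 1/n^2$ via Cauchy--Schwarz) just make explicit the positivity of $\|\mathbf{p}\|_2$ that the paper uses implicitly.
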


\begin{corollary}[Distributional Identity and KL]\label{cor:kl-identity}
For distributions $\mathbf{p}, \mathbf{q} \in \Delta^{n-1}$, $D_{\mathrm{KL}}(\mathbf{p}\|\mathbf{q})=0$ if and only if $\mathbf{p}=\mathbf{q}$ (Gibbs' inequality \citep{cover2006elements}). In this case the cross-entropy reduces to entropy: $H(\mathbf{p},\mathbf{q})=H(\mathbf{p})$.
\end{corollary}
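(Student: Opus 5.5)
The plan is to treat the corollary as two separate claims: an equivalence $D_{\mathrm{KL}}(\mathbf{p}\|\mathbf{q})=0 \iff \mathbf{p}=\mathbf{q}$, and a consequence for the cross-entropy. Both follow from Gibbs' inequality once the conventions are fixed. We use $0\log(0/a)=0$ and $a\log(a/0)=+\infty$ for $a>0$, so that $D_{\mathrm{KL}}(\mathbf{p}\|\mathbf{q})=\sum_i p_i\log(p_i/q_i)\in[0,\infty]$. The convention that makes $D_{\mathrm{KL}}=\infty$ under support mismatch is the same one appearing in Theorem~\ref{thm:min-similarity}.

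The direction $\mathbf{p}=\mathbf{q}\Rightarrow D_{\mathrm{KL}}=0$ is immediate, since every term is either $p_i\log 1=0$ or $0\log(0/0)$, which is taken as $0$. For the converse, first suppose $D_{\mathrm{KL}}(\mathbf{p}\|\mathbf{q})<\infty$. Then $\supp(\mathbf{p})\subseteq\supp(\mathbf{q})$, and we may restrict the sum to $S=\supp(\mathbf{p})$. Applying Jensen's inequality to the strictly concave function $\log$ gives
\[
-D_{\mathrm{KL}}(\mathbf{p}\|\mathbf{q})=\sum_{i\in S}p_i\log\frac{q_i}{p_i}\le \log\sum_{i\in S}q_i\le \log 1=0 .
\]
This is Gibbs' inequality.

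The main step is the equality analysis, which is where care is needed. Equality in Jensen forces $q_i/p_i$ to be constant on $S$, say equal to $c$. Equality in the second inequality forces $\sum_{i\in S}q_i=1$, so $q$ puts no mass outside $S$. Combining these, $c\sum_{i\in S}p_i=c=1$, so $q_i=p_i$ on $S$ and $q_i=0=p_i$ off $S$, hence $\mathbf{p}=\mathbf{q}$. Alternatively, one can avoid Jensen by using $\log t\le t-1$, with equality iff $t=1$, applied termwise to $t=q_i/p_i$. This gives the same chain of inequalities together with an equality characterisation that is easy to read off.

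Finally, for the cross-entropy we use the identity $H(\mathbf{p},\mathbf{q})=-\sum_i p_i\log q_i=H(\mathbf{p})+D_{\mathrm{KL}}(\mathbf{p}\|\mathbf{q})$. This holds in $[0,\infty]$ because $H(\mathbf{p})$ is finite. When $\mathbf{p}=\mathbf{q}$ the KL term vanishes, so $H(\mathbf{p},\mathbf{q})=H(\mathbf{p})$. As an optional remark linking back to the $\E$-product, in this case Theorem~\ref{thm:max-similarity} gives the maximal self-similarity $\E_\varepsilon(\mathbf{p},\mathbf{p})=\|\mathbf{p}\|_2^4/\varepsilon$. That complements Theorem~\ref{thm:min-similarity}, but the corollary itself does not require it.
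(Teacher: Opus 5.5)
Your proof is correct and follows the same route as the paper, whose proof just invokes Gibbs' inequality for the equivalence and then substitutes $\mathbf{q}=\mathbf{p}$ into the cross-entropy; the only difference is that you prove Gibbs' inequality yourself (via Jensen or $\log t\le t-1$, handling the equality case and the zero-mass conventions) rather than citing it. Your use of $H(\mathbf{p},\mathbf{q})=H(\mathbf{p})+D_{\mathrm{KL}}(\mathbf{p}\|\mathbf{q})$ for the last claim is valid but more than you need, since setting $\mathbf{q}=\mathbf{p}$ in $-\sum_i p_i\log q_i$ gives $H(\mathbf{p})$ directly.
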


The \E-product creates a potential well around $\mathbf{w}$, where interaction strength diminishes with distance while preserving orientation sensitivity. The explicit gradient structure (Theorem~\ref{thm:gradient}) and stable gradient decay (Proposition~\ref{prop:stable-learning}) ensure that gradients vanish for distant inputs, providing natural localization. Input perturbation robustness (Proposition~\ref{prop:input-robustness}) guarantees bounded response changes on bounded domains (with constant controlled by $\varepsilon$). When applied to probability distributions, it connects geometry to information-theoretic extremes (Theorem~\ref{thm:min-similarity}, Theorem~\ref{thm:max-similarity}, Corollary~\ref{cor:kl-identity}).

\subsection{Neural Matter Network (NMN) Layers}
\label{subsec:nmn_layers}

The \E-product serves as the foundation for Neural Matter Network layers, employing the non-linear, spatially-aware $\mathcal{K}_\E$-kernel as the primary interaction mechanism, replacing conventional linear projections ($\langle \mathbf{w}, \mathbf{x} \rangle$). An NMN layer transforms input $\mathbf{x} \in \mathbb{R}^d$ through multiple units, each defined by weight vector $\mathbf{w}_i \in \mathbb{R}^d$ and bias $b_i \in \mathbb{R}$:
\begin{align*}
h(\mathbf{x}) &= s \cdot \sum_{i=1}^n \mathcal{K}_\E(\mathbf{w}_i, \mathbf{x}, b_i) \\
&= s \cdot \sum_{i=1}^n \frac{(\mathbf{w}_i^\top\mathbf{x} + b_i)^2}{\|\mathbf{w}_i - \mathbf{x}\|^2 + \varepsilon}
\end{align*}
where $s$ is a scaling factor and $n$ denotes the number of units. Each unit responds based on both alignment and proximity to its learned weight vector, enabling universal function approximation (Theorem~\ref{thm:uat-external-bias}) as an intrinsic property of the $\mathcal{K}_\E$-kernel itself. The self-regulation property (Proposition~\ref{prop:self-regulation}) ensures that outputs remain bounded by encoding an intrinsic normalization in the denominator, rather than relying on separate normalization layers. Figure~\ref{fig:attention_blocks} illustrates the architectural simplification.

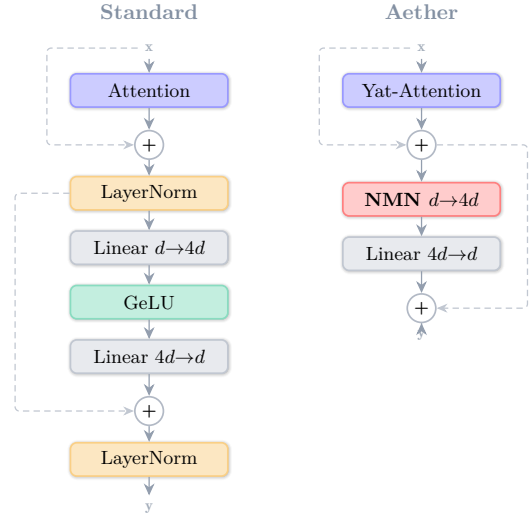
\begin{figure}[htbp]
    \centering
    \resizebox{0.85\linewidth}{!}{
    \begin{tikzpicture}[
        baseblock/.style={
            rectangle, 
            minimum width=2.6cm, 
            minimum height=0.55cm, 
            align=center, 
            rounded corners=3pt, 
            font=\footnotesize,
            blur shadow={shadow blur steps=4, shadow xshift=0.3pt, shadow yshift=-0.3pt}
        },
        attn/.style={baseblock, fill=blue!20, draw=blue!40, thick},
        norm/.style={baseblock, fill=amber!25, draw=amber!50, thick},
        act/.style={baseblock, fill=emerald!25, draw=emerald!50, thick},
        nmn/.style={baseblock, fill=red!20, draw=red!50, thick, font=\footnotesize\bfseries},
        linear/.style={baseblock, fill=slate!15, draw=slate!40, thick},
        mainflow/.style={-{Stealth[length=2mm, width=1.8mm]}, semithick, slate!70},
        skipflow/.style={-{Stealth[length=1.8mm, width=1.5mm]}, semithick, slate!40, densely dashed, rounded corners=2pt},
        plusnode/.style={circle, draw=slate!50, fill=white, inner sep=1.5pt, font=\scriptsize\bfseries, thick},
        blocktitle/.style={font=\bfseries\normalsize, text=slate!80},
        iolabel/.style={font=\scriptsize, text=slate!60}
    ]
    
    \definecolor{amber}{RGB}{245, 158, 11}
    \definecolor{emerald}{RGB}{16, 185, 129}
    \definecolor{slate}{RGB}{100, 116, 139}
    
    \begin{scope}
        \node[blocktitle] at (0, 7.2) {Standard};
        
        \node[iolabel] (in1) at (0, 6.6) {$\mathbf{x}$};
        \node[iolabel] (out1) at (0, -1.0) {$\mathbf{y}$};
        
        \node[attn] (attn1) at (0, 5.9) {Attention};
        \node[plusnode] (add1) at (0, 5.0) {+};
        \node[norm] (ln1) at (0, 4.2) {LayerNorm};
        \node[linear] (fc1) at (0, 3.3) {Linear $d{\to}4d$};
        \node[act] (gelu) at (0, 2.4) {GeLU};
        \node[linear] (fc2) at (0, 1.5) {Linear $4d{\to}d$};
        \node[plusnode] (add2) at (0, 0.6) {+};
        \node[norm] (ln2) at (0, -0.2) {LayerNorm};
        
        \draw[mainflow] (in1) -- (attn1);
        \draw[mainflow] (attn1) -- (add1);
        \draw[mainflow] (add1) -- (ln1);
        \draw[mainflow] (ln1) -- (fc1);
        \draw[mainflow] (fc1) -- (gelu);
        \draw[mainflow] (gelu) -- (fc2);
        \draw[mainflow] (fc2) -- (add2);
        \draw[mainflow] (add2) -- (ln2);
        \draw[mainflow] (ln2) -- (out1);
        
        \draw[skipflow] (in1.west) -- ++(-1.5,0) |- (add1.west);
        \draw[skipflow] (ln1.west) -- ++(-0.9,0) |- (add2.west);
    \end{scope}
    
    \begin{scope}[shift={(4.5,0)}]
        \node[blocktitle] at (0, 7.2) {Aether};
        
        \node[iolabel] (in2) at (0, 6.6) {$\mathbf{x}$};
        \node[iolabel] (out2) at (0, 1.85) {$\mathbf{y}$};
        
        \node[attn] (attn2) at (0, 5.9) {Yat-Attention};
        \node[plusnode] (add3) at (0, 5.0) {+};
        \node[nmn] (nmn) at (0, 4.1) {NMN $d{\to}4d$};
        \node[linear] (fc3) at (0, 3.2) {Linear $4d{\to}d$};
        \node[plusnode] (add4) at (0, 2.3) {+};
        
        \draw[mainflow] (in2) -- (attn2);
        \draw[mainflow] (attn2) -- (add3);
        \draw[mainflow] (add3) -- (nmn);
        \draw[mainflow] (nmn) -- (fc3);
        \draw[mainflow] (fc3) -- (add4);
        \draw[mainflow] (add4) -- (out2);
        
        \draw[skipflow] (in2.west) -- ++(-1.5,0) |- (add3.west);
        \draw[skipflow] (add3.east) -- ++(1.5,0) |- (add4.east);
    \end{scope}
    \end{tikzpicture}
    }
    \caption{Comparison of standard Transformer block (left) and Aether block (right). In Aether-GPT2, \E-multi-head attention replaces scaled dot-product attention, and an NMN layer replaces Linear+GeLU, eliminating activation functions and all LayerNorm operations.}
    \label{fig:attention_blocks}
\end{figure}
\begin{theorem}[Universal approximation with \text{\E}-kernel] \label{thm:uat-external-bias}
Let $\mathcal{X} \subset \mathbb{R}^d$ be a compact set. Define the class of functions $\mathcal{F}$ realizable by the network as the linear span of the activation units:
\[
\mathcal{F} = \text{span} \left\{ \frac{(\mathbf{x} \cdot \mathbf{w} + b)^2}{\|\mathbf{x} - \mathbf{w}\|^2 + \varepsilon} \;\Bigg|\; \mathbf{w} \in \mathbb{R}^d, b \in \mathbb{R} \right\}
\]
where $\varepsilon > 0$ is a fixed constant and $b$ is the inner bias parameter. The set $\mathcal{F}$ is dense in $C(\mathcal{X})$ under the uniform norm.
\end{theorem}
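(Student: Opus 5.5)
The plan is to reduce the statement to the density of translates of a single radial function, and then to settle that density by a Hahn--Banach/Fourier argument. The first step is purely algebraic. For fixed $\mathbf{w}$, each unit is a quadratic polynomial in the bias $b$ divided by a denominator that does not depend on $b$. Writing $t=\mathbf{x}\cdot\mathbf{w}$ and using $(t+1)^2+(t-1)^2-2t^2=2$, we get the identity
\[
\frac{(\mathbf{x}\cdot\mathbf{w}+1)^2+(\mathbf{x}\cdot\mathbf{w}-1)^2-2(\mathbf{x}\cdot\mathbf{w})^2}{2\,(\|\mathbf{x}-\mathbf{w}\|^2+\varepsilon)}
=\frac{1}{\|\mathbf{x}-\mathbf{w}\|^2+\varepsilon}.
\]
Hence $g_{\mathbf{w}}(\mathbf{x}):=\varphi(\mathbf{x}-\mathbf{w})$, with $\varphi(\mathbf{z})=(\|\mathbf{z}\|^2+\varepsilon)^{-1}$, lies exactly in $\mathcal{F}$ for every $\mathbf{w}\in\mathbb{R}^d$; no limiting argument is needed. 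It therefore suffices to show that $\mathrm{span}\{\varphi(\cdot-\mathbf{w}):\mathbf{w}\in\mathbb{R}^d\}$ is dense in $C(\mathcal{X})$.

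For this I would argue by duality. Suppose the span is not dense. By Hahn--Banach and the Riesz representation theorem, there is a nonzero finite signed Borel measure $\mu$ on $\mathcal{X}$ with $\int \varphi(\mathbf{x}-\mathbf{w})\,d\mu(\mathbf{x})=0$ for all $\mathbf{w}\in\mathbb{R}^d$. In other words, the convolution $\mu*\check\varphi=\mu*\varphi$ vanishes identically on $\mathbb{R}^d$ (note that $\varphi$ is even). Now $\mu$ has compact support, so $\hat\mu$ is a continuous (indeed entire) function. Also $\varphi$ is bounded, so it is a tempered distribution. The convolution theorem then gives $\hat\mu\,\hat\varphi=0$ as tempered distributions.

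The remaining task is to identify $\hat\varphi$. It is radial and, up to a positive constant, equals
\[
|\xi|^{1-d/2}K_{d/2-1}(\sqrt{\varepsilon}\,|\xi|),
\]
where $K_\nu$ is the modified Bessel function of the second kind. Away from the origin this function is strictly positive; near the origin it is locally integrable (for $d=1$ it is simply $\pi e^{-\sqrt\varepsilon|\xi|}/\sqrt\varepsilon$). Positivity can also be seen without Bessel functions via the subordination formula
\[
\varphi(\mathbf{z})=\int_0^\infty e^{-s\varepsilon}e^{-s\|\mathbf{z}\|^2}\,ds,
\]
which exhibits $\hat\varphi$ as a positive mixture of Gaussians. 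Hence $\hat\varphi$ is a locally integrable function that is strictly positive on $\mathbb{R}^d\setminus\{0\}$. Testing $\hat\mu\hat\varphi=0$ against smooth functions supported away from $0$ shows that $\hat\mu=0$ on $\mathbb{R}^d\setminus\{0\}$. By continuity $\hat\mu\equiv0$, so $\mu=0$, a contradiction.

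I expect the main obstacle to be this last step: justifying the distributional product $\hat\mu\,\hat\varphi$ rigorously and establishing the strict positivity of $\hat\varphi$ in all dimensions. The Gaussian-mixture representation handles both cleanly. If it causes trouble, a fallback route avoids Fourier transforms entirely. Since
\[
\int \varphi(\mathbf{x}-\mathbf{w})\,d\mu(\mathbf{x})=\int_0^\infty e^{-s\varepsilon}\,(\mu*G_s)(\mathbf{w})\,ds
\]
with Gaussian $G_s$, pairing this identity against $\mu$ itself gives
\[
0=\iint\varphi(\mathbf{x}-\mathbf{y})\,d\mu\,d\mu=\int_0^\infty e^{-s\varepsilon}\int|\hat G_s|\,|\hat\mu|^2\,d\xi\,ds.
\]
Strict positive definiteness of the Gaussian then forces $\mu=0$.
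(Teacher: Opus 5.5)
Your proposal is correct and follows the paper's proof. The first step is the same: the exact second-difference identity in the bias (the paper uses a general step $h>0$, you take $h=1$) puts every inverse-multiquadric translate $(\|\mathbf{x}-\mathbf{w}\|^2+\varepsilon)^{-1}$ in $\mathcal{F}$, which reduces density to the universality of the IMQ kernel. The only difference is that the paper cites this universality (Steinwart; Micchelli et al.; Wendland), whereas you prove it via Hahn--Banach/Riesz and the strict positivity of $\hat\varphi$ on $\mathbb{R}^d\setminus\{0\}$, obtained from the same Laplace/Gaussian-mixture representation the paper uses in its Mercer proof, which makes your argument self-contained.
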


\begin{proposition}[Natural Self-Regulation]\label{prop:self-regulation}
For any fixed $\mathbf{w}$ and unit direction $\mathbf{u}$, the \E-product output remains bounded as $k \to \infty$:
$\lim_{k \to \infty} \E(\mathbf{w}, k\mathbf{u}) = \|\mathbf{w}\|^2 \cos^2\theta$,
where $\theta$ is the angle between $\mathbf{w}$ and $\mathbf{u}$. This bounded behavior is visualized in Figure~\ref{fig:yat_geometry_properties}(b).
\end{proposition}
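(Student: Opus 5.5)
The plan is to compute the limit directly by substituting $\mathbf{x}=k\mathbf{u}$ with $\|\mathbf{u}\|=1$ into the definition of the \E-product, and then to divide numerator and denominator by $k^2$. The numerator becomes
\[
(\mathbf{w}^\top k\mathbf{u})^2 = k^2(\mathbf{w}^\top\mathbf{u})^2 = k^2\|\mathbf{w}\|^2\cos^2\theta .
\]
Here I use $\mathbf{w}^\top\mathbf{u}=\|\mathbf{w}\|\|\mathbf{u}\|\cos\theta$ and $\|\mathbf{u}\|=1$. When $\mathbf{w}=\mathbf{0}$ the angle is undefined, but both sides are $0$, so I would treat that case separately or adopt the convention that the right-hand side vanishes.

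For the denominator I expand the squared norm:
\[
\|\mathbf{w}-k\mathbf{u}\|^2+\varepsilon = k^2 - 2k\,\mathbf{w}^\top\mathbf{u} + \|\mathbf{w}\|^2 + \varepsilon .
\]
Dividing both expressions by $k^2$ gives
\[
\E(\mathbf{w},k\mathbf{u}) = \frac{\|\mathbf{w}\|^2\cos^2\theta}{1 - 2\,\mathbf{w}^\top\mathbf{u}/k + (\|\mathbf{w}\|^2+\varepsilon)/k^2}.
\]
As $k\to\infty$ the denominator tends to $1$, because $\mathbf{w}$, $\mathbf{u}$ and $\varepsilon$ are fixed. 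By continuity of division at a nonzero limit, the quotient converges to $\|\mathbf{w}\|^2\cos^2\theta$.

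To justify the word "bounded", I would note that for $k > 2\|\mathbf{w}\|$ the reduced denominator is at least $1-2\|\mathbf{w}\|/k>0$. This gives an explicit uniform bound on $\E(\mathbf{w},k\mathbf{u})$ for large $k$. On any compact range of $k$ the \E-product is continuous and its denominator is at least $\varepsilon>0$, so it is bounded there as well. Together these show that the output stays bounded along the whole ray, not only in the limit.

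There is no real obstacle. The only points needing care are the degenerate case $\mathbf{w}=\mathbf{0}$, where $\theta$ is undefined, and confirming that the denominator stays away from zero. The latter holds for every $k$ because of $\varepsilon>0$, and for large $k$ because the leading coefficient is $1$.
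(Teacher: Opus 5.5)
Your proposal is correct and follows the paper's proof essentially verbatim: substitute $\mathbf{x}=k\mathbf{u}$, expand $\|\mathbf{w}-k\mathbf{u}\|^2$, divide numerator and denominator by $k^2$, and let the reduced denominator tend to $1$. Your remarks on the degenerate case $\mathbf{w}=\mathbf{0}$ and on boundedness along the whole ray (not just in the limit) are correct refinements that the paper does not include.
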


\begin{proposition}[Gradient Decay for Outliers]\label{prop:stable-learning}
The gradient of the \E-product vanishes for distant inputs: $\lim_{\|\mathbf{x}\| \to \infty} \|\nabla_{\mathbf{x}} \E(\mathbf{w}, \mathbf{x})\| = 0$.
\end{proposition}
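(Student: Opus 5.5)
We compute the gradient of $\E(\mathbf{w},\mathbf{x}) = A/D$ directly, with $A=(\mathbf{w}^\top\mathbf{x})^2$ and $D=\|\mathbf{x}-\mathbf{w}\|^2+\varepsilon$, and then bound each resulting term by a quantity that decays like $1/\|\mathbf{x}\|$. With $\mathbf{w}$ and $\varepsilon>0$ fixed, the quotient rule gives
\[
\nabla_{\mathbf{x}}\E(\mathbf{w},\mathbf{x}) = \frac{2(\mathbf{w}^\top\mathbf{x})\,\mathbf{w}}{D} - \frac{2(\mathbf{w}^\top\mathbf{x})^2(\mathbf{x}-\mathbf{w})}{D^2}.
\]
The case $\mathbf{w}=\mathbf{0}$ is trivial, since the gradient is then identically zero. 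We therefore assume $\mathbf{w}\neq\mathbf{0}$ and let $\|\mathbf{x}\|\to\infty$ without restricting direction, so all bounds must be uniform in the direction of $\mathbf{x}$.

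For the first term, Cauchy--Schwarz gives $|\mathbf{w}^\top\mathbf{x}|\le\|\mathbf{w}\|\|\mathbf{x}\|$. For the denominator we use the reverse triangle inequality: once $\|\mathbf{x}\|\ge 2\|\mathbf{w}\|$, we have $D\ge \|\mathbf{x}-\mathbf{w}\|^2\ge(\|\mathbf{x}\|-\|\mathbf{w}\|)^2\ge \tfrac14\|\mathbf{x}\|^2$. The first term then has norm at most $8\|\mathbf{w}\|^2\|\mathbf{x}\|/\|\mathbf{x}\|^2 = 8\|\mathbf{w}\|^2/\|\mathbf{x}\|$, which tends to $0$.

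For the second term, the same estimates give
\[
\frac{2\|\mathbf{w}\|^2\|\mathbf{x}\|^2\,\|\mathbf{x}-\mathbf{w}\|}{D^2}\le \frac{2\|\mathbf{w}\|^2\|\mathbf{x}\|^2\cdot\tfrac32\|\mathbf{x}\|}{\tfrac1{16}\|\mathbf{x}\|^4}=\frac{48\|\mathbf{w}\|^2}{\|\mathbf{x}\|},
\]
using $\|\mathbf{x}-\mathbf{w}\|\le\tfrac32\|\mathbf{x}\|$ in the same regime, and this also tends to $0$. The triangle inequality then yields $\|\nabla_{\mathbf{x}}\E\|\le 56\|\mathbf{w}\|^2/\|\mathbf{x}\|\to0$, uniformly over directions.

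The only step that needs care is the lower bound on $D$ in terms of $\|\mathbf{x}\|^2$ that holds uniformly in direction. The reverse-triangle estimate with the threshold $\|\mathbf{x}\|\ge2\|\mathbf{w}\|$ handles it. The power counting (numerator of degree $1$ against denominator of degree $2$ in the first term; degree $3$ against degree $4$ in the second) shows that the decay rate $O(\|\mathbf{x}\|^{-1})$ is what one should expect.
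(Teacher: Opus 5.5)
Your proof is correct and follows the paper's route: the quotient-rule gradient from Theorem~\ref{thm:gradient}, Cauchy--Schwarz and the triangle inequality, then power counting to get an $O(1/\|\mathbf{x}\|)$ bound. Your reverse-triangle estimate $D \ge \tfrac14\|\mathbf{x}\|^2$ for $\|\mathbf{x}\|\ge 2\|\mathbf{w}\|$ is exactly the lower bound the argument needs, which the paper states only loosely (and in the wrong direction) as $D = O(\|\mathbf{x}\|^2)$.
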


\begin{theorem}[Gradient Direction]\label{thm:gradient}
For two vectors $\mathbf{e}_i, \mathbf{e}_j$, the gradient is:
$\nabla_{\mathbf{e}_i} \E = \frac{2\langle \mathbf{e}_i, \mathbf{e}_j\rangle}{\varepsilon + \|\mathbf{e}_i - \mathbf{e}_j\|^2} \left( \mathbf{e}_j - \frac{\langle \mathbf{e}_i, \mathbf{e}_j\rangle (\mathbf{e}_i - \mathbf{e}_j)}{\varepsilon + \|\mathbf{e}_i - \mathbf{e}_j\|^2} \right)$.
\end{theorem}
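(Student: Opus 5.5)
This is a direct computation. Treat $\E(\mathbf{e}_i,\mathbf{e}_j)$ as a function of $\mathbf{e}_i$ with $\mathbf{e}_j$ fixed, and write it as a quotient $N/D$. Here
\[
N(\mathbf{e}_i)=\langle \mathbf{e}_i,\mathbf{e}_j\rangle^2,
\qquad
D(\mathbf{e}_i)=\varepsilon+\|\mathbf{e}_i-\mathbf{e}_j\|^2 .
\]
Since $\varepsilon>0$, we have $D\ge\varepsilon>0$ everywhere. Both $N$ and $D$ are polynomials in the coordinates of $\mathbf{e}_i$, so the quotient is smooth on all of $\mathbb{R}^d$ and the quotient rule applies without any case distinction.

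First I compute the two elementary gradients. The chain rule gives
\[
\nabla_{\mathbf{e}_i}N = 2\langle \mathbf{e}_i,\mathbf{e}_j\rangle\,\mathbf{e}_j .
\]
Expanding $\|\mathbf{e}_i-\mathbf{e}_j\|^2=\|\mathbf{e}_i\|^2-2\langle\mathbf{e}_i,\mathbf{e}_j\rangle+\|\mathbf{e}_j\|^2$ and differentiating gives
\[
\nabla_{\mathbf{e}_i}D = 2(\mathbf{e}_i-\mathbf{e}_j).
\]

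Next I apply the quotient rule:
\[
\nabla_{\mathbf{e}_i}\E=\frac{\nabla N}{D}-\frac{N\,\nabla D}{D^2}
=\frac{2\langle \mathbf{e}_i,\mathbf{e}_j\rangle\,\mathbf{e}_j}{D}-\frac{2\langle \mathbf{e}_i,\mathbf{e}_j\rangle^2(\mathbf{e}_i-\mathbf{e}_j)}{D^2}.
\]
Factoring out the common scalar $2\langle \mathbf{e}_i,\mathbf{e}_j\rangle/D$ leaves
\[
\mathbf{e}_j-\frac{\langle \mathbf{e}_i,\mathbf{e}_j\rangle(\mathbf{e}_i-\mathbf{e}_j)}{D}.
\]
Substituting $D=\varepsilon+\|\mathbf{e}_i-\mathbf{e}_j\|^2$ back in gives exactly the displayed formula. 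When $\langle\mathbf{e}_i,\mathbf{e}_j\rangle=0$ both sides vanish, so the factorization stays valid in that case too.

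There is no real obstacle. The only points needing care are the sign and factor in $\nabla D$ and checking that the second term, after factoring out $2\langle\mathbf{e}_i,\mathbf{e}_j\rangle/D$, carries exactly one remaining factor of $\langle\mathbf{e}_i,\mathbf{e}_j\rangle$ and one of $1/D$.

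The resulting form also has a geometric reading, which supports the title ``Gradient Direction''. The gradient is a positive multiple of a pull toward $\mathbf{e}_j$ (the alignment term) minus a multiple of the displacement $\mathbf{e}_i-\mathbf{e}_j$ (the proximity term). I would record this as a remark after the computation.
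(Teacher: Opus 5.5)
Your proof is correct and is the same quotient-rule computation the paper gives: $\nabla_{\mathbf{e}_i}\langle\mathbf{e}_i,\mathbf{e}_j\rangle=\mathbf{e}_j$ and $\nabla_{\mathbf{e}_i}D=2(\mathbf{e}_i-\mathbf{e}_j)$, followed by factoring out $2\langle\mathbf{e}_i,\mathbf{e}_j\rangle/D$. One caveat about your optional geometric remark: that prefactor has the sign of $\langle\mathbf{e}_i,\mathbf{e}_j\rangle$, so it is not always positive, and the $\mathbf{e}_j$ term points along the direction $\mathbf{e}_j$ rather than from $\mathbf{e}_i$ toward $\mathbf{e}_j$.
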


\begin{theorem}[RKHS Existence]\label{thm:rkhs-embedding}
For every compact set $K\subset\mathbb{R}^d$, the kernel $k_{\E}$ is positive definite on $K$ (Theorem~\ref{thm:mercer}). Hence, by the Moore--Aronszajn theorem, there exists a unique RKHS $\mathcal{H}_K$ and feature map $\phi_K:K\to \mathcal{H}_K$ such that
$k_{\E}(\mathbf{x},\mathbf{y})=\langle \phi_K(\mathbf{x}),\phi_K(\mathbf{y})\rangle_{\mathcal{H}_K}$ for all $\mathbf{x},\mathbf{y}\in K$.
\end{theorem}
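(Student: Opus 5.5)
The proof is essentially a direct application of the Moore--Aronszajn theorem, so the plan is to check its hypotheses on $K\times K$ and then identify the feature map explicitly. First, I restrict $k_{\E}$ to $K\times K$. By Theorem~\ref{thm:mercer}, this restriction is symmetric. It is also positive definite in the kernel sense: for every finite family $\mathbf{x}_1,\dots,\mathbf{x}_m\in K$ and every $c\in\mathbb{R}^m$,
\[
\sum_{i,j} c_i c_j\, k_{\E}(\mathbf{x}_i,\mathbf{x}_j)\ge 0 .
\]
These two properties are exactly the hypotheses of Moore--Aronszajn, so a unique Hilbert space $\mathcal{H}_K$ of functions on $K$ exists in which $k_{\E}(\mathbf{x},\cdot)\in\mathcal{H}_K$ and $f(\mathbf{x})=\langle f,k_{\E}(\mathbf{x},\cdot)\rangle_{\mathcal{H}_K}$ for all $f\in\mathcal{H}_K$ and $\mathbf{x}\in K$.

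Second, I construct the feature map explicitly. Set $\phi_K(\mathbf{x}):=k_{\E}(\mathbf{x},\cdot)$. For $\mathbf{x},\mathbf{y}\in K$, applying the reproducing property to $f=\phi_K(\mathbf{y})$ gives
\[
\langle \phi_K(\mathbf{x}),\phi_K(\mathbf{y})\rangle_{\mathcal{H}_K}=\phi_K(\mathbf{y})(\mathbf{x})=k_{\E}(\mathbf{y},\mathbf{x})=k_{\E}(\mathbf{x},\mathbf{y}),
\]
where the last equality uses symmetry. For uniqueness, I will recall the structure of the Moore--Aronszajn construction. $\mathcal{H}_K$ is the completion of $\operatorname{span}\{k_{\E}(\mathbf{x},\cdot):\mathbf{x}\in K\}$ under the inner product fixed by $\langle k_{\E}(\mathbf{x},\cdot),k_{\E}(\mathbf{y},\cdot)\rangle=k_{\E}(\mathbf{x},\mathbf{y})$. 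Any RKHS with reproducing kernel $k_{\E}$ must contain this span with the same inner product, and the span must be dense in it (a function orthogonal to all $k_{\E}(\mathbf{x},\cdot)$ vanishes pointwise). So any such space coincides with $\mathcal{H}_K$. Continuity of $k_{\E}$ additionally makes every element of $\mathcal{H}_K$ continuous on $K$, though this is not needed for the statement.

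The only substantive input is the positive semidefiniteness asserted in Theorem~\ref{thm:mercer}, so that is where I expect any difficulty. To make the argument self-contained, I would justify it as follows.

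1. The numerator $(\mathbf{x}^\top\mathbf{w})^2$ is the Schur square of the linear kernel, hence PSD.
2. For the denominator, write
\[
\frac{1}{\|\mathbf{x}-\mathbf{w}\|^2+\varepsilon}=\int_0^\infty e^{-s\varepsilon}\,e^{-s\|\mathbf{x}-\mathbf{w}\|^2}\,ds .
\]
This is a nonnegative mixture of Gaussian kernels, so it is PSD by Schoenberg's theorem. Here $\varepsilon>0$ guarantees both convergence of the integral and finiteness on the diagonal.
3. By the Schur product theorem, the product of these two kernels is PSD on all of $\mathbb{R}^d$, and in particular on $K$.

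With this in hand, the remaining steps are formal.
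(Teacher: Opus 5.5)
Your proposal is correct and follows the paper's route: the paper's proof is a direct application of Moore--Aronszajn to the positive semidefiniteness supplied by Theorem~\ref{thm:mercer}. Your self-contained PSD justification (squared linear kernel times a Laplace mixture of Gaussians, combined via the Schur product) mirrors the paper's Mercer proof, differing only in that you integrate before taking the product; the explicit canonical feature map and the density-based uniqueness argument are standard details the paper leaves implicit.
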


\begin{remark}[Generalization and Implicit Class Separation]
The RKHS embedding enables explicit generalization bounds: for kernel ridge regression with $n$ samples, the expected error scales as $O(\|f^*\|_{\mathcal{H}_\E}^2 / \sqrt{n})$ (Theorem~\ref{thm:generalization}). Furthermore, the Neural Tangent Kernel of \E-networks inherits orthogonality sensitivity: $K^{\mathrm{NTK}}_\E(\mathbf{x}, \mathbf{x}') \to 0$ as $\mathbf{x} \perp \mathbf{x}'$ (Theorem~\ref{thm:orthogonal-ntk}), providing implicit class separation without explicit contrastive objectives. See Appendix~\ref{sec:generalization_bounds} and~\ref{sec:orthogonality_ntk} for detailed analysis.
\end{remark}

\subsection{\texorpdfstring{\E}{Yat}-Multi-Head Attention}
\label{subsec:yat_attention}

In Aether-GPT2 we also replace scaled dot-product attention~\citep{vaswani2017attention} with a \E-based multi-head attention mechanism. For a sequence of length $L$ with per-head query, key, and value matrices $\mathbf{Q}, \mathbf{K}, \mathbf{V} \in \mathbb{R}^{L \times d_h}$, we define the \E-similarity between a query $\mathbf{q}_i$ and key $\mathbf{k}_j$ as
\[
    K_\E(\mathbf{q}_i, \mathbf{k}_j)
    =
    \frac{(\mathbf{q}_i^\top \mathbf{k}_j)^2}{\|\mathbf{q}_i - \mathbf{k}_j\|_2^2 + \varepsilon}.
\]
This induces an attention score matrix $\mathbf{S} \in \mathbb{R}^{L \times L}$ with entries
\[
    \mathbf{S}_{ij} = K_\E(\mathbf{q}_i, \mathbf{k}_j),
\]
and the single-head \E-attention is given by
\[
    \mathrm{YATAttn}(\mathbf{Q}, \mathbf{K}, \mathbf{V})
    =
    \mathrm{softmax}_j(\mathbf{S}_{ij})\, \mathbf{V},
\]
where the softmax is taken row-wise over keys $j$ for each query position $i$. In the multi-head setting, we use learned projections $\mathbf{W}_Q^{(h)}, \mathbf{W}_K^{(h)}, \mathbf{W}_V^{(h)}$ per head $h$,
\[
    \mathbf{Q}^{(h)} = \mathbf{X} \mathbf{W}_Q^{(h)},\quad
    \mathbf{K}^{(h)} = \mathbf{X} \mathbf{W}_K^{(h)},\quad
    \mathbf{V}^{(h)} = \mathbf{X} \mathbf{W}_V^{(h)},
\]
apply the above \E-attention independently to each head, and concatenate the outputs:
\begin{align*}
    \mathrm{\E\text{-}MHA}(\mathbf{X})
    &= \mathrm{Concat}_h\bigl( \\
    &\quad \mathrm{YATAttn}(\mathbf{Q}^{(h)}, \mathbf{K}^{(h)}, \mathbf{V}^{(h)})\bigr)\, \mathbf{W}^O.
\end{align*}
This preserves the expressive power of multi-head attention while endowing the similarity measure with the same potential-well geometry and self-regularization properties as the NMN layers.

\begin{proposition}[Lipschitz Continuity]\label{prop:lipschitz}
Fix $\varepsilon>0$ and a weight vector $\mathbf{w}$ with $\|\mathbf{w}\|_2 \le 1$. Then the map $\mathbf{x}\mapsto \E(\mathbf{w},\mathbf{x})$ is Lipschitz continuous on the unit ball $\{\mathbf{x}\in\mathbb{R}^d:\|\mathbf{x}\|_2\le 1\}$ with Lipschitz constant
$L = 2/\varepsilon + 4/\varepsilon^2$.
\end{proposition}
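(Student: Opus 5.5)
Since the unit ball $B=\{\mathbf{x}:\|\mathbf{x}\|_2\le 1\}$ is convex and $\mathbf{x}\mapsto\E(\mathbf{w},\mathbf{x})$ is smooth there (the denominator is at least $\varepsilon>0$), I will bound $\sup_{\mathbf{x}\in B}\|\nabla_{\mathbf{x}}\E(\mathbf{w},\mathbf{x})\|_2$ and apply the mean value inequality along segments. Write $s=\mathbf{w}^\top\mathbf{x}$ and $D=\|\mathbf{w}-\mathbf{x}\|^2+\varepsilon$. The gradient formula of Theorem~\ref{thm:gradient}, or a direct quotient-rule computation, gives
\[
\nabla_{\mathbf{x}}\E(\mathbf{w},\mathbf{x})=\frac{2s\,\mathbf{w}}{D}-\frac{2s^2(\mathbf{x}-\mathbf{w})}{D^2}.
\]
I will treat the two terms separately with the triangle inequality.

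For the first term, Cauchy--Schwarz with $\|\mathbf{w}\|\le1$ and $\|\mathbf{x}\|\le1$ gives $|s|\le 1$. Since $D\ge\varepsilon$, we get $\|2s\mathbf{w}/D\|\le 2/\varepsilon$. For the second term, $s^2\le 1$ and $D^2\ge\varepsilon^2$. A crude bound would use $\|\mathbf{x}-\mathbf{w}\|\le 2$, which gives $4/\varepsilon^2$; that is exactly the stated constant. So the estimate is simply
\[
\|\nabla_{\mathbf{x}}\E\|\le \frac{2}{\varepsilon}+\frac{4}{\varepsilon^2}.
\]
Finally, for $\mathbf{x},\mathbf{y}\in B$ the segment $\mathbf{x}+t(\mathbf{y}-\mathbf{x})$ stays in $B$ by convexity. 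The fundamental theorem of calculus then gives $|\E(\mathbf{w},\mathbf{x})-\E(\mathbf{w},\mathbf{y})|\le L\|\mathbf{x}-\mathbf{y}\|$ with $L=2/\varepsilon+4/\varepsilon^2$.

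There is no real obstacle here. The only point needing care is getting the gradient right, in particular the sign and the factor $2$ from differentiating $\|\mathbf{w}-\mathbf{x}\|^2$ with respect to $\mathbf{x}$. One could sharpen the second term by using $\|\mathbf{x}-\mathbf{w}\|/D^2\le \|\mathbf{x}-\mathbf{w}\|/(\varepsilon(\|\mathbf{x}-\mathbf{w}\|^2+\varepsilon))$, but the statement only asks for the crude constant, so I will not pursue this.
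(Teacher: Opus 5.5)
Your proposal is correct and follows essentially the same route as the paper: both bound the gradient from Theorem~\ref{thm:gradient} term by term using $|s|\le 1$, $\|\mathbf{x}-\mathbf{w}\|_2\le 2$ and $D\ge\varepsilon$, which gives $2/\varepsilon+4/\varepsilon^2$. Your explicit mean-value step along segments in the convex unit ball is a useful addition, since the paper's proof stops at the gradient bound and only uses convexity later, in Proposition~\ref{prop:input-robustness}.
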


\begin{lemma}[Analyticity]\label{lem:analyticity}
For $\varepsilon > 0$, the map $\mathbf{x} \mapsto \E(\mathbf{w}, \mathbf{x})$ is real-analytic on $\mathbb{R}^d$ (infinitely differentiable).
\end{lemma}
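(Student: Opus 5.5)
We prove Lemma~\ref{lem:analyticity} by writing $\E(\mathbf{w},\cdot)$ as a quotient of two real polynomials in the coordinates of $\mathbf{x}$ whose denominator never vanishes, and then using the standard closure properties of real-analytic functions.

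\textbf{Step 1: numerator and denominator.} Fix $\mathbf{w}\in\mathbb{R}^d$ and set $N(\mathbf{x})=(\mathbf{w}^\top\mathbf{x})^2$ and $D(\mathbf{x})=\|\mathbf{w}-\mathbf{x}\|^2+\varepsilon=\sum_{i=1}^d (w_i-x_i)^2+\varepsilon$. Both are polynomials in $x_1,\dots,x_d$ of degree at most two. Polynomials are real-analytic on all of $\mathbb{R}^d$, since each equals its own (finite) Taylor expansion about any point.

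\textbf{Step 2: the denominator is bounded away from zero.} For every $\mathbf{x}\in\mathbb{R}^d$ we have $D(\mathbf{x})\ge\varepsilon>0$. This is the only point where the hypothesis $\varepsilon>0$ enters. Without it, $D$ would vanish at $\mathbf{x}=\mathbf{w}$ and analyticity would fail there.

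\textbf{Step 3: the reciprocal is analytic.} The map $t\mapsto 1/t$ is real-analytic on $(0,\infty)$, with the explicit series
\[
\frac{1}{t}=\sum_{k\ge0}\frac{(-1)^k(t-t_0)^k}{t_0^{k+1}}\quad\text{for } |t-t_0|<t_0 .
\]
A composition of real-analytic maps is real-analytic, so $1/D$ is real-analytic on $\mathbb{R}^d$. One can also see this directly: near $\mathbf{x}_0$, substitute the polynomial $D(\mathbf{x})-D(\mathbf{x}_0)$, which is small for $\mathbf{x}$ close to $\mathbf{x}_0$, into the geometric series above.

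\textbf{Step 4: conclude.} Products of real-analytic functions are real-analytic, so $\E(\mathbf{w},\cdot)=N\cdot(1/D)$ is real-analytic on $\mathbb{R}^d$. Real-analytic functions are $C^\infty$, which gives the parenthetical claim of infinite differentiability.

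The argument is routine. The only point needing care is Step 3, namely the composition and product closure of real-analytic functions in several variables. I will cite it as standard (e.g.\ Krantz--Parks) rather than reprove it, with the geometric-series argument available as a self-contained fallback. The same reasoning shows that $\E$ is jointly real-analytic in $(\mathbf{w},\mathbf{x})\in\mathbb{R}^{2d}$, because $D\ge\varepsilon$ holds there as well.
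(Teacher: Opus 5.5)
Your proof is correct and takes the same approach as the paper. The paper observes that $\E(\mathbf{w},\cdot)$ is a ratio of polynomials whose denominator satisfies $\|\mathbf{w}-\mathbf{x}\|^2+\varepsilon\ge\varepsilon>0$ and cites analyticity of such ratios; you make the closure steps (analyticity of $t\mapsto 1/t$ on $(0,\infty)$, composition, and products) explicit, which the paper leaves implicit.
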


\begin{proposition}[Input Perturbation Robustness]\label{prop:input-robustness}
Fix $\varepsilon>0$ and $\|\mathbf{w}\|_2\le 1$. For any $\mathbf{x},\mathbf{x}'$ in the unit ball with $\|\mathbf{x}'-\mathbf{x}\|_2\le \delta$,
\[
|\E(\mathbf{w}, \mathbf{x}') - \E(\mathbf{w}, \mathbf{x})| \leq \left(\frac{2}{\varepsilon} + \frac{4}{\varepsilon^2}\right)\delta .
\]
\end{proposition}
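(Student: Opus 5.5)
The plan is to derive Proposition~\ref{prop:input-robustness} directly from Proposition~\ref{prop:lipschitz}. Both statements use the same hypotheses: $\varepsilon>0$, $\|\mathbf{w}\|_2\le 1$, and inputs in the unit ball $B=\{\mathbf{x}:\|\mathbf{x}\|_2\le 1\}$. Proposition~\ref{prop:lipschitz} gives $|\E(\mathbf{w},\mathbf{x}')-\E(\mathbf{w},\mathbf{x})|\le L\|\mathbf{x}'-\mathbf{x}\|_2$ for all $\mathbf{x},\mathbf{x}'\in B$, with $L=2/\varepsilon+4/\varepsilon^2$. Combining this with $\|\mathbf{x}'-\mathbf{x}\|_2\le\delta$ yields the claimed bound immediately.

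For a self-contained argument, I would also sketch why the Lipschitz constant holds. Since $B$ is convex, the segment $\mathbf{x}_t=\mathbf{x}+t(\mathbf{x}'-\mathbf{x})$ stays in $B$, and the map is smooth by Lemma~\ref{lem:analyticity}. The mean value inequality then gives
\[
|\E(\mathbf{w},\mathbf{x}')-\E(\mathbf{w},\mathbf{x})|\le \sup_{\mathbf{z}\in B}\|\nabla_{\mathbf{z}}\E(\mathbf{w},\mathbf{z})\|_2\,\delta .
\]
It therefore suffices to bound the gradient on $B$. Write $N=(\mathbf{w}^\top\mathbf{z})^2$ and $D=\|\mathbf{w}-\mathbf{z}\|^2+\varepsilon$. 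Then
\[
\nabla_{\mathbf{z}}\E=\frac{2(\mathbf{w}^\top\mathbf{z})\mathbf{w}}{D}+\frac{2N(\mathbf{w}-\mathbf{z})}{D^2},
\]
which matches Theorem~\ref{thm:gradient} with the roles of the two arguments exchanged.

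On $B$ we have $|\mathbf{w}^\top\mathbf{z}|\le 1$, $\|\mathbf{w}\|\le1$, $N\le1$, $\|\mathbf{w}-\mathbf{z}\|\le 2$, and $D\ge\varepsilon$. The first term is therefore at most $2/\varepsilon$ and the second at most $4/\varepsilon^2$, which gives exactly $L$.

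The main obstacle is only making sure the constants match the stated $L$. The crude bounds above suffice: using $D\ge\varepsilon$ in both denominators gives exactly $2/\varepsilon+4/\varepsilon^2$, so no sharper analysis is needed. Convexity of the ball is the one structural point to state explicitly, because it is what justifies the mean value step.
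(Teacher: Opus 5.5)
Your proposal is correct and follows essentially the same route as the paper. The paper's proof also applies the mean value theorem along the segment from $\mathbf{x}$ to $\mathbf{x}'$, which stays in the unit ball by convexity, together with the gradient bound $2/\varepsilon+4/\varepsilon^2$ from Proposition~\ref{prop:lipschitz}; your recomputation of $\nabla_{\mathbf{z}}\E$ and of the two term-wise bounds matches how the paper derives that constant.
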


\begin{corollary}[Dimensional Self-Normalization]\label{cor:dimensional-scaling}
Let $\mathbf{x},\mathbf{w}\in\mathbb{R}^d$ have i.i.d.\ zero-mean coordinates with $\mathrm{Var}(x_i)=\mathrm{Var}(w_i)=\sigma^2$ (constant in $d$), and assume in addition that:
\begin{itemize}
    \item $\mathbf{x}$ and $\mathbf{w}$ are independent, and
    \item the coordinates are sub-Gaussian with parameter independent of $d$ (e.g., Gaussian initialization), hence have finite fourth moments.
\end{itemize}
Fix $\varepsilon>0$. Then as $d\to\infty$,
\[
\mathbb{E}[\E(\mathbf{w}, \mathbf{x})] = \mathcal{O}(1).
\]
\end{corollary}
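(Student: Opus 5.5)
We need to show $\mathbb{E}[A/(r+\varepsilon)] = \mathcal{O}(1)$, where $A=(\mathbf{w}^\top\mathbf{x})^2$ and $r=\|\mathbf{w}-\mathbf{x}\|^2$. The heuristic is that $\mathbb{E}[A]=d\sigma^4$ and $\mathbb{E}[r]=2d\sigma^2$, so the ratio should concentrate near $\sigma^2/2$. The ratio of expectations is not the expectation of the ratio, however, so the plan is to split on a high-probability event where the denominator is of order $d$, and control the complement using the crude bound $A/(r+\varepsilon)\le A/\varepsilon$ together with a small probability.

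\textbf{Step 1 (moments of the numerator).} By independence and zero means, $\mathbb{E}[A]=\sum_i\mathbb{E}[w_i^2]\mathbb{E}[x_i^2]=d\sigma^4$. By finiteness of fourth moments, $\mathbb{E}[A^2]=\mathbb{E}[(\sum_i w_ix_i)^4]=\mathcal{O}(d^2)$. Only the terms with paired indices survive, giving $d\,\mathbb{E}[w^4]\mathbb{E}[x^4]+3d(d-1)\sigma^8$.

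\textbf{Step 2 (concentration of the denominator).} Write $r=\sum_i (w_i-x_i)^2$. The summands are i.i.d. with mean $2\sigma^2$, and since $w_i-x_i$ is sub-Gaussian with a $d$-independent parameter, each summand is sub-exponential. Bernstein's inequality then gives
\[
\Pr\bigl(r < d\sigma^2\bigr)\le e^{-cd}
\]
for some $c>0$ independent of $d$. Define the event $E=\{r\ge d\sigma^2\}$.

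\textbf{Step 3 (split).} On $E$, the ratio satisfies $A/(r+\varepsilon)\le A/(d\sigma^2)$, so that contribution is at most $\mathbb{E}[A]/(d\sigma^2)=\sigma^2$. On $E^c$, use $A/(r+\varepsilon)\le A/\varepsilon$ and Cauchy–Schwarz:
\[
\mathbb{E}[A\mathbf 1_{E^c}]/\varepsilon\le \sqrt{\mathbb{E}[A^2]}\,e^{-cd/2}/\varepsilon=\mathcal{O}(d\,e^{-cd/2})\to 0.
\]
Summing the two contributions gives $\mathbb{E}[\E(\mathbf{w},\mathbf{x})]\le \sigma^2+o(1)=\mathcal{O}(1)$. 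The bound is nonnegative, so this is also a two-sided control. A matching lower bound of order $\sigma^2/2$ could be derived similarly by bounding $r\le 3d\sigma^2$ with high probability, if one wants $\Theta(1)$, but this is not needed.

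The main obstacle is Step 2, the lower-tail concentration of $r$ with a $d$-independent rate. This is where the sub-Gaussian assumption is essential. I would first try the one-sided lower-tail bound for sums of nonnegative variables, $\Pr(\sum Y_i\le \mathbb{E}\sum Y_i - t)\le \exp(-t^2/(2\sum\mathbb{E}Y_i^2))$, which only needs finite second moments of $Y_i=(w_i-x_i)^2$, i.e. fourth moments of the coordinates. With $t=d\sigma^2$, this already yields $e^{-cd}$ and sidesteps the sub-exponential machinery. Any polynomial decay faster than $d^{-1}$ would suffice for Step 3.
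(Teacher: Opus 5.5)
Your argument is correct and uses the same ingredients as the paper's proof: $\mathbb{E}[A^2]=\mathcal{O}(d^2)$, a Bernstein lower-tail bound $\Pr(r<d\sigma^2)\le e^{-cd}$, a split on the event $\{r\ge d\sigma^2\}$, and Cauchy--Schwarz. The order of the steps differs.

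\textbf{How the two arguments are arranged.} The paper applies Cauchy--Schwarz globally, $\mathbb{E}[A/(r+\varepsilon)]\le\sqrt{\mathbb{E}[A^2]}\,\sqrt{\mathbb{E}[(r+\varepsilon)^{-2}]}$, and uses the split only to show $\mathbb{E}[(r+\varepsilon)^{-2}]=\mathcal{O}(d^{-2})$. You split first, so you need only $\mathbb{E}[A]$ on the good event, and you reserve Cauchy--Schwarz for the exponentially unlikely bad event.

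\textbf{What your ordering buys.} It gives the explicit asymptotic bound $\sigma^2$ instead of the paper's $\sqrt{3}\,\sigma^2$; the actual limit is $\sigma^2/2$.

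\textbf{Your fallback tail inequality.} The lower-tail bound for nonnegative summands is valid: it follows from $e^{-\lambda y}\le 1-\lambda y+\lambda^2y^2/2$ for $y,\lambda\ge 0$. It gives $c=\sigma^4/\bigl(2\,\mathbb{E}[(w_i-x_i)^4]\bigr)$. So the sub-Gaussian hypothesis can be weakened to uniformly bounded fourth moments, and the same substitution works in the paper's argument.

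\textbf{One slip in your closing remark.} Your Cauchy--Schwarz bound on the bad event is $\Theta(d)\sqrt{\Pr(E^c)}/\varepsilon$, so it stays bounded only if $\Pr(E^c)=\mathcal{O}(d^{-2})$. A tail decaying like $d^{-3/2}$ would leave a bound of order $d^{1/4}$. Decay \emph{faster than $d^{-1}$} is therefore not enough.
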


\begin{proposition}[Extremal Similarity on the Simplex]\label{prop:simplex-extremal}
For $\mathbf{p}, \mathbf{q} \in \Delta^{n-1}$: $\E(\mathbf{p}, \mathbf{q}) = 0$ iff $\supp(\mathbf{p}) \cap \supp(\mathbf{q}) = \emptyset$, which implies $\mathrm{KL}(\mathbf{p} \| \mathbf{q}) = \infty$.
\end{proposition}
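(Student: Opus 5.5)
The plan is to reduce the statement to the sign structure of the numerator and then read off the KL consequence directly from the definition. For $\mathbf{p},\mathbf{q}\in\Delta^{n-1}$ and $\varepsilon>0$ the denominator $\|\mathbf{p}-\mathbf{q}\|_2^2+\varepsilon$ is strictly positive and finite, so $\E(\mathbf{p},\mathbf{q})=0$ holds if and only if $(\mathbf{p}^\top\mathbf{q})^2=0$, i.e.\ if and only if $\mathbf{p}^\top\mathbf{q}=\sum_{i=1}^n p_i q_i=0$. (This is essentially the content of Theorem~\ref{thm:min-similarity}, which one could simply invoke; I would nonetheless write out the short argument for self-containment.)

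The key step is the equivalence $\sum_i p_iq_i=0 \iff \supp(\mathbf{p})\cap\supp(\mathbf{q})=\emptyset$. Since all coordinates of points in the simplex are nonnegative, every summand satisfies $p_iq_i\ge 0$. Hence the sum vanishes exactly when each term vanishes, that is, when for every $i$ at least one of $p_i,q_i$ is zero. This is precisely the statement that no index lies in both supports. Both directions follow from this observation; no case split beyond it is needed.

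For the KL implication, I would recall $\mathrm{KL}(\mathbf{p}\|\mathbf{q})=\sum_{i\in\supp(\mathbf{p})} p_i\log(p_i/q_i)$, with the convention that a term with $p_i>0$ and $q_i=0$ equals $+\infty$. Because $\mathbf{p}$ is a probability vector, $\supp(\mathbf{p})\neq\emptyset$, so we may pick some $i$ with $p_i>0$. Disjointness of the supports forces $q_i=0$, so that term is $+\infty$. All remaining terms are either finite or $+\infty$; none is $-\infty$, since $p_j>0$ and $q_j\le 1$ keep each term bounded below. The total is therefore $+\infty$.

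The only real obstacle is bookkeeping around conventions. One point is the $0\log 0$ and $p\log(p/0)$ conventions, which I would state explicitly. The other is making sure the extended-real sum is well defined, which is where the lower bound on the terms is used. Beyond that, the argument is elementary.
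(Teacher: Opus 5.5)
Your proposal is correct and matches the paper's proof: the paper also reduces the vanishing of the kernel to $\sum_i p_i q_i = 0$, concludes that every $p_i q_i$ must be zero (disjoint supports), and then picks an index with $p_i>0$, $q_i=0$ to make $\mathrm{KL}(\mathbf{p}\|\mathbf{q})$ infinite. The paper leaves several details implicit that you make explicit: the strictly positive denominator, the nonnegativity of the summands, the nonemptiness of $\mathrm{supp}(\mathbf{p})$, and the well-definedness of the extended-real sum.
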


\begin{remark}[Optimal $\varepsilon$ Scaling]\label{rem:optimal-epsilon}
For noisy inputs with $\mathbf{n} \sim \mathcal{N}(0, \sigma^2 \mathbf{I})$, the stability constant should scale as $\varepsilon^* \propto d\sigma^2$ to maximize gradient signal-to-noise ratio.
\end{remark}

\subsection{Architectural Implementation}
\label{subsec:architectural_implementations}

Following the representer theorem \citep{scholkopf2001generalized}, optimal solutions in kernel methods lie in the span of kernel evaluations at training points. Since the \E-product is a Mercer kernel, composing multiple \E-layers without intervening linear projections would create a deep kernel that loses this representational guarantee. Our architecture therefore pairs each \E-kernel layer with a subsequent linear projection, preserving the kernel's theoretical properties while enabling depth. We also eliminate normalization layers entirely: the \E-product's self-regulation property (Proposition~\ref{prop:self-regulation}) provides intrinsic normalization, making explicit batch or layer normalization redundant and potentially harmful to gradient flow. The Lipschitz regularity (Proposition~\ref{prop:lipschitz}) and analyticity (Lemma~\ref{lem:analyticity}) ensure stable training dynamics and infinite differentiability. All NMN-based layers use the adaptive scaling factor $s = \left(\frac{n}{\log(1+n)}\right)^\alpha$, where $n$ is the number of units and $\alpha$ is a learnable parameter initialized at 1.

\textbf{Computational Efficiency:} The \E-product layer maintains $\Theta(Bnd)$ computational complexity identical to standard linear layers while providing 15-25\% memory reduction through elimination of activation storage. Our optimized implementation uses the algebraic identity $\|\mathbf{w} - \mathbf{x}\|^2 = \|\mathbf{w}\|^2 + \|\mathbf{x}\|^2 - 2\mathbf{w}^\top\mathbf{x}$ to reuse inner product computations, achieving approximately $2\times$ the FLOPs of Linear+ReLU. The approach offers natural numerical stability and becomes increasingly efficient at larger layer sizes, making it particularly suitable for large-scale applications.

\section{Results and Discussion}
\label{sec:results}

We evaluate the \E-product on three tasks: XOR separability (demonstrating non-linearity), MNIST classification (prototype learning), and language modeling (Aether-GPT2).

\subsection{XOR Separability with a Single Unit}
\label{sec:xor_results}

The \E-product's inherent non-linearity enables solving non-linearly separable problems with a single unit. For XOR with inputs $(0,0) \to 0$, $(0,1) \to 1$, $(1,0) \to 1$, $(1,1) \to 0$, a single \E-product unit with $\mathbf{w} = [1, -1]^\top$ achieves perfect separation:
\begin{center}
\begin{tabular}{cccc}
\toprule
$\mathbf{x}$ & $\mathbf{w}^\top\mathbf{x}$ & $\mathcal{K}_\E(\mathbf{w}, \mathbf{x})$ & Class \\
\midrule
$(0,0)$ & $0$ & $0$ & 0 \\
$(0,1)$ & $-1$ & $1/(5+\varepsilon) > 0$ & 1 \\
$(1,0)$ & $1$ & $1/(1+\varepsilon) > 0$ & 1 \\
$(1,1)$ & $0$ & $0$ & 0 \\
\bottomrule
\end{tabular}
\end{center}
By definition (Eq.~\eqref{eq:yat_product}), $\mathbf{w}^\top\mathbf{x}=0$ implies $\mathcal{K}_\E(\mathbf{w},\mathbf{x})=0$ (in particular when $\mathbf{w}\perp\mathbf{x}$). See Appendix~\ref{sec:xor_analysis} for formal proof.

\subsection{Decision Boundaries and Localization}
\label{sec:decision_boundaries}

Unlike linear neurons that induce unbounded hyperplane partitions, \E-product neurons generate localized decision surfaces around prototypes. This follows from self-regulation (Proposition~\ref{prop:self-regulation}): the response $\mathcal{K}_\E(\mathbf{w}, \mathbf{x}) \to \|\mathbf{w}\|^2\cos^2\theta$ as $\|\mathbf{x}\| \to \infty$.

\begin{figure}[htbp]
    \centering
    \includegraphics[width=0.48\textwidth]{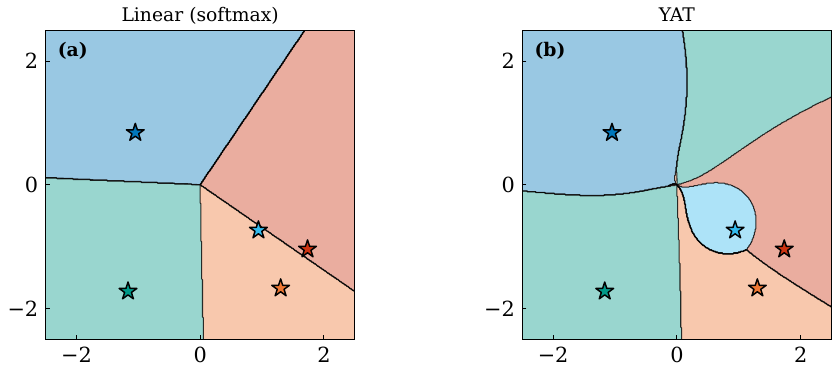}
    \caption{Decision boundaries in 2D: linear (left) creates unbounded half-spaces; \E-product (right) forms localized regions around prototypes (stars).}
    \label{fig:decision_boundaries}
\end{figure}

The extremal similarity results characterize boundary cases. Orthogonality ($\mathbf{w}^\top\mathbf{x} = 0$) yields $\mathcal{K}_\E = 0$ directly from Eq.~\eqref{eq:yat_product}. Identity ($\mathbf{w} = \mathbf{x}$) yields $\mathcal{K}_\E(\mathbf{w},\mathbf{w})=\|\mathbf{w}\|_2^4/\varepsilon$ (Theorem~\ref{thm:max-similarity} states this on the simplex; the same algebra holds in $\mathbb{R}^d$). Lipschitz continuity (Proposition~\ref{prop:lipschitz}) ensures smooth interpolation.

\subsection{MNIST Classification}
\label{sec:mnist_results}

We compare a 10-neuron \E-product classifier against a linear baseline on MNIST (60k training, 10k test samples). Architecture: $C=10$ prototypes $\mathbf{w}_i \in \mathbb{R}^{784}$. Training: Adam (lr=0.001), 5 epochs. Baseline: linear classifier $z_i = \mathbf{w}_i^\top\mathbf{x}$ with softmax.

\begin{table}[htbp]
\centering
\caption{MNIST results (10-neuron classifier).}
\label{tab:mnist_results}
\begin{tabular}{lccc}
\toprule
& Acc. & $\Delta\|\mathbf{w}\|$ & $\alpha$ \\
\midrule
Linear & 92.08\% & $+$13.8\% & -- \\
\E & \textbf{92.38\%} & $-$4.5\% & $1 \to 2.68$ \\
\bottomrule
\end{tabular}
\end{table}

\paragraph{Bounded Prototype Evolution.} The self-regulation property (Proposition~\ref{prop:self-regulation}) predicts stable prototype magnitudes. Empirically, linear prototypes grow unboundedly ($+$13.8\%), while \E-product prototypes contract slightly ($-$4.5\%), confirming bounded response fields. The learnable scaling factor $\alpha$ (initialized at 1) increases to 2.68, amplifying bounded \E-responses for softmax discrimination.

\begin{figure*}[htbp]
    \centering
    \includegraphics[width=0.95\textwidth]{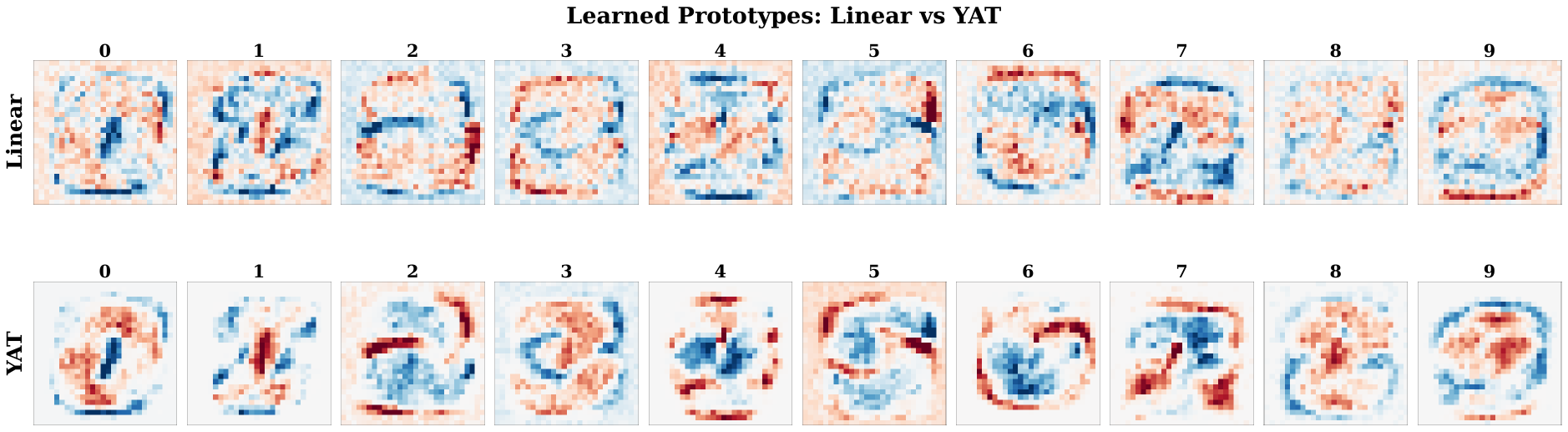}
    \caption{Learned MNIST prototypes from a linear classifier (top) and an \E-product classifier (bottom). We visualize the learned class prototypes as 28$\times$28 weight images using a shared diverging colormap and robust limits for fair comparison across classes and methods. \E-product prototypes tend to exhibit more localized, structured strokes, consistent with the kernel’s potential-well geometry.}
    \label{fig:mnist_prototypes}
\end{figure*}

\paragraph{Superposition Robustness.} The squared numerator $(\mathbf{w}^\top\mathbf{x})^2$ creates approximate invariance under sign flip. Prototype inversion ($\mathbf{w} \to -\mathbf{w}$) yields:
\begin{center}
\begin{tabular}{lcc}
\toprule
& Original & Inverted \\
\midrule
Linear & 92.04\% & 0.01\% \\
\E & 92.18\% & 87.87\% \\
\bottomrule
\end{tabular}
\end{center}
For linear neurons, $(-\mathbf{w})^\top\mathbf{x} = -\mathbf{w}^\top\mathbf{x}$ flips the logit sign, causing catastrophic failure. For \E-product, the numerator invariance provides robustness.

\paragraph{Territorial Structure.} Since the numerator is $(\mathbf{w}_i^\top\mathbf{w}_j)^2$, orthogonal prototypes satisfy $\E(\mathbf{w}_i, \mathbf{w}_j) = 0$. The \E-product develops heterogeneous structure: high similarity for morphologically similar digits (7-9), sharp boundaries elsewhere.

\begin{figure}[htbp]
    \centering
    \includegraphics[width=0.48\linewidth]{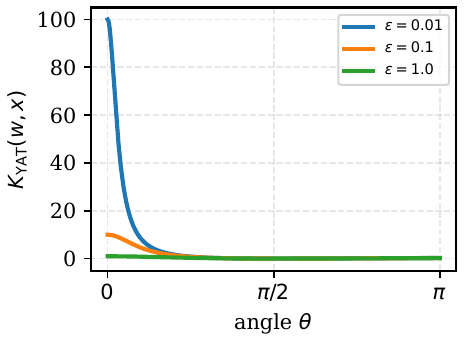}
    \includegraphics[width=0.48\linewidth]{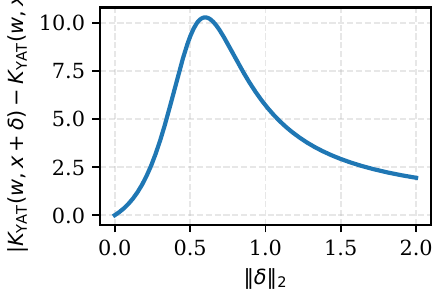}
    \caption{(a) Effect of regularization parameter $\varepsilon$ on \E-product shape: smaller $\varepsilon$ sharpens the response peak. (b) Input perturbation robustness: bounded change in output for bounded input perturbations, demonstrating Lipschitz continuity.}
    \label{fig:yat_robustness}
\end{figure}


\subsection{Extreme Classification Benchmark}
\label{sec:extreme_classification}

We evaluate the \E-product on the Extreme Classification Benchmark using the Eurlex-4K dataset. As shown in Table~\ref{tab:extreme_results}, the \E-product classifier outperforms the standard inner product classifier on the main accuracy metrics: P@1 (0.6465 vs 0.6235), P@3 (0.5114 vs 0.5041), and P@5 (0.4271 vs 0.4125). While the inner product achieves higher values on the propensity-scored metrics PSP@1--5, the \E-product attains stronger overall predictive accuracy on extreme classification, highlighting its robustness for top-label retrieval tasks.

\begin{table*}[h]
    \centering
    \begin{tabular}{l ccc ccc}
    \toprule
    & \multicolumn{3}{c}{P@k} & \multicolumn{3}{c}{PSP@k} \\
    \cmidrule(lr){2-4}\cmidrule(lr){5-7}
    Method & P@1 & P@3 & P@5 & PSP@1 & PSP@3 & PSP@5 \\
    \midrule
    \E-product & \textbf{0.6465} & \textbf{0.5114} & \textbf{0.4271} & \textbf{1.1542} & 0.9664 & 0.8443 \\
    Inner Product & 0.6235 & 0.5041 & 0.4125 & 1.2117 & \textbf{1.0215} & \textbf{0.8587} \\
    \bottomrule
    \end{tabular}
    \caption{Extreme classification results on Eurlex-4K. \E-product improves top-$k$ precision (P@1--5), while the inner product baseline achieves slightly higher propensity-scored precision (PSP@3, PSP@5).}
    \label{tab:extreme_results}
    \end{table*}
    
\subsection{Language Modeling: Aether-GPT2}
\label{sec:language_results}

We train Aether-GPT2 (124M parameters) on 2.5B tokens from FineWeb. The architectural modifications follow Section~\ref{subsec:nmn_layers} and~\ref{subsec:yat_attention}: the MLP block is replaced by an NMN layer (dim 3072, i.e., $4\times$ hidden dim) followed by a linear projection, and scaled dot-product attention is replaced by \E-attention. Rather than being ``normalization-free'', Aether-GPT2 relies on the intrinsic normalization encoded in the \E-product denominator and its self-regulation property, so we do not add explicit LayerNorm in these blocks.

\begin{table}[tbp]
\centering
\caption{GPT-2 vs Aether-GPT2 (2.5B tokens, identical hyperparameters).}
\label{tab:gpt2_comparison}
\resizebox{0.95\columnwidth}{!}{
\begin{tabular}{lcc}
\toprule
& GPT-2 & Aether \\
\midrule
Activation & GeLU & \E \\
LayerNorm & Yes & \textbf{No} \\
\midrule
Train Loss (Final) & 4.1969 & \textbf{4.0479} \\
Val Loss (Final) & 4.6417 & \textbf{4.5747} \\
Val improvement & -- & \textbf{1.45\%} \\
\midrule
Throughput (tok/s) & comparable & comparable \\
Memory & -- & $-$15--25\% \\
\bottomrule
\end{tabular}
}
\end{table}

\paragraph{Architectural Simplification.} The self-regulation property (Proposition~\ref{prop:self-regulation}) and dimensional scaling (Corollary~\ref{cor:dimensional-scaling}) eliminate the need for layer normalization. This yields 15--25\% memory reduction by removing activation storage.

\paragraph{Mixed-Precision Stability.} All language-modeling runs reported here were executed in BF16. The bounded \E-product response provides numerical stability without explicit normalization (see Appendix~\ref{sec:language_experiments} for run provenance and configuration). The RKHS embedding (Theorem~\ref{thm:rkhs-embedding}) ensures well-defined feature spaces.

\begin{figure}[htbp]
    \centering
    \begin{subfigure}{0.48\linewidth}
        \centering
        \includegraphics[width=\textwidth]{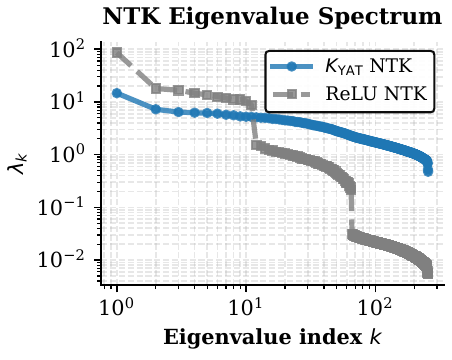}
        \caption{NTK eigenvalue spectrum}
    \end{subfigure}
    \hfill
    \begin{subfigure}{0.48\linewidth}
        \centering
        \includegraphics[width=\textwidth]{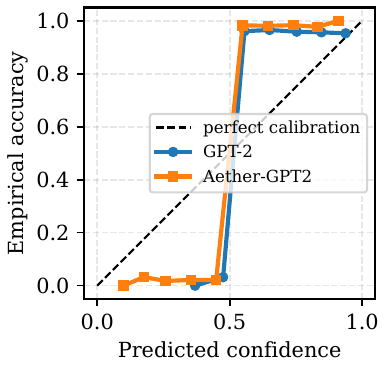}
        \caption{Calibration curve}
    \end{subfigure}
    \caption{(a) Eigenvalue spectrum of \E-NTK vs ReLU-NTK: faster decay ($O(k^{-2/d})$) indicates smoother function spaces. (b) Reliability diagram comparing GPT-2 and Aether-GPT2: Aether exhibits better calibration, with predictions closer to the diagonal.}
    \label{fig:ntk_calibration}
\end{figure}

\begin{figure}[htbp]
    \centering
    \includegraphics[width=\linewidth]{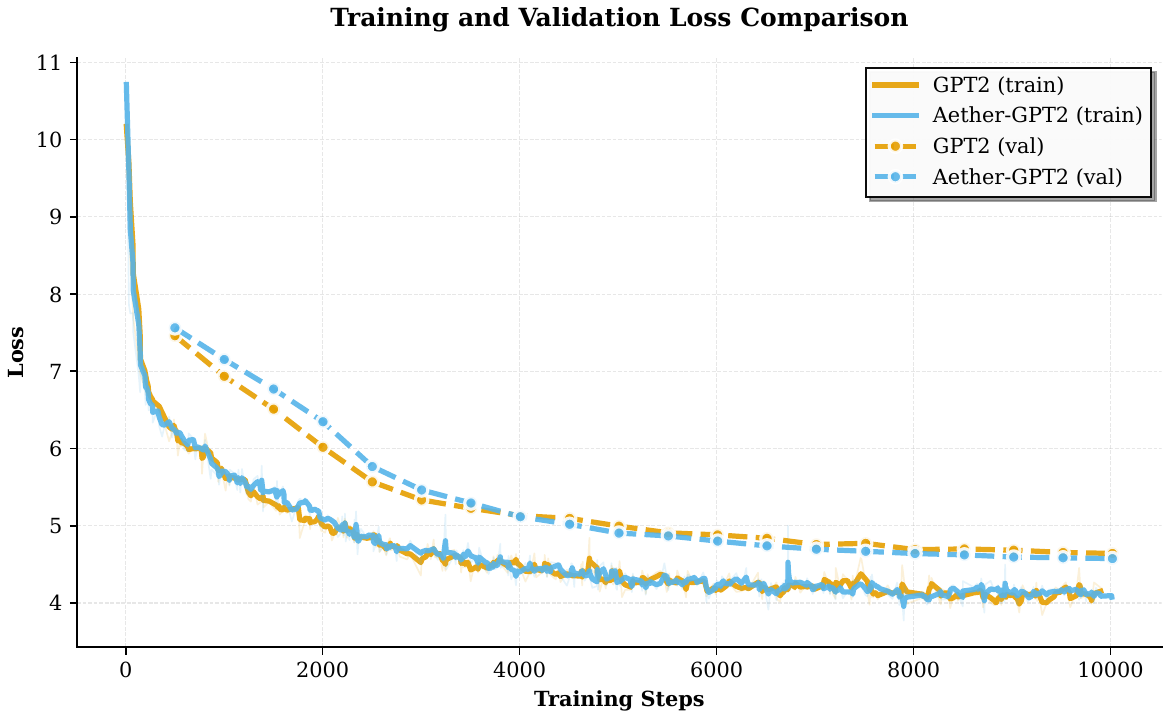}
    \caption{Training and validation loss curves (W\&B). Solid: training; dashed: validation.}
    \label{fig:loss_comparison}
\end{figure}

See Appendix~\ref{sec:language_experiments} for detailed configuration and BF16 results.

\subsection{Ablation: Layer Normalization Incompatibility}
\label{sec:ablation_layernorm}

A critical design choice in Aether architectures is the \emph{removal} of layer normalization. We find that reintroducing LayerNorm to Aether-GPT2 causes training collapse, with validation loss diverging within the first 1000 steps.

\paragraph{Implicit Normalization in the \E-Product.} The denominator of the \E-product already encodes norm information through the squared distance expansion:
\begin{equation}
\label{eq:distance_expansion}
\|\mathbf{w} - \mathbf{x}\|^2 = \|\mathbf{w}\|^2 + \|\mathbf{x}\|^2 - 2\mathbf{w}^\top\mathbf{x}.
\end{equation}
Thus the full \E-product can be rewritten as:
\[
\E(\mathbf{w}, \mathbf{x}) = \frac{(\mathbf{w}^\top\mathbf{x})^2}{\|\mathbf{w}\|^2 + \|\mathbf{x}\|^2 - 2\mathbf{w}^\top\mathbf{x} + \varepsilon}.
\]
The denominator explicitly depends on $\|\mathbf{w}\|^2$ and $\|\mathbf{x}\|^2$, providing intrinsic scale awareness. When inputs are pre-normalized via LayerNorm (forcing $\|\mathbf{x}\|^2 \approx d$ and removing variance information), the denominator loses its adaptive regularization: all inputs become equidistant from the origin, collapsing the proximity structure that the \E-product exploits.

\paragraph{Redundancy and Interference.} Layer normalization serves two purposes in standard architectures: (1) variance stabilization for gradient flow, and (2) preventing internal covariate shift~\citep{ioffe2015batch}. The self-regulation property (Proposition~\ref{prop:self-regulation}) already provides bounded outputs independent of input scale. Moreover, the dimensional scaling corollary (Corollary~\ref{cor:dimensional-scaling}) ensures $\mathcal{O}(1)$ expected values at initialization. LayerNorm therefore introduces redundant normalization that interferes with the geometric structure the \E-product relies upon.

\begin{table}[htbp]
\centering
\caption{Ablation: LayerNorm impact on Aether-GPT2.}
\label{tab:ablation_layernorm}
\begin{tabular}{lcc}
\toprule
Configuration & Val Loss & Status \\
\midrule
Aether (no LayerNorm) & \textbf{4.5747} & Converged \\
Aether + Pre-LN & $>$10 & Diverged \\
Aether + Post-LN & $>$10 & Diverged \\
\bottomrule
\end{tabular}
\end{table}

This ablation confirms that the architectural simplification in Aether is not merely optional—layer normalization is \emph{incompatible} with the \E-product's geometric foundations.
\section{Related Work}
\label{sec:related_work}

\subsection{Kernel Methods and Neural Tangent Kernels}
\label{subsec:kernelized_networks}

Kernel methods enable non-linear learning through implicit feature mappings \citep{scholkopf2002learning}. SVMs \citep{cortes1995support} and kernel PCA \citep{scholkopf1998nonlinear} established the foundation, with Gaussian Processes \citep{williams2006gaussian} extending to probabilistic inference. Scalability came through the Nyström method \citep{williams2000using} and Random Fourier Features \citep{rahimi2007random}.

The Neural Tangent Kernel \citep{jacot2018neural} bridges kernels and deep learning by characterizing infinite-width networks as linear models under gradient descent \citep{lee2019wide, arora2019exact}. Since the \ⵟ-product is a valid Mercer kernel (Theorem~\ref{thm:mercer}), NTK theory extends to our framework (Proposition~\ref{prop:ntk}), enabling infinite-width analysis of geometric operators. The connection between SGD and kernel learning \citep{daniely2017sgd, li2018learning} further supports our approach.

Distance-based kernels (RBF) emphasize proximity; polynomial kernels capture feature interactions. The \ⵟ-product unifies both: the squared numerator provides polynomial-like alignment, while the inverse-square denominator gives RBF-like locality with intrinsic self-regularization.

Deep kernel learning \citep{wilson2016deep, aitchison2021deep} combines neural networks with kernel flexibility, but operates in dual form requiring $O(n^2)$ Gram matrices. Our primal-form approach computes directly in feature space, avoiding this cost. Prior kernelized networks \citep{NIPS2009_5751ec3e, mairal2014convolutionalkernelnetworks} approximate kernels within linear-then-activate structures; the \ⵟ-product is simultaneously the computational primitive and the kernel.

\subsection{Alternative Neural Operators}
\label{subsec:alternative_operators}

Quadratic neurons \citep{fan2020quadratic, xu2021quadratic} achieve non-linearity through polynomial forms but ignore geometric structure. Multiplicative interactions \citep{jayakumar2020multiplicative} and gated linear units \citep{dauphin2017language} introduce element-wise products yet retain activation dependence. SIREN \citep{sitzmann2020implicit} and Fourier feature networks \citep{tancik2020fourier} employ periodic activations for implicit representations.

The \ⵟ-product differs fundamentally: it integrates alignment (squared dot product) and proximity (inverse distance) into a single operator, achieving non-linearity through geometric structure rather than functional composition—no activation functions required.

\subsection{Geometric Foundations}
\label{subsec:inverse_square}

The inverse-square law governs fundamental interactions across physics: gravitation \citep{Newton1687}, electrostatics \citep{Coulomb1785}, and radiation \citep{Gauss1835}. This principle—intensity scaling inversely with squared distance—appears in engineering (signal propagation \citep{rappaport2002wireless}) and information theory (Tanimoto similarity \citep{tanimoto1958elementary}). Geometric deep learning \citep{bronstein2021geometricdeeplearninggrids} provides a unifying framework for exploiting such structure in neural architectures.

The \ⵟ-product operationalizes this geometric principle for neural computation: interaction strength grows with alignment but decays with distance, providing a physics-inspired foundation for learning representations.

\section{Conclusion}
\label{sec:conclusion}

We introduced the \E-product, a kernel operator that unifies alignment and proximity: $\mathcal{K}_\E(\mathbf{w}, \mathbf{x}) = (\mathbf{w}^\top\mathbf{x})^2 / (\|\mathbf{w} - \mathbf{x}\|^2 + \varepsilon)$. We proved it is a valid Mercer kernel with analyticity, Lipschitz continuity on bounded domains, self-regulation, and gradient decay—properties that enable Neural Matter Networks to preserve universal approximation while using intrinsic normalization via the denominator instead of separate normalization layers. Empirically, Aether-GPT2 achieves lower validation loss than GPT-2 with a comparable parameter budget while replacing both scaled dot-product attention and MLP blocks by \E-based mechanisms. By grounding neural computation in physics-inspired geometry, this work offers a principled path toward simpler, more interpretable architectures.
\section*{License}
This work is licensed under the Affero GNU General Public License (AGPL) v3.0. The AGPL is a free software license that ensures end users have the freedom to run, study, share, and modify the software. It requires that any modified versions of the software also be distributed under the same license, ensuring that the freedoms granted by the original license are preserved in derivative works.
The full text of the AGPL v3.0 can be found at \url{https://www.gnu.org/licenses/agpl-3.0.en.html}. By using this work, you agree to comply with the terms of the AGPL v3.0.
\bibliographystyle{icml2026}
\bibliography{iclr2025_conference}

\onecolumn
\appendix
\section{Appendix}
\label{sec:appendix}
\section{Squashing Functions for Non-Negative Scores}
\label{sec:squashing_functions_appendix}

The non-negative nature of \E-product scores necessitates specialized normalization functions. We categorize these into \textit{competitive} (vector-normalizing) and \textit{individualistic} (element-wise) functions.

\subsection{Competitive Normalization}

Competitive functions induce coupling between dimensions, interpreting scores as relative strengths within a distribution.

\paragraph{Softermax.} A generalized normalization function for non-negative scores $\mathbf{x} \in \mathbb{R}_{\ge 0}^d$:
\begin{equation}
    \text{softermax}_n(x_k, \{\mathbf{x}\}) = \frac{x_k^n}{\epsilon + \sum_{i=1}^d x_i^n},
\end{equation}
where $n > 0$ controls the distribution sharpness (analogous to inverse temperature) and $\epsilon > 0$ ensures numerical stability and prevents division by zero for sparse inputs. Unlike softmax, this formulation avoids exponential terms, improving numerical stability for large input magnitudes.

\subsection{Individualistic Squashing}

Individualistic functions map scores to bounded intervals element-wise, preserving independence.

\paragraph{Soft-Sigmoid.} Maps $x \in [0, \infty)$ to $[0, 1)$:
\begin{equation}
    \sigma_n(x) = \frac{x^n}{1 + x^n}.
\end{equation}
This algebraic sigmoid provides a heavy-tailed alternative to the logistic sigmoid, with polynomial rather than exponential saturation.

\paragraph{Soft-Tanh.} Maps $x \in [0, \infty)$ to $[-1, 1)$:
\begin{equation}
    \tau_n(x) = \frac{x^n - 1}{x^n + 1}.
\end{equation}
This corresponds to a rescaled soft-sigmoid: $\tau_n(x) = 2\sigma_n(x) - 1$. The parameter $n$ acts as a gain factor, controlling the steepness of the transition from the distinct states $-1$ (orthogonality/dissimilarity) to $+1$ (alignment/similarity).

\begin{figure}[htbp]
    \centering
    \includegraphics[width=0.9\linewidth]{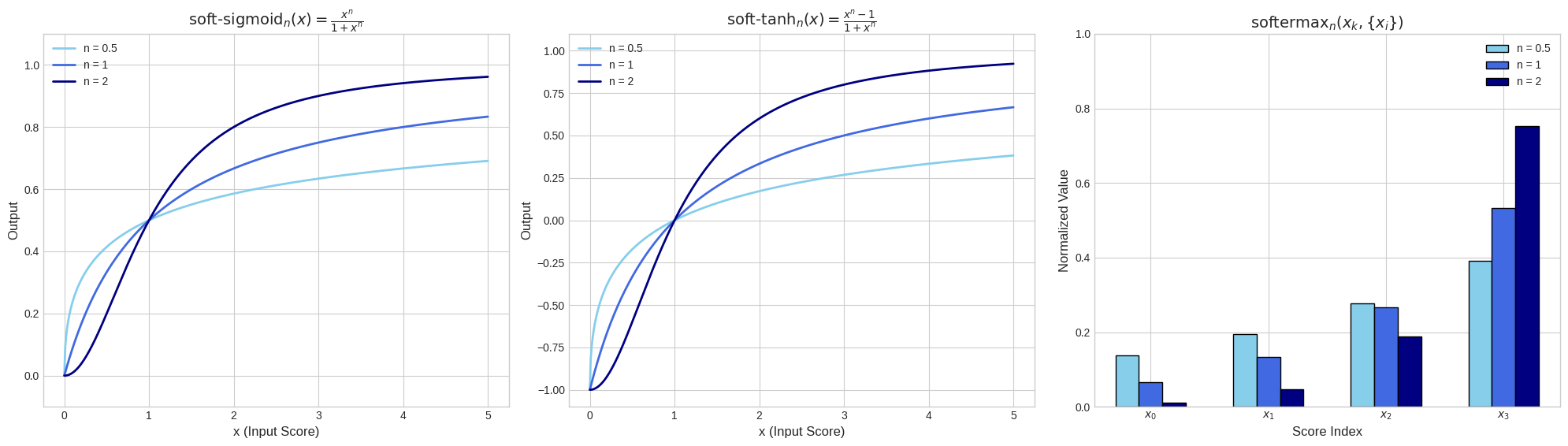}
    \caption{Algebraic squashing functions for non-negative \E-product scores. These offer bounded, monotonic mappings without exponential saturation.}
    \label{fig:softer_functions_appendix}
\end{figure}

\subsection{Mathematical Preliminaries}
\label{sec:preliminaries}

This section collects the key mathematical tools and terminology used throughout the paper.

\subsubsection{Kernel Terminology}

\begin{definition}[Kernel]
A \emph{kernel} is a function $k:\mathcal{X}\times\mathcal{X}\to\mathbb{R}$ that measures similarity between two inputs \citep{scholkopf2002learning}.
\end{definition}

\begin{definition}[Gram Matrix]
Given points $\mathbf{x}_1,\dots,\mathbf{x}_n\in\mathcal{X}$, the \emph{Gram matrix} of $k$ is $\mathbf{K}\in\mathbb{R}^{n\times n}$ with entries $\mathbf{K}_{ij}=k(\mathbf{x}_i,\mathbf{x}_j)$.
\end{definition}

\begin{definition}[Positive Definite Kernel]
A symmetric kernel $k$ is \emph{positive definite (PD)} if its Gram matrix is positive semidefinite for every finite set of points:
\[
\sum_{i,j=1}^n a_i a_j\,k(\mathbf{x}_i,\mathbf{x}_j)\ge 0 \quad \text{for all } \mathbf{a}\in\mathbb{R}^n.
\]
\end{definition}

\begin{definition}[Feature Map and RKHS]
A \emph{feature map} is a map $\Phi:\mathcal{X}\to\mathcal{H}$ such that $k(\mathbf{x},\mathbf{y})=\langle\Phi(\mathbf{x}),\Phi(\mathbf{y})\rangle_{\mathcal{H}}$. The \emph{reproducing kernel Hilbert space (RKHS)} of a PD kernel is the Hilbert space where the kernel becomes an inner product \citep{aronszajn1950theory}.
\end{definition}

\subsubsection{Closure Properties of PD Kernels}

\begin{theorem}[PD Closure Properties]\label{thm:pd-closure}
If $k_1, k_2$ are positive definite kernels on $\mathcal{X}$, then \citep{scholkopf2002learning}:
\begin{enumerate}
    \item $k_1 + k_2$ is PD (closure under addition)
    \item $w \cdot k_1$ is PD for $w \geq 0$ (closure under nonnegative scaling)
    \item $k_1 \cdot k_2$ is PD (Schur product theorem)
    \item $\int k_s\, w(s)\, ds$ is PD for $w(s) \geq 0$ (closure under nonnegative mixtures)
\end{enumerate}
\end{theorem}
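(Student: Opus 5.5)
The plan is to verify each closure property directly from the definition of a positive definite kernel: symmetry, plus a nonnegative quadratic form $\sum_{i,j} a_i a_j k(\mathbf{x}_i,\mathbf{x}_j)\ge 0$ for every finite point set $\mathbf{x}_1,\dots,\mathbf{x}_n\in\mathcal{X}$ and every $\mathbf{a}\in\mathbb{R}^n$. I would fix such a point set and coefficient vector once. Then I would write $\mathbf{K}_1,\mathbf{K}_2$ for the two Gram matrices, which are symmetric positive semidefinite by hypothesis. Symmetry of $k_1+k_2$, $w k_1$, $k_1k_2$ and $\int k_s w(s)\,ds$ is immediate in every case, so only the quadratic forms need work.

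For items 1 and 2 the quadratic form is linear in the kernel. It therefore equals $\mathbf{a}^\top\mathbf{K}_1\mathbf{a}+\mathbf{a}^\top\mathbf{K}_2\mathbf{a}\ge 0$, respectively $w\,\mathbf{a}^\top\mathbf{K}_1\mathbf{a}\ge 0$ since $w\ge 0$. Item 4 is the continuous analogue of items 1 and 2. Assuming, as is implicit, that $s\mapsto k_s(\mathbf{x},\mathbf{y})w(s)$ is integrable for each pair of points, I would interchange the finite sum with the integral:
\[
\sum_{i,j}a_ia_j\int k_s(\mathbf{x}_i,\mathbf{x}_j)w(s)\,ds=\int \Big(\sum_{i,j}a_ia_j k_s(\mathbf{x}_i,\mathbf{x}_j)\Big)w(s)\,ds .
\]
The integrand is a product of two nonnegative functions of $s$, so the integral is nonnegative. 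The only care needed is to state the measurability and integrability hypothesis explicitly, since the statement leaves it implicit.

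Item 3, the Schur product theorem, is the main step. The Gram matrix of $k_1k_2$ is the Hadamard product $\mathbf{K}_1\circ\mathbf{K}_2$. I would spectrally decompose $\mathbf{K}_2=\sum_{\ell}\lambda_\ell \mathbf{v}_\ell\mathbf{v}_\ell^\top$ with $\lambda_\ell\ge 0$, which gives $(\mathbf{K}_2)_{ij}=\sum_\ell\lambda_\ell v_{\ell,i}v_{\ell,j}$. Then
\[
\sum_{i,j}a_ia_j(\mathbf{K}_1)_{ij}(\mathbf{K}_2)_{ij}=\sum_\ell\lambda_\ell\,(\mathbf{a}\circ\mathbf{v}_\ell)^\top\mathbf{K}_1(\mathbf{a}\circ\mathbf{v}_\ell)\ge 0,
\]
because each summand is a nonnegative eigenvalue times a quadratic form of the PSD matrix $\mathbf{K}_1$. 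Alternatively, I could use feature maps $\Phi_1,\Phi_2$ and observe that $\Phi_1\otimes\Phi_2$ is a feature map for $k_1k_2$. I would keep the matrix argument, though, because it uses only finite-dimensional linear algebra and matches the Gram-matrix definition given above.
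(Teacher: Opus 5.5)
Your argument is correct. The paper does not prove this theorem: it states the four properties and defers to the cited textbook, so there is no in-paper route to compare against. What you give is the standard verification, namely linearity of the quadratic form for items 1, 2 and 4, and the rank-one (spectral) decomposition of $\mathbf{K}_2$ for the Schur product.

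Your version adds something the paper lacks: it makes explicit the hypotheses that item 4 silently needs. Each $k_s$ must itself be PD (at least for almost every $s$ with $w(s)>0$). In addition, $s\mapsto k_s(\mathbf{x},\mathbf{y})\,w(s)$ must be measurable and integrable for every pair of points. In the paper's only use of item 4 (Step~5 of the Mercer proof), integrability comes from the domination bound $M^4e^{-t\varepsilon}$ of Step~4.

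You are also right that, because the sum over $i,j$ is finite, linearity of the integral alone justifies moving it inside. The paper instead invokes Tonelli at that point. Strictly, Tonelli applies to nonnegative integrands, and the individual terms $c_ic_j k_t(\mathbf{x}_i,\mathbf{x}_j)$ can be negative, so your justification is the cleaner one.
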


\textit{Context: These closure properties are used in the Mercer kernel proof (Theorem~\ref{thm:mercer}) to show that the product of the polynomial kernel and inverse multiquadric is PD.}

\subsubsection{Laplace Transform Identity}

\begin{theorem}[Laplace Identity for $1/y$]\label{thm:laplace-identity}
For any $y > 0$:
\[
\frac{1}{y} = \int_0^\infty e^{-sy}\, ds.
\]
\end{theorem}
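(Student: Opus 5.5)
The plan is to evaluate the integral directly, since the integrand is explicit. Fix $y>0$. For every $s\ge 0$ the function $s\mapsto e^{-sy}$ is continuous and positive. Its antiderivative is $-\tfrac{1}{y}e^{-sy}$, so on any finite interval $[0,T]$
\[
\int_0^T e^{-sy}\,ds \;=\; \frac{1}{y}\bigl(1-e^{-Ty}\bigr).
\]
The improper integral is by definition the limit of this expression as $T\to\infty$.

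The one point that needs care is this limit. Here the hypothesis $y>0$ is used essentially: it gives $e^{-Ty}\to 0$, so the limit equals $1/y$. For $y\le 0$ the integral would diverge, which is why the statement is restricted to positive $y$. Because the integrand is nonnegative, the improper Riemann integral coincides with the Lebesgue integral over $[0,\infty)$ (monotone convergence). This justifies the later use of the identity inside Fubini/Tonelli arguments, such as exchanging $\int_0^\infty$ with the finite double sums $\sum_{i,j}a_ia_j(\cdot)$ in the Mercer proof of Theorem~\ref{thm:mercer}.

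In that application one takes $y=\|\mathbf{x}-\mathbf{w}\|^2+\varepsilon\ge\varepsilon>0$, so the hypothesis always holds. The resulting decomposition is
\[
\frac{1}{\|\mathbf{x}-\mathbf{w}\|^2+\varepsilon}=\int_0^\infty e^{-s\varepsilon}\,e^{-s\|\mathbf{x}-\mathbf{w}\|^2}\,ds .
\]
This expresses the inverse multiquadric as a nonnegative mixture of Gaussian kernels, and item~4 of Theorem~\ref{thm:pd-closure} then applies. No step here is a genuine obstacle; the only subtlety is making sure the limit and any exchange of integration order are justified by positivity of the integrand.
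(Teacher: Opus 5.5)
Your proof is correct: computing $\int_0^T e^{-sy}\,ds=\frac{1}{y}\bigl(1-e^{-Ty}\bigr)$ and letting $T\to\infty$ using $y>0$ is the standard argument, and the paper gives no separate proof at all, simply asserting the identity (again in Step~1 of the Mercer proof) as an elementary fact. One small correction to your remark on the application: swapping $\int_0^\infty$ with the finite signed sum $\sum_{i,j}c_ic_j(\cdot)$ is justified by linearity plus integrability of each term (which the domination by $M^4e^{-t\varepsilon}$ provides), not by positivity of the integrand.
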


\textit{Context: This identity converts the rational term $1/(\|\mathbf{x}-\mathbf{w}\|^2+\varepsilon)$ into an integral over exponentials, enabling the inverse multiquadric to be expressed as a nonnegative mixture of Gaussian kernels.}

\subsubsection{Complete Monotonicity and Bernstein Representation}

\begin{theorem}[Bernstein's Theorem]\label{thm:bernstein}
A function $f: [0, \infty) \to \mathbb{R}$ is completely monotonic (i.e., $(-1)^n f^{(n)}(y) \geq 0$ for all $n \geq 0$) if and only if it is the Laplace transform of a non-negative measure \citep{schilling2012bernstein}:
\[
f(t) = \int_0^\infty e^{-ts}\, d\mu(s), \quad \mu \geq 0.
\]
\end{theorem}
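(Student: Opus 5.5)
The statement is the classical Bernstein theorem. I would prove the two implications separately. The sufficiency direction is routine; the necessity direction is the real content, and I would follow Widder's approach: a Taylor expansion with a nonnegative remainder, followed by a compactness (Helly selection) argument.

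\emph{Sufficiency.} Suppose $f(t)=\int_0^\infty e^{-ts}\,d\mu(s)$ with $\mu\ge 0$ and $\mu([0,\infty))=f(0)<\infty$. For $t>0$, any $t_0\in(0,t)$ gives $s^n e^{-ts}\le C_{n,t_0}\,e^{-t_0 s/2}$ on a neighbourhood of $t$. This bound is $\mu$-integrable, so dominated convergence lets us differentiate under the integral sign. We obtain
\[
(-1)^n f^{(n)}(t)=\int_0^\infty s^n e^{-ts}\,d\mu(s)\ge 0 .
\]
At $t=0$ I would interpret the derivatives one-sidedly, or as limits. The case $n=0$ is immediate, and the general inequalities pass to the limit $t\to 0^+$ by monotone convergence.

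\emph{Necessity, step 1 (structure).} Let $f$ be completely monotone. Then $f\ge 0$ and $f$ is nonincreasing, and each $(-1)^k f^{(k)}$ is likewise nonnegative and nonincreasing. Hence $f(\infty):=\lim_{t\to\infty}f(t)\ge 0$ exists; it will become a point mass at $s=0$. For $0<x<a$, Taylor's formula with integral remainder gives
\[
f(x)=\sum_{k=0}^{n-1}\frac{(-1)^k f^{(k)}(a)}{k!}(a-x)^k+\int_x^a \frac{(-1)^n f^{(n)}(u)}{(n-1)!}(u-x)^{n-1}\,du .
\]
Every term on the right is nonnegative. I would first use this identity with smaller $n$ to show that $(-1)^k f^{(k)}(a)\,a^k\to 0$ as $a\to\infty$ for each $k\ge 1$. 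The key input is that $(-1)^k f^{(k)}$ is monotone and integrable against $u^{k-1}$ near infinity; the standard trick is to compare $\int_{a/2}^a$ with the value at $a$. Letting $a\to\infty$ then yields
\[
f(x)-f(\infty)=\int_x^\infty \frac{(-1)^n f^{(n)}(u)}{(n-1)!}(u-x)^{n-1}\,du .
\]

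\emph{Step 2 (approximating measures).} Substitute $u=n/s$. This produces finite nonnegative measures $\mu_n$ on $(0,\infty)$, with densities built from $(-1)^n f^{(n)}(n/s)$, such that
\[
f(x)-f(\infty)=\int_{(0,\infty)}\Bigl(1-\tfrac{xs}{n}\Bigr)_+^{\,n-1}\,d\mu_n(s),
\]
and $\mu_n\bigl((0,\infty)\bigr)\le f(0)-f(\infty)$. Since the masses are uniformly bounded, Helly's selection theorem on the compactification $[0,\infty]$ gives a subsequence $\mu_{n_j}$ converging weakly to a measure $\nu\ge 0$.

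\emph{Step 3 (passing to the limit).} This is the step I expect to be the main obstacle. For fixed $x>0$, the functions $(1-xs/n)_+^{n-1}$ converge to $e^{-xs}$ uniformly on $[0,\infty)$, and both they and the limit vanish at $s=\infty$. So any mass of $\nu$ sitting at $\infty$ contributes nothing, and we get
\[
f(x)=f(\infty)+\int_{[0,\infty)}e^{-xs}\,d\nu(s)\qquad (x>0).
\]
Set $\mu=f(\infty)\delta_0+\nu|_{[0,\infty)}$; this gives the representation for all $x>0$. The delicate point is $x=0$, where mass escaping to infinity could break the identity. Here I would use the continuity of $f$ at $0$, which holds because $f$ is differentiable on $[0,\infty)$, together with monotone convergence as $x\downarrow 0$. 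This gives $f(0)=\lim_{x\to 0^+}f(x)=\mu([0,\infty))$, so no mass is lost and the representation extends to $t=0$. Finally, uniqueness of $\mu$ is not required by the statement, but it follows from injectivity of the Laplace transform if needed.
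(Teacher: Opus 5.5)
Your proposal is correct and follows the standard route. The paper does not prove this classical result itself; it cites the monograph of Schilling et al., whose proof uses the same ingredients: Taylor's formula with nonnegative integral remainder, the substitution $u=n/s$, Helly selection on $[0,\infty]$, and the uniform convergence $(1-xs/n)_+^{n-1}\to e^{-xs}$.

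The one caveat concerns the statement itself rather than your argument. In the sufficiency direction the one-sided derivatives at $0$ can be infinite: for example, $d\mu(s)=(1+s)^{-2}\,ds$ gives $f'(0^+)=-\infty$. So of your two readings, only ``derivatives as limits in $[0,\infty]$'' works, not the one-sided-derivative reading. The theorem is properly read as complete monotonicity on $(0,\infty)$ plus continuity at $0$, and that is all your necessity argument actually uses.
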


\textit{Context: The function $1/y$ is completely monotone, so it admits a nonnegative exponential-mixture representation. This justifies the decomposition used in the Mercer proof.}

\subsubsection{Integral Exchange (Tonelli-Fubini)}

\begin{theorem}[Tonelli-Fubini]
For $\sigma$-finite measure spaces and measurable $f \geq 0$ \citep{folland1999real}:
\[
\int_{X \times Y} f\, d(\mu \times \nu) = \int_X \left(\int_Y f\, d\nu\right) d\mu = \int_Y \left(\int_X f\, d\mu\right) d\nu.
\]
\end{theorem}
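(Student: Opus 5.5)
The plan is the standard measure-theoretic bootstrap: indicator functions of rectangles, then all product-measurable sets, then simple functions, then nonnegative measurable functions. First I reduce to finite measures. By $\sigma$-finiteness, choose increasing sequences $X_m\uparrow X$ and $Y_m\uparrow Y$ with $\mu(X_m),\nu(Y_m)<\infty$. If the statement holds for $f\mathbf 1_{X_m\times Y_m}$ on the finite spaces, then monotone convergence, applied to each iterated integral and to the double integral, passes the identity to the limit $m\to\infty$. So assume $\mu,\nu$ finite.

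Let $\mathcal C$ be the class of sets $E\in\mathcal A\otimes\mathcal B$ with three properties: the sections $E_x$ and $E^y$ are measurable; the maps $x\mapsto\nu(E_x)$ and $y\mapsto\mu(E^y)$ are measurable; and
\[
\int_X\nu(E_x)\,d\mu=\int_Y\mu(E^y)\,d\nu.
\]
For a rectangle $A\times B$ both sides equal $\mu(A)\nu(B)$, so rectangles lie in $\mathcal C$. Rectangles form a $\pi$-system. I then check that $\mathcal C$ is a $\lambda$-system. Increasing unions are handled by monotone convergence. Proper differences $F\setminus E$ with $E\subseteq F$ are handled by subtraction, which is legitimate because finiteness of the measures keeps all quantities finite.

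Dynkin's $\pi$-$\lambda$ theorem then gives $\mathcal C=\mathcal A\otimes\mathcal B$. The common value of the two iterated integrals is countably additive in $E$, so it is a measure agreeing with $\mu\times\nu$ on rectangles. Uniqueness of the product measure, which again uses $\sigma$-finiteness through the $\pi$-$\lambda$ uniqueness lemma, identifies it with $(\mu\times\nu)(E)$. This is the theorem for indicators $f=\mathbf 1_E$.

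Linearity extends the identity to nonnegative simple functions. For general measurable $f\ge 0$, take simple $0\le s_k\uparrow f$. The section integrals $x\mapsto\int_Y s_k(x,\cdot)\,d\nu$ are measurable and increase to $\int_Y f(x,\cdot)\,d\nu$ by monotone convergence. Hence the limit is measurable, and a second application of monotone convergence gives each iterated integral as the limit of the simple-function identities.

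The main obstacle is the $\lambda$-system step, specifically measurability of $x\mapsto\nu(E_x)$ under complements and differences. This is exactly where finiteness, and hence $\sigma$-finiteness, is indispensable, since otherwise $\infty-\infty$ arises; the standard counterexample with counting measure on $[0,1]$ shows the conclusion can fail without it. I will handle this by carrying out the reduction to finite measures before invoking Dynkin's theorem.
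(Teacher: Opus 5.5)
Your proposal is correct. The paper gives no proof of its own and simply cites Folland, whose argument is the same bootstrap: the set-level identity for finite measures, exhaustion using $\sigma$-finiteness, then simple functions and monotone convergence. The differences are cosmetic. Folland runs the set-level step through the monotone class lemma on the algebra of finite disjoint unions of rectangles, handling decreasing intersections by dominated convergence, rather than through Dynkin's $\pi$-$\lambda$ theorem. He also does the $\sigma$-finite exhaustion at the level of sets rather than functions.
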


\textit{Context: Justifies exchanging integrals and sums in PD kernel proofs: ``integral of PD kernels is PD'' requires moving the integral outside the quadratic form.}

\subsubsection{Information Theory}

\begin{theorem}[Gibbs' Inequality]
For probability distributions $P, Q$ \citep{cover2006elements}:
\[
D_{KL}(P \| Q) = \sum_x P(x) \log\frac{P(x)}{Q(x)} \geq 0,
\]
with equality iff $P = Q$.
\end{theorem}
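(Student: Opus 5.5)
We prove Gibbs' inequality from the elementary bound $\log t \le t-1$ for $t>0$, with equality iff $t=1$. This bound follows from concavity of $\log$: the tangent line at $t=1$ is $t-1$, and strict concavity gives strict inequality for $t\neq 1$. We work with finite (or countable) alphabets and adopt the conventions $0\log\frac{0}{q}=0$ and $p\log\frac{p}{0}=+\infty$ for $p>0$. These are the same conventions implicit in Theorem~\ref{thm:min-similarity}.

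\emph{Step 1: reduce to the support of $P$.} Let $S=\supp(P)=\{x: P(x)>0\}$. If there is some $x\in S$ with $Q(x)=0$, then $D_{KL}(P\|Q)=+\infty>0$, and $P\neq Q$ since $P(x)>0=Q(x)$. Both claims then hold trivially. So we may assume $Q(x)>0$ for all $x\in S$, and the sum defining $D_{KL}$ ranges over $S$ only, with every term finite.

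\emph{Step 2: the inequality.} Write
\[
-D_{KL}(P\|Q)=\sum_{x\in S}P(x)\log\frac{Q(x)}{P(x)}.
\]
Apply $\log t\le t-1$ with $t=Q(x)/P(x)$ to each term, obtaining
\[
-D_{KL}(P\|Q)\le \sum_{x\in S}\bigl(Q(x)-P(x)\bigr)=Q(S)-1\le 0,
\]
because $P(S)=1$ and $Q(S)\le 1$. Hence $D_{KL}(P\|Q)\ge 0$. For countable alphabets, the rearrangement is justified by noting that the negative parts of the summands are dominated by $P(x)$, which is summable.

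\emph{Step 3: the equality case.} This is the only step needing care: equality must hold in both inequalities simultaneously. If $D_{KL}(P\|Q)=0$, then the chain in Step 2 is tight, which forces two things.

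First, $\log t=t-1$ holds for each term with $P(x)>0$. Since this happens only at $t=1$, we get $Q(x)=P(x)$ on $S$.

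Second, $Q(S)=1$. Because $Q\ge 0$ and sums to one, this means $Q(x)=0$ off $S$, i.e.\ $Q=P$ there too.

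Together these give $Q=P$ everywhere. The converse is immediate, since $P=Q$ makes every logarithm zero. The main obstacle is exactly this bookkeeping of the two tightness conditions, together with the infinite-divergence case handled in Step 1. Both are routine once separated as above.
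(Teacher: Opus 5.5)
The paper does not prove this statement: it is listed as background with a citation to Cover and Thomas, and is only invoked in Corollary~\ref{cor:kl-identity}. Your proof is correct in the paper's finite-alphabet setting, the simplex $\Delta^{n-1}$. It follows a different route from the cited source, which applies Jensen's inequality to the concave $\log$ all at once: $-D_{KL}(P\|Q)=\sum_{x\in S}P(x)\log\frac{Q(x)}{P(x)}\le\log\sum_{x\in S}Q(x)\le 0$. In that argument, strict concavity only yields that $Q(x)/P(x)$ is constant on $S$, so the second tightness condition $\sum_{x\in S}Q(x)=1$ is genuinely needed to force the constant to be $1$. Your termwise bound $\log t\le t-1$ is sharper pointwise. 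Tightness of the first inequality already forces $Q=P$ on $S$, so $Q(S)=P(S)=1$ follows automatically. Your second tightness condition is therefore redundant rather than a separate obstacle. The termwise form also extends cleanly to countable alphabets.

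Two small corrections. First, in your countable-alphabet remark, the negative parts $\max\{0,P(x)\log\frac{Q(x)}{P(x)}\}$ are not dominated by $P(x)$; take $P(x)=0.01$, $Q(x)=0.5$. The same inequality $\log t\le t-1$ does bound them by $Q(x)$, which is what makes the sums well defined. Second, $\log t\le t-1$ holds only for the natural logarithm; for instance $\log_2 1.5>0.5$. For another base $b>1$, you should note that $D_{KL}$ is merely rescaled by the positive factor $1/\ln b$. The Jensen argument, by contrast, is base-independent.
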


\textit{Context: Referenced in Corollary~\ref{cor:kl-identity} connecting distributional identity to vanishing KL divergence.}

\subsubsection{Harmonic Analysis}

\begin{theorem}[Bochner's Theorem]
A continuous function $k: \mathbb{R}^d \to \mathbb{C}$ is positive definite and translation-invariant iff it is the Fourier transform of a finite non-negative measure \citep{rudin1991functional}.
\end{theorem}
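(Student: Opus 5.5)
The statement is the classical Bochner theorem. I would prove the two implications separately, writing translation-invariance as $k(\mathbf{x},\mathbf{y})=\kappa(\mathbf{x}-\mathbf{y})$ and positive definiteness in the Hermitian sense $\sum_{i,j}a_i\overline{a_j}\,\kappa(\mathbf{x}_i-\mathbf{x}_j)\ge 0$ for all $a\in\mathbb{C}^n$.

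\emph{Easy direction.} Suppose $\kappa(\mathbf{x})=\int e^{i\boldsymbol\omega^\top\mathbf{x}}\,d\mu(\boldsymbol\omega)$ with $\mu\ge 0$ finite. Continuity follows from dominated convergence, since the integrand has modulus $1$ and $\mu$ is finite. Positive definiteness follows from the identity
\[
\sum_{i,j}a_i\overline{a_j}\,\kappa(\mathbf{x}_i-\mathbf{x}_j)=\int\Bigl|\sum_i a_i e^{i\boldsymbol\omega^\top\mathbf{x}_i}\Bigr|^2 d\mu(\boldsymbol\omega)\ge 0,
\]
where moving the finite sum inside the integral is trivial.

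\emph{Hard direction, step 1: discrete to integral positive definiteness.} Let $\kappa$ be continuous and positive definite. The $1\times1$ and $2\times2$ Gram matrices give $\kappa(\mathbf 0)\ge 0$, $\kappa(-\mathbf{x})=\overline{\kappa(\mathbf{x})}$ and $|\kappa(\mathbf{x})|\le\kappa(\mathbf 0)$, so $\kappa$ is bounded. Approximating double integrals by Riemann sums (legitimate because $\kappa$ is continuous and bounded and the test functions decay rapidly), I obtain
\[
\iint\kappa(\mathbf{x}-\mathbf{y})\,\varphi(\mathbf{x})\overline{\varphi(\mathbf{y})}\,d\mathbf{x}\,d\mathbf{y}\ge 0
\]
for every Schwartz function $\varphi$.

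\emph{Step 2: Gaussian-regularized densities.} Fix $\sigma>0$ and set
\[
g_\sigma(\boldsymbol\omega)=(2\pi)^{-d}\int\kappa(\mathbf{x})\,e^{-\sigma\|\mathbf{x}\|^2}e^{-i\boldsymbol\omega^\top\mathbf{x}}\,d\mathbf{x}.
\]
To show $g_\sigma\ge 0$, take $\varphi(\mathbf{x})=e^{-2\sigma\|\mathbf{x}\|^2}e^{i\boldsymbol\omega^\top\mathbf{x}}$ in Step 1. Change variables to $\mathbf{u}=\mathbf{x}-\mathbf{y}$ and $\mathbf{v}=\mathbf{x}+\mathbf{y}$, using $2\|\mathbf{x}\|^2+2\|\mathbf{y}\|^2=\|\mathbf{u}\|^2+\|\mathbf{v}\|^2$. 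The $\mathbf{v}$-integral then factors out as a positive constant, and what remains is a positive multiple of $g_\sigma(\boldsymbol\omega)$.

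\emph{Step 3: bounded total mass.} Next I would show $\int g_\sigma=\kappa(\mathbf 0)$. Multiply $g_\sigma$ by a Gaussian damping factor $e^{-\tau\|\boldsymbol\omega\|^2}$ and integrate using Fubini. The result is $\kappa e^{-\sigma\|\cdot\|^2}$ convolved with an approximate identity, evaluated at $\mathbf 0$. As $\tau\to 0$ this tends to $\kappa(\mathbf 0)$ by continuity at $\mathbf{0}$. Since $g_\sigma\ge 0$, monotone convergence then gives $\int g_\sigma=\kappa(\mathbf 0)<\infty$. In particular $g_\sigma\in L^1$, and Fourier inversion yields
\[
\kappa(\mathbf{x})e^{-\sigma\|\mathbf{x}\|^2}=\int e^{i\boldsymbol\omega^\top\mathbf{x}}\,g_\sigma(\boldsymbol\omega)\,d\boldsymbol\omega .
\]

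\emph{Step 4: passage to the limit.} Let $\sigma\to 0$. The measures $\mu_\sigma=g_\sigma\,d\boldsymbol\omega$ all have mass $\kappa(\mathbf 0)$, and their Fourier transforms converge pointwise to $\kappa$, which is continuous at $\mathbf 0$. By L\'evy's continuity theorem, after normalizing by $\kappa(\mathbf 0)$, the family is tight and converges weakly to a finite nonnegative measure $\mu$ with $\hat\mu=\kappa$. The case $\kappa(\mathbf 0)=0$ is trivial, since then $\kappa\equiv 0$.

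\emph{Expected obstacle.} The main difficulty is the pair of Steps 2 and 3: proving nonnegativity of $g_\sigma$ and, above all, its integrability. Without the monotone-convergence argument one cannot invert the Fourier transform, and without integrability the tightness needed in Step 4 does not follow. For tightness I would first try the standard estimate
\[
\mu_\sigma\bigl(\{\|\boldsymbol\omega\|>1/\delta\}\bigr)\lesssim\delta^{-d}\int_{[-\delta,\delta]^d}\bigl(\kappa(\mathbf 0)-\operatorname{Re}\kappa(\mathbf{x})e^{-\sigma\|\mathbf{x}\|^2}\bigr)\,d\mathbf{x},
\]
which is small uniformly in $\sigma$ by continuity of $\kappa$ at $\mathbf 0$.
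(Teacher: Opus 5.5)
The paper gives no proof of this statement. Bochner's theorem is only quoted in the preliminaries with a pointer to Rudin, and none of the paper's arguments explicitly invoke it. Judged on its own, your proposal is correct: it is the classical analytic proof, and the steps check out.

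\emph{Step 2.} The substitution has Jacobian $2^{-d}$ and turns the quadratic form into a positive multiple of $g_\sigma(-\boldsymbol\omega)$. With $e^{+i\boldsymbol\omega^\top\mathbf{x}}$ in $\varphi$ you land at $-\boldsymbol\omega$, which is harmless since $\boldsymbol\omega$ is arbitrary.

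\emph{Step 3.} The damped integral is exactly $f*H_\tau$ evaluated at $\mathbf 0$, with $f=\kappa e^{-\sigma\|\cdot\|^2}$ and $H_\tau(\mathbf{x})=(4\pi\tau)^{-d/2}e^{-\|\mathbf{x}\|^2/(4\tau)}$. So monotone convergence gives $\int g_\sigma=\kappa(\mathbf 0)$, and inversion holds everywhere by continuity.

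\emph{Step 4.} This is a correct use of L\'evy's theorem along a sequence $\sigma_n\to 0$.

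Two tidy-ups. First, do the Riemann sums for compactly supported continuous $\varphi$, then pass to Schwartz $\varphi$ via $\bigl|\iint\kappa(\mathbf{x}-\mathbf{y})f(\mathbf{x})\overline{g(\mathbf{y})}\,d\mathbf{x}\,d\mathbf{y}\bigr|\le\kappa(\mathbf 0)\|f\|_1\|g\|_1$. Second, the tightness estimate in your last paragraph is already the content of L\'evy's theorem, so you need it only if you prefer not to cite that theorem.

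Compare this with the functional-analytic route typical of the cited reference. There one regards $f\mapsto\int f\kappa$ as a positive functional on the convolution $*$-algebra $L^1(\mathbb{R}^d)$ (your Step 1 is precisely this positivity) and represents it by a measure on the Gelfand spectrum.
\begin{itemize}
\item Your argument is elementary and self-contained, using only $L^1$ Fourier inversion, Gaussian integrals and L\'evy's theorem. It is, however, tied to $\mathbb{R}^d$ through the Gaussian regularization and the $\mathbf{u},\mathbf{v}$ substitution.
\item The Banach-algebra proof needs Gelfand theory and a representation theorem for positive functionals. In exchange, it works unchanged on any locally compact abelian group.
\end{itemize}

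Finally, your Hermitian definition of positive definiteness is not cosmetic. Under the paper's own definition (symmetric kernels, real coefficients), the ``if'' direction of the statement fails. For example, $\mu=\delta_{\boldsymbol\omega_0}$ with $\boldsymbol\omega_0\neq\mathbf 0$ gives $k(\mathbf{x},\mathbf{y})=e^{i\boldsymbol\omega_0^\top(\mathbf{x}-\mathbf{y})}$, which is not symmetric.
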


\begin{theorem}[Hahn-Banach Density Criterion]
A linear subspace $M \subset V$ is dense in $V$ iff every continuous linear functional vanishing on $M$ vanishes on $V$ \citep{rudin1991functional}.
\end{theorem}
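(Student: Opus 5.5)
The statement to prove is the Hahn--Banach density criterion. I will take $V$ to be a normed space over $\mathbb{R}$ or $\mathbb{C}$; the locally convex case goes the same way, using the separation form of Hahn--Banach. The two directions are of very different difficulty, so I treat them separately. The substantive input is the Hahn--Banach extension theorem: a bounded linear functional on a subspace extends to all of $V$ with the same norm.

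\emph{Density implies the functional condition.} Suppose $M$ is dense and $\ell\in V^*$ vanishes on $M$. Given $v\in V$, choose $m_k\in M$ with $m_k\to v$. Continuity gives $\ell(v)=\lim_k \ell(m_k)=0$, so $\ell\equiv 0$ on $V$. This direction is routine.

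\emph{The functional condition implies density.} I argue by contraposition. Suppose $\overline{M}\neq V$ and pick $v_0\in V\setminus\overline{M}$. Set
\[
\delta:=\operatorname{dist}(v_0,\overline{M})=\inf_{m\in\overline{M}}\|v_0-m\|,
\]
which is strictly positive because $\overline{M}$ is closed. On the subspace $N=\overline{M}\oplus\operatorname{span}\{v_0\}$, define
\[
\ell_0(m+tv_0)=t\,\delta .
\]
This is well defined because $v_0\notin\overline{M}$, so the representation $m+tv_0$ is unique. It is clearly linear, vanishes on $\overline{M}$, and satisfies $\ell_0(v_0)=\delta\neq 0$.

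The main obstacle is showing that $\ell_0$ is bounded. For $t\neq 0$, note that $-m/t\in\overline{M}$, so
\[
\|m+tv_0\|=|t|\,\|v_0-(-m/t)\|\ge |t|\,\delta=|\ell_0(m+tv_0)| .
\]
For $t=0$ the inequality is trivial. Hence $\|\ell_0\|\le 1$ on $N$.

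By the Hahn--Banach extension theorem, $\ell_0$ extends to some $\ell\in V^*$ with $\|\ell\|\le 1$. This $\ell$ vanishes on $M\subset\overline{M}$, yet $\ell(v_0)=\delta\neq 0$. So the functional condition fails, which completes the contrapositive. In the complex case the same construction works, with Hahn--Banach applied in its complex form (via real parts).
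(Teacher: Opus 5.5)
Your argument is correct and complete for normed $V$. The paper gives no proof of its own here. It lists the result as a preliminary and points to Rudin's \emph{Functional Analysis}, which proves it for locally convex spaces via the \emph{separation} form of Hahn--Banach. That proof separates the compact convex set $\{v_0\}$ from the closed convex set $\overline{M}$ by some $\Lambda\in V^*$. Since $\Lambda(M)$ is a subspace of the scalar field whose closure misses $\Lambda v_0$, it must be $\{0\}$, while $\Lambda v_0\neq 0$.

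You instead use the \emph{dominated-extension} form. You write down the explicit functional $m+tv_0\mapsto t\delta$ and bound it through $\|m+tv_0\|\ge|t|\delta$, which is the classical normed-space argument. Your route is more elementary and quantitative: it produces $\ell$ with $\|\ell\|\le 1$, $\ell|_M=0$ and $\ell(v_0)=\operatorname{dist}(v_0,M)$, i.e.\ the functional attaining the duality formula for the distance to a subspace. The separation route needs no norm, so it reaches all locally convex spaces. Your construction also extends there: replace the norm by the Minkowski functional of a convex balanced neighborhood $W$ of $0$ with $(v_0+W)\cap\overline{M}=\emptyset$. For the setting the paper has in mind, $V=C(\mathcal{X})$ with the sup norm, your version is all that is needed.

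Since the paper's statement leaves $V$ unspecified, state your hypothesis as essential rather than a convenience. For example, $L^p[0,1]$ with $0<p<1$ has trivial dual, so $M=\{0\}$ satisfies the functional condition vacuously without being dense.
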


\textit{Context: Used in the universal approximation proof (Theorem~\ref{thm:uat-external-bias}).}
\section{Proofs of Main Theorems}
\label{sec:additional_theorems}

This section provides the proofs for theorems stated in the main body.

\subsection{Proof of Theorem~\ref{thm:gradient} (Gradient Direction)}
\label{sec:gradient_proof}

Let $s = \langle \mathbf{e}_i, \mathbf{e}_j \rangle$ and $D = \varepsilon + \|\mathbf{e}_i - \mathbf{e}_j\|^2$. Then $\E = s^2/D$.

Using the quotient rule:
\[
\nabla_{\mathbf{e}_i} \E = \frac{2s \cdot \nabla_{\mathbf{e}_i} s \cdot D - s^2 \cdot \nabla_{\mathbf{e}_i} D}{D^2}.
\]

We have $\nabla_{\mathbf{e}_i} s = \mathbf{e}_j$ and $\nabla_{\mathbf{e}_i} D = 2(\mathbf{e}_i - \mathbf{e}_j)$. Substituting:
\[
\nabla_{\mathbf{e}_i} \E = \frac{2s\mathbf{e}_j \cdot D - 2s^2(\mathbf{e}_i - \mathbf{e}_j)}{D^2} = \frac{2s}{D}\left(\mathbf{e}_j - \frac{s(\mathbf{e}_i - \mathbf{e}_j)}{D}\right). \qed
\]

\subsection{Proof of Proposition~\ref{prop:lipschitz} (Lipschitz Continuity)}
\label{sec:lipschitz_proof}

Fix $\varepsilon>0$ and assume $\|\mathbf{x}\|_2 \le 1$ and $\|\mathbf{w}\|_2 \le 1$.
Let $s=\langle \mathbf{x},\mathbf{w}\rangle$ and $D=\varepsilon+\|\mathbf{x}-\mathbf{w}\|_2^2$.
Then $D\ge \varepsilon$. From the gradient formula (Theorem~\ref{thm:gradient}):
\[
\|\nabla_{\mathbf{x}} \E\| \leq \frac{2|s|}{D}\left(\|\mathbf{w}\|_2 + \frac{|s| \cdot \|\mathbf{x} - \mathbf{w}\|_2}{D}\right).
\]

For $\|\mathbf{x}\|_2,\|\mathbf{w}\|_2 \le 1$, we have $|s|=|\langle \mathbf{x},\mathbf{w}\rangle|\le 1$ and $\|\mathbf{x}-\mathbf{w}\|_2 \le 2$, so:
\[
\|\nabla_{\mathbf{x}} \E\| \leq \frac{2}{\varepsilon}\left(1 + \frac{2}{\varepsilon}\right) = \frac{2}{\varepsilon} + \frac{4}{\varepsilon^2}. \qed
\]

\subsection{Proof of Proposition~\ref{prop:simplex-extremal} (Extremal Similarity)}
\label{sec:simplex_proof}

(1) The numerator $\langle \mathbf{p}, \mathbf{q} \rangle^2 = \left(\sum_i p_i q_i\right)^2 = 0$ if and only if all terms $p_i q_i = 0$, which occurs precisely when supports are disjoint.

(2) If $\supp(\mathbf{p}) \cap \supp(\mathbf{q}) = \emptyset$, there exists $i$ with $p_i > 0$ and $q_i = 0$, making $\mathrm{KL}(\mathbf{p} \| \mathbf{q}) = \sum_i p_i \log(p_i/q_i) = \infty$.

\medskip
\noindent
\textbf{Cross-entropy (used in Theorem~\ref{thm:min-similarity}).}
Under the same condition, the cross-entropy
\[
H(\mathbf{p},\mathbf{q}) = -\sum_i p_i \log q_i
\]
contains a term with $p_i>0$ and $q_i=0$, hence $-\log q_i = -\log 0 = +\infty$ and therefore $H(\mathbf{p},\mathbf{q})=\infty$ under the standard convention $\log 0=-\infty$. \qed

\subsection{Proof of Theorem~\ref{thm:min-similarity} (Minimal Similarity and Statistical Orthogonality)}
\label{sec:min_similarity_proof}

Let $\mathbf{p},\mathbf{q}\in\Delta^{n-1}$ and $\varepsilon>0$.
Since $\|\mathbf{p}-\mathbf{q}\|_2^2+\varepsilon>0$, we have $\E(\mathbf{p},\mathbf{q})=0$ if and only if $\mathbf{p}^\top\mathbf{q}=0$.
On the simplex, $p_i q_i\ge 0$ for all $i$, so $\sum_i p_i q_i=0$ holds if and only if $p_iq_i=0$ for all $i$, i.e.\ $\supp(\mathbf{p})\cap\supp(\mathbf{q})=\emptyset$.
In this case $D_{\mathrm{KL}}(\mathbf{p}\|\mathbf{q})=\infty$ by the argument in Proposition~\ref{prop:simplex-extremal}, and $H(\mathbf{p},\mathbf{q})=\infty$ by the cross-entropy note above. \qed

\subsection{Proof of Theorem~\ref{thm:max-similarity} (Maximal (Singular) Similarity)}
\label{sec:max_similarity_proof}

Fix $\varepsilon>0$ and define $\E_{\varepsilon}(\mathbf{p},\mathbf{q}) := \frac{(\mathbf{p}^\top\mathbf{q})^2}{\|\mathbf{p}-\mathbf{q}\|_2^2+\varepsilon}$.
For any $\mathbf{p},\mathbf{q}\in \Delta^{n-1}$, the denominator is at least $\varepsilon$, hence $\E_{\varepsilon}(\mathbf{p},\mathbf{q})$ is finite.
If $\mathbf{q}=\mathbf{p}$, then $\|\mathbf{p}-\mathbf{q}\|_2^2=0$ and
\[
\E_{\varepsilon}(\mathbf{p},\mathbf{p})
=
\frac{(\mathbf{p}^\top\mathbf{p})^2}{\varepsilon}
=
\frac{\|\mathbf{p}\|_2^4}{\varepsilon},
\]
so $\E_{\varepsilon}(\mathbf{p},\mathbf{p})\to\infty$ as $\varepsilon\to 0^+$.

\medskip
\noindent
\textbf{Singular joint limit.}
Let $(\mathbf{q}_k)_{k\ge 1}\subset \Delta^{n-1}$ satisfy $\mathbf{q}_k\neq \mathbf{p}$ and $\|\mathbf{q}_k-\mathbf{p}\|_2\to 0$, and let $\varepsilon_k\to 0^+$.
Then $(\mathbf{p}^\top\mathbf{q}_k)^2 \to (\mathbf{p}^\top\mathbf{p})^2=\|\mathbf{p}\|_2^4>0$ while $\|\mathbf{p}-\mathbf{q}_k\|_2^2+\varepsilon_k\to 0$.
Therefore $\E_{\varepsilon_k}(\mathbf{p},\mathbf{q}_k)\to \infty$.
\qed

\subsection{Proof of Corollary~\ref{cor:kl-identity} (Distributional Identity and KL)}
\label{sec:kl_identity_proof}

By Gibbs' inequality (see e.g.\ \citet{cover2006elements}), $D_{\mathrm{KL}}(\mathbf{p}\|\mathbf{q})\ge 0$ with equality if and only if $\mathbf{p}=\mathbf{q}$.
When $\mathbf{p}=\mathbf{q}$, the cross-entropy satisfies $H(\mathbf{p},\mathbf{q})=H(\mathbf{p})$. \qed

\subsection{Proof of Proposition~\ref{prop:self-regulation} (Self-Regulation)}
\label{sec:self_regulation_proof}

For $\mathbf{x} = k\mathbf{u}$:
\[
\E(\mathbf{w}, k\mathbf{u}) = \frac{(k \mathbf{w}^\top \mathbf{u})^2}{\|\mathbf{w} - k\mathbf{u}\|^2 + \varepsilon} = \frac{k^2 (\mathbf{w}^\top \mathbf{u})^2}{\|\mathbf{w}\|^2 - 2k\mathbf{w}^\top\mathbf{u} + k^2 + \varepsilon}.
\]

Dividing numerator and denominator by $k^2$:
\[
\E(\mathbf{w}, k\mathbf{u}) = \frac{(\mathbf{w}^\top \mathbf{u})^2}{\|\mathbf{w}\|^2/k^2 - 2\mathbf{w}^\top\mathbf{u}/k + 1 + \varepsilon/k^2} \to (\mathbf{w}^\top \mathbf{u})^2 = \|\mathbf{w}\|^2 \cos^2\theta. \qed
\]

\subsection{Proof of Proposition~\ref{prop:stable-learning} (Gradient Decay)}
\label{sec:stable_learning_proof}

From Theorem~\ref{thm:gradient}, with $s=\langle \mathbf{x},\mathbf{w}\rangle$ and $D=\varepsilon+\|\mathbf{x}-\mathbf{w}\|_2^2$,
\[
\|\nabla_{\mathbf{x}} \E\|
\le \frac{2|s|}{D}\left(\|\mathbf{w}\|_2+\frac{|s|\,\|\mathbf{x}-\mathbf{w}\|_2}{D}\right).
\]
As $\|\mathbf{x}\|\to\infty$ with fixed $\mathbf{w}$, we have $|s|=O(\|\mathbf{x}\|)$, $\|\mathbf{x}-\mathbf{w}\|_2=O(\|\mathbf{x}\|)$, and $D=O(\|\mathbf{x}\|^2)$, hence the right-hand side is $O(1/\|\mathbf{x}\|)\to 0$. \qed

\subsection{Proof of Corollary~\ref{cor:dimensional-scaling} (Dimensional Scaling)}
\label{sec:dimensional_scaling_proof}

Assume $\mathbf{x},\mathbf{w}\in\mathbb{R}^d$ have i.i.d.\ zero-mean coordinates with $\mathrm{Var}(x_i)=\mathrm{Var}(w_i)=\sigma^2$ independent of $d$, and assume in addition that $\mathbf{x}$ and $\mathbf{w}$ are independent. Then
\[
\mathbb{E}\big[(\mathbf{w}^\top \mathbf{x})^2\big]
=
\sum_{i=1}^d \mathbb{E}[w_i^2 x_i^2]
=
d\,\sigma^4
=
\Theta(d),
\]
since cross-terms vanish by independence and zero mean. Moreover,
\[
\mathbb{E}\big[\|\mathbf{w}-\mathbf{x}\|_2^2\big]
=
\mathbb{E}\big[\|\mathbf{w}\|_2^2\big]+\mathbb{E}\big[\|\mathbf{x}\|_2^2\big]-2\mathbb{E}[\mathbf{w}^\top\mathbf{x}]
=
2d\sigma^2
=
\Theta(d).
\]
The above shows the numerator and denominator scale linearly in $d$ in expectation, but to control the expectation of the ratio we use Cauchy--Schwarz.
Assume in addition the coordinates are sub-Gaussian with parameter independent of $d$ (hence have finite fourth moments).
Let $U := (\mathbf{w}^\top\mathbf{x})^2$ and $V := \|\mathbf{w}-\mathbf{x}\|_2^2$, so $\E = U/(V+\varepsilon)$.
Then $\mathbb{E}[U^2]=\mathbb{E}[(\mathbf{w}^\top\mathbf{x})^4]=\mathcal{O}(d^2)$.

\medskip
\noindent
\textbf{Bounding the inverse square moment.}
Let $z_i := w_i-x_i$. By independence and sub-Gaussianity, $(z_i)_{i=1}^d$ are i.i.d.\ mean-zero sub-Gaussian with $\mathbb{E}[z_i^2]=2\sigma^2$, and
\[
V=\sum_{i=1}^d z_i^2.
\]
Since $z_i^2-\mathbb{E}[z_i^2]$ is sub-exponential, Bernstein's inequality implies there exists $c>0$ (independent of $d$) such that
\[
\mathbb{P}\!\left(V \le \tfrac{1}{2}\mathbb{E}[V]\right)
=
\mathbb{P}\!\left(V \le d\sigma^2\right)
\le e^{-cd}.
\]
Therefore, for fixed $\varepsilon>0$,
\[
\mathbb{E}\big[(V+\varepsilon)^{-2}\big]
=
\mathbb{E}\big[(V+\varepsilon)^{-2}\mathbf{1}_{\{V\ge d\sigma^2\}}\big]
\;+\;
\mathbb{E}\big[(V+\varepsilon)^{-2}\mathbf{1}_{\{V< d\sigma^2\}}\big]
\le
\frac{1}{(d\sigma^2)^2}+\frac{1}{\varepsilon^2}e^{-cd}
=
\mathcal{O}(d^{-2}).
\]
Therefore
\[
\mathbb{E}[\E]
=
\mathbb{E}\!\left[\frac{U}{V+\varepsilon}\right]
\le
\sqrt{\mathbb{E}[U^2]}\;\sqrt{\mathbb{E}[(V+\varepsilon)^{-2}]}
=
\mathcal{O}(1),
\]
as $d\to\infty$. \qed

\subsection{Proof of Theorem~\ref{thm:rkhs-embedding} (RKHS Existence)}
\label{sec:rkhs_proof}

Since $k_{\E}$ is positive semi-definite on every compact set $K\subset\mathbb{R}^d$ (Theorem~\ref{thm:mercer}), the Moore--Aronszajn theorem \citep{aronszajn1950theory} guarantees the existence of a (unique up to isometry) RKHS $\mathcal{H}_K$ and feature map $\phi_K:K\to \mathcal{H}_K$ such that $k_{\E}(\mathbf{x}, \mathbf{y}) = \langle \phi_K(\mathbf{x}), \phi_K(\mathbf{y}) \rangle_{\mathcal{H}_K}$ for all $\mathbf{x},\mathbf{y}\in K$. \qed

\subsection{Proof of Proposition~\ref{prop:input-robustness} (Input Robustness)}
\label{sec:robustness_proof}

Assume $\|\mathbf{w}\|_2\le 1$ and $\mathbf{x},\mathbf{x}'$ lie in the unit ball. By the mean value theorem on the line segment between $\mathbf{x}$ and $\mathbf{x}'$ (which stays in the unit ball by convexity) and Lipschitz continuity (Proposition~\ref{prop:lipschitz}):
\[
|\E(\mathbf{w}, \mathbf{x}') - \E(\mathbf{w}, \mathbf{x})| \leq L \cdot \|\boldsymbol{\delta}\| = \left(\frac{2}{\varepsilon} + \frac{4}{\varepsilon^2}\right)\delta.
\]
In particular, for fixed $\varepsilon$ this is $O(\delta)$ (and for small $\varepsilon$ the constant scales as $O(\varepsilon^{-2})$). \qed

\subsection{Proof of Lemma~\ref{lem:analyticity} (Analyticity)}
\label{sec:analyticity_proof}

The \E-product is a ratio of polynomials where the denominator $\|\mathbf{w} - \mathbf{x}\|^2 + \varepsilon \geq \varepsilon > 0$ is bounded away from zero. Ratios of polynomials with non-vanishing denominators are real-analytic. \qed

\subsection{Justification for Remark~\ref{rem:optimal-epsilon} (Optimal \texorpdfstring{$\varepsilon$}{epsilon})}
\label{sec:epsilon_justification}

For input data with additive noise $\mathbf{n} \sim \mathcal{N}(0, \sigma^2 \mathbf{I})$, the noise contribution to $\|\mathbf{w} - \mathbf{x}\|^2$ is $O(d\sigma^2)$ in expectation. Setting $\varepsilon^* \propto d\sigma^2$ ensures the noise floor matches the stability constant, maximizing the signal-to-noise ratio of gradients.

\subsection{Topological Properties of Neural Activation Functions}
\label{sec:topology_appendix}

\begin{theorem}[Non-Homeomorphism of Standard Activations]\label{thm:activation-topology}
Let $T: \mathbb{R}^d \to \mathbb{R}^m$ be defined by $T(\mathbf{x}) = \phi(\mathbf{W}\mathbf{x} + \mathbf{b})$ where $\mathbf{W} \in \mathbb{R}^{m \times d}$, $\mathbf{b} \in \mathbb{R}^m$, and $\phi$ is an element-wise activation function. Then:
\begin{enumerate}
    \item If $\phi = \text{ReLU}$ and there exist two distinct inputs $\mathbf{x}_1\neq\mathbf{x}_2$ such that $\mathbf{W}\mathbf{x}_1+\mathbf{b}\le 0$ and $\mathbf{W}\mathbf{x}_2+\mathbf{b}\le 0$ coordinatewise, then $T$ is not injective.
    \item If $\phi \in \{\text{sigmoid}, \tanh\}$, then $T$ fails to be bi-Lipschitz. For inputs $\mathbf{z}_1, \mathbf{z}_2$ in the saturation regime ($|z| \to \infty$), $\|\phi(\mathbf{z}_1) - \phi(\mathbf{z}_2)\| \to 0$ regardless of $\|\mathbf{z}_1 - \mathbf{z}_2\|$.
    \item In particular, in the ReLU case above, $T$ is not a homeomorphism onto its image (since it is not injective). In the sigmoid/tanh case, $T$ can be a homeomorphism onto its image when $\mathbf{W}$ is injective, but it is not a bi-Lipschitz embedding (metric structure can be arbitrarily compressed in saturation).
\end{enumerate}
\end{theorem}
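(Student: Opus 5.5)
The statement has three parts, and I will prove each directly from the definitions with explicit constructions.

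\textbf{Part 1 (ReLU).} Let $\mathbf{x}_1\neq\mathbf{x}_2$ with $\mathbf{W}\mathbf{x}_1+\mathbf{b}\le 0$ and $\mathbf{W}\mathbf{x}_2+\mathbf{b}\le 0$ coordinatewise. Then $\mathrm{ReLU}$ sends every coordinate of both pre-activations to $0$. Hence $T(\mathbf{x}_1)=T(\mathbf{x}_2)=\mathbf{0}$, and $T$ is not injective.

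It is worth noting when the hypothesis holds. The set $\{\mathbf{x}:\mathbf{W}\mathbf{x}+\mathbf{b}\le 0\}$ is a polyhedron. It contains two distinct points whenever it is nonempty and not a singleton, for instance when $m<d$ (the polyhedron then contains a translate of $\ker\mathbf{W}$) or when it has nonempty interior. This is only a remark: the hypothesis is assumed, so no further work is needed.

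\textbf{Part 2 (sigmoid, $\tanh$).} The key fact is that $\phi'(z)\to 0$ as $|z|\to\infty$ for both activations. Bi-Lipschitz would require a constant $c>0$ with $\|T(\mathbf{x})-T(\mathbf{y})\|\ge c\|\mathbf{x}-\mathbf{y}\|$ for all $\mathbf{x},\mathbf{y}$. I will violate this lower bound as follows.
\begin{enumerate}
\item If $\mathbf{W}$ has a nontrivial kernel, pick $\mathbf{v}\in\ker\mathbf{W}\setminus\{0\}$. Then $T(\mathbf{x}+\mathbf{v})=T(\mathbf{x})$, so the lower bound already fails.
\item Otherwise $\mathbf{W}\neq 0$, so choose $\mathbf{v}$ with $\mathbf{W}\mathbf{v}\neq 0$. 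Set $\mathbf{x}_t=t\mathbf{v}$ and $\mathbf{y}_t=(t+1)\mathbf{v}$, so $\|\mathbf{x}_t-\mathbf{y}_t\|=\|\mathbf{v}\|$ is fixed.
\item Each coordinate of the pre-activation is $(\mathbf{W}\mathbf{v})_i\,t+b_i$. If $(\mathbf{W}\mathbf{v})_i\neq 0$, this tends to $\pm\infty$ as $t\to\infty$. If $(\mathbf{W}\mathbf{v})_i=0$, the coordinate is the same for $\mathbf{x}_t$ and $\mathbf{y}_t$ and contributes zero difference.
\item For the saturating coordinates, the mean value theorem gives $|\phi(z_1)-\phi(z_2)|\le\sup_{[z_1,z_2]}|\phi'|\cdot|z_1-z_2|$. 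The interval length is $|(\mathbf{W}\mathbf{v})_i|$, which is fixed, and the interval moves off to infinity, so the supremum of $|\phi'|$ tends to $0$.
\end{enumerate}
Hence $\|T(\mathbf{x}_t)-T(\mathbf{y}_t)\|\to 0$ while $\|\mathbf{x}_t-\mathbf{y}_t\|=\|\mathbf{v}\|$, so no lower Lipschitz constant exists. This is the precise content of the informal phrase ``regardless of $\|\mathbf{z}_1-\mathbf{z}_2\|$'': the separation $\mathbf{z}_1-\mathbf{z}_2$ is held fixed while both points move into saturation.

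\textbf{Part 3.} In the ReLU case, non-injectivity from Part 1 rules out a homeomorphism onto the image.

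In the sigmoid/$\tanh$ case, assume $\mathbf{W}$ is injective. The map $T$ is the composition of an affine embedding $\mathbf{x}\mapsto\mathbf{W}\mathbf{x}+\mathbf{b}$ with the coordinatewise homeomorphism $\phi^{\times m}:\mathbb{R}^m\to(0,1)^m$ (respectively $(-1,1)^m$). Since $\phi^{\times m}$ is a homeomorphism, its restriction to the closed affine image is a homeomorphism onto its image. The affine map is itself a homeomorphism onto its closed image because $\mathbf{W}$ is injective. So $T$ is a homeomorphism onto its image, and Part 2 shows it is still not bi-Lipschitz.

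\textbf{Main obstacle.} The only delicate point is making Part 2 rigorous. This means choosing a direction $\mathbf{v}$ along which the pre-activations saturate while the input separation stays fixed, and handling the coordinates where $(\mathbf{W}\mathbf{v})_i=0$; steps 1–4 above address both.
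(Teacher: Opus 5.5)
Your proposal is correct and follows the same approach as the paper: in the ReLU case both pre-activations are clipped to $\mathbf{0}$, and in the sigmoid/$\tanh$ case the decay of $\phi'$ in saturation, combined with the mean value theorem (the paper uses the explicit bound $|\sigma(z_1)-\sigma(z_2)|\le e^{-\min(z_1,z_2)}|z_1-z_2|$), rules out any lower Lipschitz constant. Your write-up is more complete than the paper's, which argues only in pre-activation space and only remarks that a homeomorphism is not ruled out: your ray $\mathbf{x}_t=t\mathbf{v}$, $\mathbf{y}_t=(t+1)\mathbf{v}$ with $\mathbf{W}\mathbf{v}\neq 0$ (together with the $\ker\mathbf{W}\neq\{0\}$ case) carries the compression back to input space at fixed separation, and your composition argument actually proves that $T$ is a homeomorphism onto its image when $\mathbf{W}$ is injective.
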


\begin{proof}
(1) \emph{ReLU non-injectivity under clipping}: For $\phi(z)=\max(0,z)$, if $(\mathbf{W}\mathbf{x}+\mathbf{b})_i\le 0$ then the $i$-th output coordinate equals $0$. If there exist $\mathbf{x}_1\neq \mathbf{x}_2$ such that $\mathbf{W}\mathbf{x}_1+\mathbf{b}\le 0$ and $\mathbf{W}\mathbf{x}_2+\mathbf{b}\le 0$ coordinatewise, then $T(\mathbf{x}_1)=T(\mathbf{x}_2)=\mathbf{0}$, hence $T$ is not injective.

(2) \emph{Sigmoid/tanh saturation}: Consider $\sigma(z) = 1/(1 + e^{-z})$. For $z \to +\infty$, $\sigma(z) \to 1$ with $|\sigma(z_1) - \sigma(z_2)| \leq e^{-\min(z_1, z_2)}|z_1 - z_2|$ for large $z_1, z_2$. Thus the Lipschitz constant in the saturation regime approaches zero—distances are compressed. Similarly for $\tanh(z) = 2\sigma(2z) - 1$.

(3) \emph{Topological vs metric structure}: By (1), ReLU can fail injectivity, hence cannot be a homeomorphism onto its image. By (2), sigmoid/tanh fail bi-Lipschitz (no uniform lower Lipschitz bound), which is a metric distortion statement and does not by itself preclude homeomorphism when $\mathbf{W}$ is injective.
\end{proof}

\begin{remark}[Information loss under non-invertible activations (informal)]
By the data processing inequality, representations obtained by applying a deterministic non-invertible nonlinearity (e.g.\ ReLU clipping) cannot increase information about the input. Making this statement fully rigorous requires specifying a probabilistic model (often with additive noise) to avoid pathologies of mutual information for continuous deterministic transforms.
\end{remark}

\begin{remark}[Contrast with \E-Product]
The \E-product achieves non-linearity through its rational structure $\mathcal{K}_\E(\mathbf{w}, \mathbf{x}) = \frac{(\mathbf{w}^\top\mathbf{x})^2}{\|\mathbf{w} - \mathbf{x}\|^2 + \varepsilon}$ without collapsing regions. For $\varepsilon > 0$, the mapping is:
\begin{itemize}
    \item Real-analytic (Lemma~\ref{lem:analyticity})
    \item Lipschitz on bounded sets (Proposition~\ref{prop:lipschitz})
    \item Self-regulating with bounded output (Proposition~\ref{prop:self-regulation})
\end{itemize}
The denominator $\varepsilon > 0$ prevents singularities, while the squared numerator provides non-linearity without hard thresholding.
\end{remark}

\subsection{Proof of Theorem~\ref{thm:mercer} (Mercer Property)}
\label{sec:mercer_proof}

Fix $\varepsilon>0$ and let $K\subset\mathbb R^d$ be compact.

\medskip
\noindent
\textbf{Step 1: Integral representation.}
For any $a>0$,
\[
\frac{1}{a}=\int_0^\infty e^{-ta}\,dt.
\]
Since $\|\mathbf{x}-\mathbf{w}\|^2+\varepsilon>0$ for all $\mathbf{x},\mathbf{w}\in\mathbb R^d$, we apply this
identity with $a=\|\mathbf{x}-\mathbf{w}\|^2+\varepsilon$ to obtain
\begin{equation}
\label{eq:laplace}
\frac{1}{\|\mathbf{x}-\mathbf{w}\|^2+\varepsilon}
=
\int_0^\infty e^{-t\varepsilon}e^{-t\|\mathbf{x}-\mathbf{w}\|^2}\,dt .
\end{equation}
Multiplying by $(\mathbf{x}^\top \mathbf{w})^2$ yields
\begin{equation}
\label{eq:yat_integral}
k_{\E}(\mathbf{x},\mathbf{w})
=
\int_0^\infty
(\mathbf{x}^\top \mathbf{w})^2 e^{-t\varepsilon}e^{-t\|\mathbf{x}-\mathbf{w}\|^2}\,dt .
\end{equation}

\medskip
\noindent
\textbf{Step 2: Positive definite components.}
For each $t>0$, the Gaussian kernel
\[
g_t(\mathbf{x},\mathbf{w})=e^{-t\|\mathbf{x}-\mathbf{w}\|^2}
\]
is positive definite on $\mathbb R^d$ \citep{scholkopf2002learning}.
The polynomial kernel
\[
p(\mathbf{x},\mathbf{w})=(\mathbf{x}^\top \mathbf{w})^2
\]
is also positive definite, since
\[
(\mathbf{x}^\top \mathbf{w})^2
=
\langle \mathbf{x}\otimes \mathbf{x},\; \mathbf{w}\otimes \mathbf{w}\rangle,
\]
which is the linear kernel associated with the feature map
$\Phi(\mathbf{x})=\mathbf{x}\otimes \mathbf{x}$.

\medskip
\noindent
\textbf{Step 3: Product kernels.}
The pointwise product of positive definite kernels is positive definite \citep{scholkopf2002learning}.
Hence, for each $t>0$, the kernel
\[
k_t(\mathbf{x},\mathbf{w})
:=
(\mathbf{x}^\top \mathbf{w})^2 e^{-t\|\mathbf{x}-\mathbf{w}\|^2}
\]
is positive definite on $\mathbb R^d$.
Multiplication by the positive scalar $e^{-t\varepsilon}$ preserves positive
definiteness.

\medskip
\noindent
\textbf{Step 4: Uniform domination on compact sets.}
Since $K$ is compact, there exists $M>0$ such that $\|\mathbf{x}\|\le M$ for all
$\mathbf{x}\in K$.
For all $\mathbf{x},\mathbf{w}\in K$ and all $t>0$,
\[
0\le (\mathbf{x}^\top \mathbf{w})^2 e^{-t\|\mathbf{x}-\mathbf{w}\|^2}
\le \|\mathbf{x}\|^2\|\mathbf{w}\|^2
\le M^4.
\]
Therefore,
\[
0\le (\mathbf{x}^\top \mathbf{w})^2 e^{-t\varepsilon}e^{-t\|\mathbf{x}-\mathbf{w}\|^2}
\le M^4 e^{-t\varepsilon},
\]
and the dominating function $t\mapsto M^4 e^{-t\varepsilon}$ belongs to
$L^1(0,\infty)$.

\medskip
\noindent
\textbf{Step 5: Preservation of positive definiteness under integration.}
Let $\{\mathbf{x}_i\}_{i=1}^n\subset K$ and $\{c_i\}_{i=1}^n\subset\mathbb R$.
For each $t>0$, since $k_t$ is positive definite,
\[
\sum_{i,j=1}^n c_i c_j k_t(\mathbf{x}_i,\mathbf{x}_j)\ge 0.
\]
By Tonelli’s theorem, justified by the domination in Step~4, we may
interchange summation and integration:
\begin{align*}
\sum_{i,j=1}^n c_i c_j k_{\E}(\mathbf{x}_i,\mathbf{x}_j)
&=
\int_0^\infty
\sum_{i,j=1}^n c_i c_j
(\mathbf{x}_i^\top \mathbf{x}_j)^2 e^{-t\varepsilon}e^{-t\|\mathbf{x}_i-\mathbf{x}_j\|^2}
\,dt
\\
&\ge 0.
\end{align*}
Thus $k_{\E}$ is positive definite on $K$.

\medskip
\noindent
\textbf{Step 6: Continuity and Mercer property.}
For each fixed $t>0$, the integrand in Eq.~\eqref{eq:yat_integral} is continuous
in $(\mathbf{x},\mathbf{w})$.
By the domination established in Step~4, the integral converges uniformly on
$K\times K$, implying that $k_{\E}$ is continuous.
The kernel is symmetric by construction.

Since $k_{\E}$ is symmetric, continuous, and positive definite on
the compact set $K$, Mercer's theorem applies \citep{mercer1909functions, scholkopf2002learning}.
Therefore, $k_{\E}$ is a Mercer kernel on $K$. \qed

\subsection{Proof Sketch of Theorem~\ref{thm:uat-external-bias} (Universal Approximation)}
\label{sec:uat_proof}

Let $\mathcal{X}\subset\mathbb{R}^d$ be compact. The function class $\mathcal{F}$ consists of $g(\mathbf{x};\mathbf{w},b) = \frac{(\mathbf{x}^\top \mathbf{w} + b)^2}{\|\mathbf{x}-\mathbf{w}\|^2+\varepsilon}$.

\emph{Step 1: Generating the IMQ kernel.} Differentiating twice with respect to bias yields:
\[
\frac{\partial^2}{\partial b^2} g(\mathbf{x}; \mathbf{w}, b) = 2 (\varepsilon + \|\mathbf{x} - \mathbf{w}\|^2)^{-1} = 2 K_{\text{IMQ}}(\mathbf{x}, \mathbf{w}).
\]
Equivalently, for any fixed step $h>0$,
\[
\frac{g(\mathbf{x};\mathbf{w},b+h)-2g(\mathbf{x};\mathbf{w},b)+g(\mathbf{x};\mathbf{w},b-h)}{h^2}
=
\frac{2}{\varepsilon+\|\mathbf{x}-\mathbf{w}\|^2}
=
2K_{\text{IMQ}}(\mathbf{x},\mathbf{w}),
\]
so IMQ translates belong to the span of $\E$-atoms (indeed, they are exactly a 3-term linear combination for any fixed $h>0$).

\emph{Step 2: Density of IMQ Networks.} The inverse multiquadric kernel $K_{\text{IMQ}}(\mathbf{x},\mathbf{w})=(\varepsilon+\|\mathbf{x}-\mathbf{w}\|^2)^{-1}$ is a universal kernel on compact subsets of $\mathbb{R}^d$ (its RKHS is dense in $C(\mathcal{X})$ under the sup norm); see e.g.\ \citet{steinwart2001influence,micchelli2006universal,wendland2005}. Consequently, finite linear combinations of translates $K_{\text{IMQ}}(\cdot,\mathbf{w}_i)$ are dense in $C(\mathcal{X})$.

\emph{Conclusion.} Since $\overline{\mathcal{F}}$ contains the span of IMQ kernels, $\mathcal{F}$ is dense in $C(\mathcal{X})$. \qed

\subsection{Extended Proof of Theorem~\ref{thm:rkhs-embedding} (RKHS Existence)}
\label{sec:rkhs_extended}

The existence follows from the Moore-Aronszajn theorem \citep{aronszajn1950theory} applied to the PSD kernel (Theorem~\ref{thm:mercer}).

\begin{remark}[Explicit feature map (sketch)]
Since $k_{\E}$ is positive definite (Theorem~\ref{thm:mercer}), the Moore--Aronszajn theorem guarantees the existence of a feature map into the associated RKHS. One may also obtain an explicit construction by combining the tensor feature map for $(\mathbf{x}^\top \mathbf{w})^2$ with the Laplace-mixture representation of $1/(\varepsilon+\|\mathbf{x}-\mathbf{w}\|^2)$ in Eq.~\eqref{eq:laplace} and forming the corresponding direct-integral Hilbert space. We omit the measure-theoretic details.
\end{remark}

\subsection{Neural Tangent Kernel Analysis}
\label{sec:ntk_analysis}

\begin{proposition}[NTK Limit of \E-Networks]\label{prop:ntk}
Consider a single-layer \E-network $f(\mathbf{x}; \boldsymbol{\theta}) = \sum_{i=1}^m \alpha_i \E(\mathbf{w}_i, \mathbf{x})$ with random initialization. In the infinite-width limit $m \to \infty$, the Neural Tangent Kernel is:
\[
K^{\mathrm{NTK}}(\mathbf{x}, \mathbf{x}') = \mathbb{E}_{\mathbf{w}}\left[\E(\mathbf{w}, \mathbf{x}) \cdot \E(\mathbf{w}, \mathbf{x}')\right] + \mathbb{E}_{\mathbf{w}}\left[\nabla_{\mathbf{w}} \E(\mathbf{w}, \mathbf{x})^\top \nabla_{\mathbf{w}} \E(\mathbf{w}, \mathbf{x}')\right].
\]
This kernel is positive definite and inherits the orthogonality-sensitivity of the \E-product.
\end{proposition}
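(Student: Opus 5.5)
We treat Proposition~\ref{prop:ntk} in the standard lazy-training framework. The trainable parameters are the output weights $\alpha_i$ and the prototypes $\mathbf{w}_i$. We initialize with the NTK scaling $f(\mathbf{x};\boldsymbol\theta)=\frac{1}{\sqrt m}\sum_i \alpha_i \E(\mathbf{w}_i,\mathbf{x})$, where $\mathbf{w}_i$ are i.i.d.\ Gaussian and $\alpha_i$ are i.i.d.\ zero-mean with unit variance, independent of the $\mathbf{w}_i$. Without this scaling, or an equivalent normalisation, the empirical tangent kernel diverges with $m$, so we make the normalisation explicit at the outset.

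\textbf{Step 1 (expression for the empirical kernel).} The empirical NTK is $\hat K(\mathbf{x},\mathbf{x}')=\langle\nabla_{\boldsymbol\theta}f(\mathbf{x}),\nabla_{\boldsymbol\theta}f(\mathbf{x}')\rangle$. It splits into an $\alpha$-part and a $\mathbf{w}$-part:
\[
\hat K=\frac1m\sum_i \E(\mathbf{w}_i,\mathbf{x})\E(\mathbf{w}_i,\mathbf{x}')+\frac1m\sum_i\alpha_i^2\,\nabla_{\mathbf{w}}\E(\mathbf{w}_i,\mathbf{x})^\top\nabla_{\mathbf{w}}\E(\mathbf{w}_i,\mathbf{x}').
\]

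\textbf{Step 2 (passage to the infinite-width limit).} We apply the strong law of large numbers to each sum. For this we need integrability of $\E(\mathbf{w},\mathbf{x})^2$ and $\|\nabla_{\mathbf{w}}\E\|^2$ under the Gaussian law of $\mathbf{w}$.
\begin{itemize}
\item For the first moment, we use Proposition~\ref{prop:self-regulation} together with $D\ge\varepsilon$. These give $\E(\mathbf{w},\mathbf{x})\le(\mathbf{w}^\top\mathbf{x})^2/\varepsilon$, which is a polynomial and hence integrable against a Gaussian.
\item For the gradient, Theorem~\ref{thm:gradient} with the roles of the arguments swapped gives $\|\nabla_{\mathbf{w}}\E\|\le \frac{2|s|}{\varepsilon}\bigl(\|\mathbf{x}\|+\frac{|s|\,\|\mathbf{w}-\mathbf{x}\|}{\varepsilon}\bigr)$. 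This is again polynomially bounded in $\mathbf{w}$.
\end{itemize}
Since $\alpha_i$ is independent of $\mathbf{w}_i$ and $\mathbb{E}\alpha_i^2=1$, the two sums converge almost surely to the two stated expectations. Uniformity over compact sets of $\mathbf{x}$ follows by combining these bounds with the local Lipschitz estimates from Proposition~\ref{prop:lipschitz}.

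The claim that the kernel stays constant during training is the usual lazy-training argument. Its key input is that the Hessian of $f$ with respect to $\boldsymbol\theta$ has operator norm $O(1/\sqrt m)$. This holds because each unit depends on its own parameters only and has bounded second derivatives on compacts, by Lemma~\ref{lem:analyticity}. We would cite the Jacot et al./Lee et al.\ argument rather than redo it.

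\textbf{Step 3 (positive definiteness).} Each term is a Gram-type expectation: for coefficients $c_j$,
\[
\sum_{j,k}c_jc_k K^{\mathrm{NTK}}(\mathbf{x}_j,\mathbf{x}_k)=\mathbb{E}_{\mathbf{w}}\Bigl[\bigl(\textstyle\sum_j c_j\E(\mathbf{w},\mathbf{x}_j)\bigr)^2\Bigr]+\mathbb{E}_{\mathbf{w}}\Bigl[\bigl\|\textstyle\sum_j c_j\nabla_{\mathbf{w}}\E(\mathbf{w},\mathbf{x}_j)\bigr\|^2\Bigr]\ge0.
\]
This gives positive semidefiniteness. Strict positive definiteness on distinct points requires that the functions $\mathbf{w}\mapsto\E(\mathbf{w},\mathbf{x}_j)$ be linearly independent on a set of positive Gaussian measure. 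By analyticity (Lemma~\ref{lem:analyticity}) it suffices that they are linearly independent as functions on $\mathbb{R}^d$. We would argue this from the singular behaviour near $\mathbf{w}=\mathbf{x}_j$ as $\varepsilon$ enters the denominator. Concretely, the denominators $\|\mathbf{w}-\mathbf{x}_j\|^2+\varepsilon$ have distinct complex zero sets, so the rational functions cannot cancel unless the numerators $(\mathbf{w}^\top\mathbf{x}_j)^2$ vanish identically, which happens only when $\mathbf{x}_j=0$.

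\textbf{Step 4 (orthogonality sensitivity).} Here we mean that $K^{\mathrm{NTK}}(\mathbf{x},\mathbf{x}')$ is small when $\mathbf{x}\perp\mathbf{x}'$. We only expect this in a weak sense, and we would state it as: the integrand vanishes on the set where $\mathbf{w}\perp\mathbf{x}$ or $\mathbf{w}\perp\mathbf{x}'$.

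The main obstacle is exactly Step 4, together with the strict positive definiteness in Step 3. For Gaussian $\mathbf{w}$, $\mathbb{E}[\E(\mathbf{w},\mathbf{x})\E(\mathbf{w},\mathbf{x}')]$ does not vanish when $\mathbf{x}\perp\mathbf{x}'$, because $(\mathbf{w}^\top\mathbf{x})^2(\mathbf{w}^\top\mathbf{x}')^2$ has positive mean. So the proof must either weaken the claim to this qualitative statement or assume structured initialization. We would first try to make the statement precise via the reduction above, and then note that the claim only holds for that reformulation.
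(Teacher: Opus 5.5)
Your Steps 1--3 follow the paper's route: split the tangent kernel into the $\alpha$-gradient part and the $\mathbf{w}$-gradient part, pass to expectations, and show each term is positive. Two of your choices fix real defects in the paper's version.

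\begin{itemize}
\item \textbf{Scaling.} The paper keeps $f=\sum_i\alpha_i\E(\mathbf{w}_i,\cdot)$ with $\alpha_i\sim\mathcal N(0,1/m)$. That normalises only the $\mathbf{w}$-gradient sum, while $\sum_{i=1}^m\E(\mathbf{w}_i,\mathbf{x})\E(\mathbf{w}_i,\mathbf{x}')$ still grows like $m$. Your $1/\sqrt m$ parametrisation with $\mathbb{E}\alpha_i^2=1$ is what actually yields the displayed formula with both terms at unit weight.
\item \textbf{Positivity.} The paper attributes positivity of the first term to the Mercer property of $\E$ (Theorem~\ref{thm:mercer}), but that theorem concerns a different kernel. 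Your Gram-expectation identity is the correct reason, and it needs no Mercer property at all.
\end{itemize}

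Since the paper's ``positive definite'' means PSD Gram matrices, your strict-PD discussion goes beyond the claim; the pole argument is fine for distinct nonzero points. A few citations are off. The bound $\E\le(\mathbf{w}^\top\mathbf{x})^2/\varepsilon$ comes from $D\ge\varepsilon$ alone, not from Proposition~\ref{prop:self-regulation}. Proposition~\ref{prop:lipschitz} assumes $\|\mathbf{w}\|_2\le 1$, so it cannot give uniformity under Gaussian $\mathbf{w}$. Neither uniformity nor lazy-training constancy is needed for the statement anyway.

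On the orthogonality clause, the paper's proof offers no argument. Your diagnosis is right, and the moment formula $\frac{\|\mathbf{x}\|^2\|\mathbf{x}'\|^2+2(\mathbf{x}^\top\mathbf{x}')^2}{d(d+2)}$ in Theorem~\ref{thm:orthogonal-ntk} already shows it: at orthogonality this is one third of the aligned value, not $O(1/d^2)$ of it.

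However, your refutation has a gap. Positivity of $T_1=\mathbb{E}_{\mathbf{w}}[\E(\mathbf{w},\mathbf{x})\E(\mathbf{w},\mathbf{x}')]$ does not imply $K^{\mathrm{NTK}}(\mathbf{x},\mathbf{x}')\neq 0$, because the gradient term $T_2$ has the opposite sign there. Write $s=\mathbf{w}^\top\mathbf{x}$, $s'=\mathbf{w}^\top\mathbf{x}'$, $D=\|\mathbf{w}-\mathbf{x}\|^2+\varepsilon$ and $D'=\|\mathbf{w}-\mathbf{x}'\|^2+\varepsilon$. When $\mathbf{x}^\top\mathbf{x}'=0$, a direct computation gives
\[
\nabla_{\mathbf{w}}\E(\mathbf{w},\mathbf{x})^\top\nabla_{\mathbf{w}}\E(\mathbf{w},\mathbf{x}')
=-\frac{4s^2s'^2}{D^2D'^2}\Bigl(\bigl\|\mathbf{w}-\tfrac{\mathbf{x}+\mathbf{x}'}{2}\bigr\|^2+\tfrac34\bigl(\|\mathbf{x}\|^2+\|\mathbf{x}'\|^2\bigr)+2\varepsilon\Bigr)\le 0 .
\]
So you must rule out cancellation quantitatively. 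One way is to take $\mathbf{w}\sim\mathcal N(0,\sigma^2 I)$ and let $\sigma\to\infty$ with $\mathbf{x},\mathbf{x}',\varepsilon,d$ fixed. For fixed $\mathbf{x}$, $\E(\cdot,\mathbf{x})$ is bounded and $\|\nabla_{\mathbf{w}}\E(\mathbf{w},\mathbf{x})\|$ is bounded and $O(1/\|\mathbf{w}\|)$. Dominated convergence then gives $T_2\to 0$ and $T_1\to\|\mathbf{x}\|^2\|\mathbf{x}'\|^2/(d(d+2))>0$. Hence $K^{\mathrm{NTK}}(\mathbf{x},\mathbf{x}')>0$ for orthogonal inputs in this regime, tending to one third of the value for an aligned pair with the same norms.

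Your proposed reformulation does not rescue the clause. The integrand vanishes on $\{\mathbf{w}\perp\mathbf{x}\}\cup\{\mathbf{w}\perp\mathbf{x}'\}$ for every pair $\mathbf{x},\mathbf{x}'$, and that set is Gaussian-null, so it says nothing about $\mathbf{x}\perp\mathbf{x}'$. It is cleaner to state that the clause is false as written and to prove it with the counterexample above.
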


\begin{proof}
The network output is $f(\mathbf{x}) = \sum_{i=1}^m \alpha_i \E(\mathbf{w}_i, \mathbf{x})$. The parameters are $\boldsymbol{\theta} = \cup_{i} \{\alpha_i, \mathbf{w}_i\}$. The tangent kernel is defined as $K^{\mathrm{NTK}}(\mathbf{x}, \mathbf{x}') = \langle \nabla_{\boldsymbol{\theta}} f(\mathbf{x}), \nabla_{\boldsymbol{\theta}} f(\mathbf{x}') \rangle$.
The gradients are:
\begin{align*}
\nabla_{\alpha_i} f(\mathbf{x}) &= \E(\mathbf{w}_i, \mathbf{x}), \\
\nabla_{\mathbf{w}_i} f(\mathbf{x}) &= \alpha_i \nabla_{\mathbf{w}} \E(\mathbf{w}_i, \mathbf{x}).
\end{align*}
The inner product sums over all $i=1 \dots m$:
\[
K_m(\mathbf{x}, \mathbf{x}') = \sum_{i=1}^m \E(\mathbf{w}_i, \mathbf{x})\E(\mathbf{w}_i, \mathbf{x}') + \sum_{i=1}^m \alpha_i^2 \langle \nabla_{\mathbf{w}} \E(\mathbf{w}_i, \mathbf{x}), \nabla_{\mathbf{w}} \E(\mathbf{w}_i, \mathbf{x}') \rangle.
\]
In the infinite width limit $m \to \infty$, assuming appropriate scaling $\alpha_i \sim \mathcal{N}(0, 1/m)$ or fixed readouts with $\alpha_i \sim \mathcal{O}(1/\sqrt{m})$, the sums converge to expectations over the initialization distribution of $\mathbf{w}$.
The first term corresponds to the covariance of the features (conjugate kernel), and the second term involves the gradients. Since $\E$ is a Mercer kernel (Theorem~\ref{thm:mercer}), it is positive semi-definite. The second term is a sum of inner products, also PSD. Thus $K^{\mathrm{NTK}} \succeq 0$.
\end{proof}

\begin{remark}[Training Dynamics in the NTK Regime]\label{rem:ntk-dynamics}
In the infinite-width limit, gradient descent on \E-networks converges to kernel regression with $K^{\mathrm{NTK}}$. The orthogonality-sensitive nature of the \E-product carries over to the NTK, meaning that the limiting kernel naturally encourages orthogonal representations for different-class inputs. See Section~\ref{sec:orthogonality_ntk} for a formal treatment and Section~\ref{sec:generalization_bounds} for associated generalization guarantees.
\end{remark}

\begin{theorem}[Convergence of Deep \E-Networks in Lazy Regime]\label{thm:deep-ntk-convergence}
Consider a deep neural network $f_\theta: \mathbb{R}^{d_{\mathrm{in}}} \to \mathbb{R}^d$ with $L$ hidden layers of width $m$. Let the embeddings $\mathbf{e}_i = f_\theta(\mathbf{x}_i)$ be trained under $\mathcal{L}_{\mathrm{AFCL}}$. Under the following assumptions:
\begin{enumerate}
    \item \textbf{Infinite-width limit:} $m \to \infty$ with fixed depth $L$.
    \item \textbf{Lazy training:} Learning rate $\eta$ scales as $\eta = O(1/m)$ ensuring the NTK stays approximately constant~\citep{jacot2018neural}.
    \item \textbf{NTK is non-degenerate:} $\lambda_{\min}(\Theta^\E) > 0$ on the training data.
\end{enumerate}
The embedding dynamics converge to a critical point of $\mathcal{L}_{\mathrm{AFCL}}$.
\end{theorem}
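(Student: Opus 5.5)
The proof will follow the standard lazy-training argument of \citet{jacot2018neural}, applied to the \E-network's NTK $\Theta^{\E}$. The function-space (embedding) dynamics of the training set are reduced to a finite-dimensional gradient flow driven by a fixed positive definite matrix. Write $\mathbf{E}(t)=(\mathbf{e}_1(t),\dots,\mathbf{e}_n(t))$ for the stacked training embeddings. Under gradient flow on $\theta$, the chain rule gives
\[
\frac{d\mathbf{E}}{dt} = -\eta\,\Theta_t\,\nabla_{\mathbf{E}}\mathcal{L}_{\mathrm{AFCL}}(\mathbf{E}),
\qquad \Theta_t = J_\theta(t) J_\theta(t)^\top,
\]
where $J_\theta$ is the Jacobian of the embeddings with respect to the parameters. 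The plan is as follows.

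\begin{enumerate}
\item Invoke Assumption~1 together with the analyticity and Lipschitz regularity of the \E-layers (Lemma~\ref{lem:analyticity}, Proposition~\ref{prop:lipschitz}), and the NTK limit of Proposition~\ref{prop:ntk} extended layerwise. These show that at initialization $\Theta_0\to\Theta^{\E}$ as $m\to\infty$.
\item Use Assumption~2 ($\eta=O(1/m)$) to show that $\sup_t\|\Theta_t-\Theta_0\|\to 0$. The approach is to bound the parameter displacement $\|\theta(t)-\theta(0)\|$ by $O(1/\sqrt m)$. Because the \E-atom is smooth with bounded derivatives on bounded sets, the Hessian of the network has operator norm $O(1/\sqrt m)$, and this controls the Jacobian drift.
\item Treat the limiting ODE $\dot{\mathbf{E}}=-\eta\,\Theta^{\E}\nabla\mathcal{L}_{\mathrm{AFCL}}(\mathbf{E})$ as a preconditioned gradient flow. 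Since $\Theta^{\E}\succ 0$ by Assumption~3 (and $\Theta^{\E}\succeq0$ holds in general by Proposition~\ref{prop:ntk}), the loss is a Lyapunov function:
\[
\frac{d}{dt}\mathcal{L}_{\mathrm{AFCL}}(\mathbf{E}(t)) = -\eta\,\nabla\mathcal{L}^\top\Theta^{\E}\nabla\mathcal{L} \le -\eta\,\lambda_{\min}(\Theta^{\E})\|\nabla\mathcal{L}\|^2.
\]
\item Assuming $\mathcal{L}_{\mathrm{AFCL}}$ is bounded below, integrate this inequality to get $\int_0^\infty\|\nabla\mathcal{L}\|^2\,dt<\infty$.
\item Conclude via a LaSalle / Barbalat argument. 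Boundedness of the trajectory comes from sublevel sets, and uniform continuity of $\nabla\mathcal{L}$ along it from the analyticity and Lipschitz bounds. Together these give $\nabla\mathcal{L}(\mathbf{E}(t))\to0$, so every limit point is a critical point. If convergence to a single point is wanted rather than to the critical set, I would invoke the Łojasiewicz inequality, which applies because $\mathcal{L}_{\mathrm{AFCL}}$ composed with \E-products is real-analytic.
\item Transfer the conclusion from the limiting flow to finite $m$ with a Grönwall comparison, using the uniform closeness of $\Theta_t$ to $\Theta^{\E}$ from step 2.
\end{enumerate}

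The main obstacle is step 2, the uniform-in-time stability of the NTK. Unlike ReLU networks, the \E-layers have no global Lipschitz constant; Proposition~\ref{prop:lipschitz} only holds on bounded domains. A priori bounds on the activations must therefore be maintained along training. I would first try a bootstrap (continuity) argument. Assume the weights stay in a ball of radius $R$ around initialization up to time $T$. Use the self-regulation property (Proposition~\ref{prop:self-regulation}) and gradient decay (Proposition~\ref{prop:stable-learning}) to get width-independent bounds on the layer outputs and their derivatives. Then show that the displacement is strictly smaller than $R$, so that $T=\infty$.

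A secondary subtlety is that $\mathcal{L}_{\mathrm{AFCL}}$ is not specified here. I would state explicitly that it must be bounded below and have a locally Lipschitz gradient; without these properties the critical-point conclusion cannot be guaranteed.
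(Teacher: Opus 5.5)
Your skeleton (frozen NTK, preconditioned flow $\dot{\mathbf E}=-\Theta\nabla\mathcal L$, $\mathcal L$ as a Lyapunov function, \L{}ojasiewicz for a single limit) is the paper's. Your steps 3--5 are more careful than the paper's step~4, which jumps from ``$\mathcal L$ is bounded below and decreasing'' to ``the trajectory converges''.

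\textbf{Bounded trajectory.} Your step~5 and the paper's share a hole. ``Boundedness of the trajectory comes from sublevel sets'' requires $\mathcal L_{\mathrm{AFCL}}$ to have bounded sublevel sets. That is neither a hypothesis of the theorem nor on your list (bounded below, locally Lipschitz gradient). Without a bounded trajectory, LaSalle and \L{}ojasiewicz have no limit point to act on, and the conclusion can fail. In one dimension, $\mathcal L(u)=e^{-u}$ is analytic, bounded below, and has locally Lipschitz gradient. Yet for any $\Theta>0$ the flow $\dot u=\Theta e^{-u}$ gives $u(t)=\log(e^{u_0}+\Theta t)\to\infty$, with $\nabla\mathcal L\to 0$ and no critical point anywhere. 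Losses that reward pushing embeddings apart typically behave this way. So coercivity, or an a priori bound on $\mathbf E(t)$, must be added explicitly. Likewise, real-analyticity of $\mathcal L_{\mathrm{AFCL}}$ in $\mathbf E$ is an assumption on the unspecified loss, not a consequence of Lemma~\ref{lem:analyticity}.

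\textbf{Steps 2 and 6.} These aim at a finite-width, uniform-in-time result that your tools cannot deliver.
From $\dot\theta=-\eta J^\top\nabla_{\mathbf E}\mathcal L$ and $\frac{d}{dt}\mathcal L=-\eta\|J^\top\nabla_{\mathbf E}\mathcal L\|^2$, Cauchy--Schwarz gives only $\|\theta(T)-\theta(0)\|\le\sqrt{\eta T(\mathcal L(0)-\inf\mathcal L)}$. This is small on each finite horizon but grows with $T$.
Closing your bootstrap for all $T$ needs a finite-length bound such as $\int_0^\infty\|\nabla\mathcal L\|\,dt<\infty$. Step~4 gives only $\int_0^\infty\|\nabla\mathcal L\|^2\,dt<\infty$, which allows $\|\nabla\mathcal L\|\sim 1/t$ and hence logarithmic parameter drift.
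For squared loss, $\lambda_{\min}>0$ yields exponential decay and closes the argument. A nonconvex $\mathcal L_{\mathrm{AFCL}}$ need not satisfy any such Polyak--\L{}ojasiewicz inequality.
\L{}ojasiewicz would give finite length for the frozen-kernel trajectory, but transferring it to finite $m$ is the same uniform-in-time problem again. Likewise, Gr\"onwall controls the gap to the frozen-kernel flow only on $[0,T]$, with constants like $e^{CT}$. It therefore cannot carry a $t\to\infty$ limit over to finite $m$, and if the limit point is a saddle, finite-$m$ perturbations can leave it.

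The theorem does not need any of this. Under Assumption~1 you are at $m=\infty$. There, finite-horizon kernel stability on every $[0,T]$ already identifies the limiting embedding trajectory on all of $[0,\infty)$ with the frozen-kernel ODE. That is all the paper uses, so drop step~6 and weaken step~2 to finite horizons.

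\textbf{Two smaller points.}
\begin{enumerate}
\item Jacot et al.\ prove kernel convergence and stability for affine-then-pointwise layers. Your layerwise extension of Proposition~\ref{prop:ntk} and the $O(1/\sqrt m)$ Hessian bound for Yat-layers are therefore assertions, not consequences of that result. The paper makes the same leap by citation.
\item With $\eta=O(1/m)$, it is $\eta JJ^\top$, not $JJ^\top$, that must converge to a finite kernel. Otherwise your limiting ODE $\dot{\mathbf E}=-\eta\Theta\nabla\mathcal L$ degenerates to $\dot{\mathbf E}=0$.
\end{enumerate}
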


\begin{proof}
\emph{Step 1: NTK convergence (Assumption 1--2).} In the infinite-width limit with appropriately scaled learning rate, the NTK:
\[
\Theta^\E_{ij} = \left\langle \frac{\partial f_\theta(\mathbf{x}_i)}{\partial \theta}, \frac{\partial f_\theta(\mathbf{x}_j)}{\partial \theta} \right\rangle
\]
converges to a deterministic positive semi-definite kernel at initialization and remains approximately constant throughout training~\citep{jacot2018neural}. This is the ``lazy training'' or ``kernel regime.''

\emph{Step 2: Preconditioned gradient flow.} With fixed $\Theta^\E$, the embedding dynamics become:
\[
\dot{\mathbf{E}} = -\Theta^\E \nabla_{\mathbf{E}} \mathcal{L}_{\mathrm{AFCL}}.
\]
This is a preconditioned gradient flow with the NTK as the preconditioner.

\emph{Step 3: Lyapunov analysis.} The loss $\mathcal{L}_{\mathrm{AFCL}}$ serves as a Lyapunov function:
\[
\frac{d\mathcal{L}}{dt} = -\nabla \mathcal{L}^\top \Theta^\E \nabla \mathcal{L} \leq 0,
\]
since $\Theta^\E \succeq 0$. Under Assumption 3, strict decrease occurs whenever $\nabla \mathcal{L} \neq 0$.

\emph{Step 4: Convergence.} Since $\mathcal{L} \geq 0$ is bounded below and monotonically decreasing, the trajectory converges. By the analyticity of $\mathcal{L}_{\mathrm{AFCL}}$ in the \emph{embedding space} (Lemma~\ref{lem:analyticity}), Łojasiewicz's theorem guarantees convergence to a single critical point.
\end{proof}

\subsection{Generalization Bounds via RKHS Norms}
\label{sec:generalization_bounds}

The Mercer property of the \E-product (Theorem~\ref{thm:mercer}) enables explicit generalization bounds through the associated RKHS. We establish that \E-networks admit tighter generalization guarantees than ReLU networks under comparable conditions.

\begin{theorem}[RKHS Generalization Bound for \E-Networks]\label{thm:generalization}
Let $\mathcal{X} \subset \mathbb{R}^d$ be compact with $\sup_{\mathbf{x} \in \mathcal{X}} \|\mathbf{x}\| \leq R$, and let $\mathcal{H}_\E$ denote the RKHS induced by the \E-kernel $k_\E$ on $\mathcal{X}$. Consider a regression problem with $n$ i.i.d.\ samples $\{(\mathbf{x}_i, y_i)\}_{i=1}^n$ where $|y_i| \leq M$. Let $f^* \in \mathcal{H}_\E$ be the target function. Then the kernel ridge regression estimator $\hat{f}_\lambda$ with regularization $\lambda > 0$ satisfies:
\[
\mathbb{E}\left[\|f^* - \hat{f}_\lambda\|_{L^2}^2\right] \leq \frac{\|f^*\|_{\mathcal{H}_\E}^2}{\lambda n} + \lambda \|f^*\|_{\mathcal{H}_\E}^2 + \frac{M^2 \kappa_\E}{n},
\]
where $\kappa_\E = \sup_{\mathbf{x} \in \mathcal{X}} k_\E(\mathbf{x}, \mathbf{x}) = \sup_{\mathbf{x}} \|\mathbf{x}\|^4 / \varepsilon \leq R^4/\varepsilon$.
\end{theorem}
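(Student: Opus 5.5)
The plan is a bias--variance decomposition around the population regularized solution. Write $T:\mathcal{H}_\E\to\mathcal{H}_\E$ for the integral (covariance) operator $Tf=\mathbb{E}[f(\mathbf{x})k_\E(\cdot,\mathbf{x})]$ and $T_n$ for its empirical counterpart. Define $f_\lambda=(T+\lambda)^{-1}Tf^*$ and $\hat f_\lambda=(T_n+\lambda)^{-1}\frac1n\sum_i y_i k_\E(\cdot,\mathbf{x}_i)$. The existence of $\mathcal{H}_\E$ and of these operators (bounded, self-adjoint, positive, trace class) comes from Theorem~\ref{thm:mercer} and Theorem~\ref{thm:rkhs-embedding}. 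The bound $\|T\|,\|T_n\|\le\kappa_\E$ uses $k_\E(\mathbf{x},\mathbf{x})=\|\mathbf{x}\|^4/\varepsilon$, which is the same algebra as in Theorem~\ref{thm:max-similarity}. I would then split
\[
\|f^*-\hat f_\lambda\|_{L^2}^2\le 2\|f^*-f_\lambda\|_{L^2}^2+2\|f_\lambda-\hat f_\lambda\|_{L^2}^2 .
\]
To land on the stated constants I would instead expand the square and kill the cross term in expectation where possible, rather than paying the factor $2$.

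\textbf{Bias.} Since $\|h\|_{L^2}=\|T^{1/2}h\|_{\mathcal{H}_\E}$ for $h\in\mathcal{H}_\E$, I would use $f^*-f_\lambda=\lambda(T+\lambda)^{-1}f^*$ and the spectral calculus to get
\[
\|f^*-f_\lambda\|_{L^2}^2=\bigl\|\lambda T^{1/2}(T+\lambda)^{-1}f^*\bigr\|^2\le \sup_{\sigma\ge0}\frac{\lambda^2\sigma}{(\sigma+\lambda)^2}\,\|f^*\|_{\mathcal{H}_\E}^2\le\frac{\lambda}{4}\|f^*\|_{\mathcal{H}_\E}^2 .
\]
This gives the $\lambda\|f^*\|^2$ term with room to spare. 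The spare factor is meant to absorb the constant from the cross term.

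\textbf{Variance.} I would use the identity
\[
\hat f_\lambda-f_\lambda=(T_n+\lambda)^{-1}\frac1n\sum_i\xi_i,\qquad \xi_i=(y_i-f_\lambda(\mathbf{x}_i))k_\E(\cdot,\mathbf{x}_i)-T(f^*-f_\lambda).
\]
The $\xi_i$ are i.i.d.\ and mean zero (for the regression function $f^*$), so $\mathbb{E}\|\frac1n\sum\xi_i\|_{\mathcal{H}_\E}^2\le\frac1n\mathbb{E}[(y-f_\lambda(\mathbf{x}))^2k_\E(\mathbf{x},\mathbf{x})]$. I would split $y-f_\lambda=(y-f^*)+(f^*-f_\lambda)$:
\begin{itemize}
\item The noise part is bounded by $M^2\kappa_\E$. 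To avoid an extra $1/\lambda$, I would measure it in $L^2$ through $T^{1/2}(T+\lambda)^{-1}$ rather than $(T_n+\lambda)^{-1}$ in $\mathcal{H}_\E$.
\item The approximation part is bounded using $\|f^*-f_\lambda\|_\infty\le\sqrt{\kappa_\E}\,\|f^*\|_{\mathcal{H}_\E}$ together with $\|(T_n+\lambda)^{-1}\|\le1/\lambda$. This produces the $\|f^*\|^2/(\lambda n)$ term.
\end{itemize}

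\textbf{Main obstacle.} The hard step is replacing $(T_n+\lambda)^{-1}$ by $(T+\lambda)^{-1}$ so that the noise term reads $M^2\kappa_\E/n$ with no $\lambda^{-1}$ and no hidden constant. The first thing I would try is the resolvent identity
\[
(T_n+\lambda)^{-1}=(T+\lambda)^{-1}+(T_n+\lambda)^{-1}(T-T_n)(T+\lambda)^{-1}
\]
together with $\mathbb{E}\|T-T_n\|_{HS}^2\le\kappa_\E^2/n$. The correction then only contributes higher-order terms in $1/n$, which I would fold into the $\|f^*\|^2/(\lambda n)$ term.

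I am not confident that the exact constants survive. The normalizations in the statement are not scale-consistent: the $\|f^*\|^2/(\lambda n)$ term lacks the factor $\kappa_\E$ that dimensional analysis suggests. If the clean constants cannot be reached, I would make explicit the normalization under which they hold, namely $\kappa_\E\le1$ after rescaling $\varepsilon$ or $R$, and prove the bound in that regime.
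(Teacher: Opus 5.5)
The step you flag as the main obstacle cannot be carried out, because the inequality is false as stated. No treatment of the resolvent will give a noise term $M^2\kappa_\E/n$ without a factor $\lambda^{-1}$.

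Write $\kappa=\kappa_\E$ and take $L^2$ with respect to the sampling marginal. Let the marginal be a point mass at $\mathbf{x}_0$ with $\|\mathbf{x}_0\|^4/\varepsilon=\kappa$, so $\mathcal{X}=\{\mathbf{x}_0\}$. Let $y=\pm M$ with probability $\tfrac12$ each, so that $f^*=\mathbb{E}[y\mid\mathbf{x}]=0$. Then $\mathbf{K}=\kappa\mathbf{1}\mathbf{1}^\top$ and $\hat f_\lambda(\mathbf{x}_0)=\kappa\bar y/(\kappa+\lambda)$, which gives
\[
\mathbb{E}\|f^*-\hat f_\lambda\|_{L^2}^2=\frac{\kappa^2}{(\kappa+\lambda)^2}\cdot\frac{M^2}{n},
\]
while the right-hand side equals $M^2\kappa/n$. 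The claim therefore fails whenever $\kappa<1$ and $\lambda<\sqrt{\kappa}-\kappa$. That is inside the normalization $\kappa\le 1$ you propose as a fallback.

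For $\kappa\ge 1$, use instead $m>\kappa$ points $r\mathbf{e}_1,\dots,r\mathbf{e}_m$ (with $d\ge m$ and $r^4/\varepsilon=\kappa$) under the uniform marginal. The kernel $k_\E$ vanishes between orthogonal points, so the problem splits into $m$ separate mean estimations. As $\lambda\to0$ (and for $n\gg m$) the error approaches roughly $mM^2/n>M^2\kappa/n$.

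In your own operator language, the noise part measured through $T^{1/2}(T+\lambda)^{-1}$ equals $\frac{\sigma^2}{n}\operatorname{tr}\bigl(T^2(T+\lambda)^{-2}\bigr)$ for noise variance $\sigma^2$. This is an effective dimension, bounded by $\kappa/(4\lambda)$ or by $\operatorname{rank}T$, but not by $\kappa$ uniformly in $\lambda$. Nor can the resolvent-identity correction be folded into the $\|f^*\|_{\mathcal{H}_\E}^2/(\lambda n)$ term: it is driven by the noise and survives when $f^*=0$.

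The paper's proof does not bridge this either. After the diagonal computation $k_\E(\mathbf{x},\mathbf{x})=\|\mathbf{x}\|^4/\varepsilon$ (which you also use), it simply asserts the three-term decomposition as ``standard kernel ridge regression analysis'' and then sets $\lambda=n^{-1/2}$.

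Your bias bound $\lambda\|f^*\|_{\mathcal{H}_\E}^2/4$ and your identity for $\hat f_\lambda-f_\lambda$ are correct. Your scale-consistency check is the right instinct, but it also condemns the noise term. Consider $k\mapsto ck$, which sends $\kappa\mapsto c\kappa$ and, for the same estimator, $\lambda\mapsto c\lambda$ and $\|f^*\|^2\mapsto\|f^*\|^2/c$. Under this map $M^2\kappa/n$ is not invariant, whereas $M^2\kappa/(\lambda n)$ is.

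If you carry your argument through, modulo the usual sample-size condition for trading $T_n$ for $T$ (roughly $n\gtrsim\kappa/\lambda$ up to logarithms), you get something like
\[
\lambda\|f^*\|^2+C\kappa^2\|f^*\|^2/(\lambda n)+CM^2\kappa/(\lambda n).
\]
For the middle term, bound $\mathbb{E}[(f^*-f_\lambda)(\mathbf{x})^2k_\E(\mathbf{x},\mathbf{x})]\le\kappa\|f^*-f_\lambda\|_{L^2}^2\le\kappa\lambda\|f^*\|^2/4$. Your sup-norm bound combined with $\|(T_n+\lambda)^{-1}\|\le 1/\lambda$ would instead cost $\lambda^{-2}$.

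So the right endpoint is a corrected statement, not a rescaling of $\varepsilon$ or $R$. With $\lambda=n^{-1/2}$ the corrected bound still gives an $O(n^{-1/2})$ rate, now with $M^2\kappa$ in the constant.
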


\begin{proof}
The bound follows from standard kernel ridge regression analysis~\citep{scholkopf2002learning}. The key quantity is the kernel complexity $\kappa_\E = \sup_{\mathbf{x}} k_\E(\mathbf{x}, \mathbf{x})$.

\emph{Step 1: Bounding the diagonal.} For the \E-kernel,
\[
k_\E(\mathbf{x}, \mathbf{x}) = \frac{(\mathbf{x}^\top \mathbf{x})^2}{\|\mathbf{x} - \mathbf{x}\|^2 + \varepsilon} = \frac{\|\mathbf{x}\|^4}{\varepsilon}.
\]
On the compact set $\mathcal{X}$ with $\|\mathbf{x}\| \leq R$, we have $\kappa_\E \leq R^4/\varepsilon$.

\emph{Step 2: Bias-variance decomposition.} The estimator $\hat{f}_\lambda = (\mathbf{K} + \lambda n \mathbf{I})^{-1} \mathbf{K} \mathbf{y}$ where $\mathbf{K}_{ij} = k_\E(\mathbf{x}_i, \mathbf{x}_j)$ admits the decomposition:
\begin{align*}
\mathbb{E}[\|f^* - \hat{f}_\lambda\|^2] &\leq \underbrace{\lambda \|f^*\|_{\mathcal{H}_\E}^2}_{\text{bias}} + \underbrace{\frac{\|f^*\|_{\mathcal{H}_\E}^2}{\lambda n} + \frac{M^2 \kappa_\E}{n}}_{\text{variance}}.
\end{align*}

\emph{Step 3: Optimal regularization.} Setting $\lambda = n^{-1/2}$ yields the rate $\mathbb{E}[\|f^* - \hat{f}_\lambda\|^2] = O(\|f^*\|_{\mathcal{H}_\E}^2 / \sqrt{n})$.
\end{proof}

\begin{corollary}[Comparison with ReLU RKHS]\label{cor:relu-comparison}
Let $\mathcal{H}_{\mathrm{ReLU}}$ denote the RKHS induced by the arc-cosine kernel of order 1 (corresponding to infinite-width ReLU networks). For functions $f$ expressible in both RKHSs:
\[
\|f\|_{\mathcal{H}_\E} \leq C_{\varepsilon,R} \|f\|_{\mathcal{H}_{\mathrm{ReLU}}}
\]
where $C_{\varepsilon,R}$ depends on the domain radius $R$ and regularization $\varepsilon$. Thus, functions with bounded ReLU-RKHS norm also have bounded \E-RKHS norm, and the generalization bound of Theorem~\ref{thm:generalization} applies.
\end{corollary}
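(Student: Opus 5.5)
The plan is to reduce the norm comparison to an ordering of kernels and then try to establish that ordering from the structure of $k_\E$ given in the proof of Theorem~\ref{thm:mercer}. The tool is Aronszajn's inclusion theorem. If $k_1,k_2$ are PD kernels on $\mathcal{X}$, then $\mathcal{H}_{k_2}\subseteq\mathcal{H}_{k_1}$ with $\|f\|_{\mathcal{H}_{k_1}}\le C\|f\|_{\mathcal{H}_{k_2}}$ for all $f\in\mathcal{H}_{k_2}$ if and only if $C^2k_1-k_2$ is positive definite. So it suffices to find $C_{\varepsilon,R}$ such that
\[
C_{\varepsilon,R}^2\,k_\E(\mathbf{x},\mathbf{x}')-k_{\mathrm{ReLU}}(\mathbf{x},\mathbf{x}')
\]
is PD on $\mathcal{X}$. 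Here $k_{\mathrm{ReLU}}(\mathbf{x},\mathbf{x}')=\frac{1}{\pi}\|\mathbf{x}\|\|\mathbf{x}'\|\bigl(\sin\theta+(\pi-\theta)\cos\theta\bigr)$ is the order-1 arc-cosine kernel. The stated inequality then holds for every $f\in\mathcal{H}_{\mathrm{ReLU}}$, which is stronger than the corollary asks.

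To verify the kernel ordering I would use the Laplace representation \eqref{eq:yat_integral}: $k_\E$ is a nonnegative mixture of the kernels $(\mathbf{x}^\top\mathbf{x}')^2e^{-t\|\mathbf{x}-\mathbf{x}'\|^2}$. I would expand both kernels in a common basis, namely radial factors times spherical harmonics, following the standard Mercer expansion of dot-product kernels on spheres. Then I would compare coefficients degree by degree. The factor $(\mathbf{x}^\top\mathbf{x}')^2$ multiplied by the Gaussian mixture produces all even and odd harmonic degrees with coefficients decaying like those of a smooth kernel. The arc-cosine kernel has coefficients decaying only polynomially, of order $\ell^{-(d+2)}$. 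Since $\mathcal{X}$ is compact with $\|\mathbf{x}\|\le R$, the comparison would be made uniformly in the radial variables, and $\varepsilon$ and $R$ enter only through the lower bound $1/(4R^2+\varepsilon)$ on the IMQ factor and through the Gaussian widths.

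The main obstacle, and where I expect the plan to break, is this coefficient comparison. Two problems appear. First, on the diagonal, $k_\E(\mathbf{x},\mathbf{x})=\|\mathbf{x}\|^4/\varepsilon$ whereas $k_{\mathrm{ReLU}}(\mathbf{x},\mathbf{x})=\|\mathbf{x}\|^2/2$. So $C^2k_\E-k_{\mathrm{ReLU}}$ has a negative diagonal near the origin whenever $0\in\mathcal{X}$. This forces an extra hypothesis $\mathcal{X}\subseteq\{r\le\|\mathbf{x}\|\le R\}$ with $r>0$, and then $C_{\varepsilon,R}$ must also depend on $r$. Second, $k_\E$ is analytic (Lemma~\ref{lem:analyticity}), so its spectral coefficients decay faster than any polynomial, while the arc-cosine coefficients decay polynomially. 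Hence no constant can dominate all high degrees, and the full inclusion $\mathcal{H}_{\mathrm{ReLU}}\subseteq\mathcal{H}_\E$ is expected to fail.

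My fallback is therefore to prove the weaker statement that the corollary literally asserts, restricted to $f$ lying in both spaces. I would try the closed-graph theorem on $\mathcal{H}_\E\cap\mathcal{H}_{\mathrm{ReLU}}$ equipped with the norm $\|f\|_{\mathrm{ReLU}}$. This yields a bound only if that intersection is closed in $\mathcal{H}_{\mathrm{ReLU}}$, which I would need to check and suspect is false. If it fails, I would restrict to finite-degree (band-limited) $f$, where a constant $C_{\varepsilon,R,r,\ell_{\max}}$ exists by comparing finitely many coefficients, and I would state the corollary in that form.
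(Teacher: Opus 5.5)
Your proposal does not prove the corollary as stated. It ends by replacing it with a band-limited version whose constant depends on $\ell_{\max}$ (and on an inner radius $r$). That version cannot support the conclusion that every function of bounded $\mathcal{H}_{\mathrm{ReLU}}$-norm inherits the bound of Theorem~\ref{thm:generalization}. This gap cannot be closed, though, and your diagnosis is sounder than the paper's argument.

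The paper gives only a sketch: write $k_{\E}$ via products of the quadratic and Gaussian kernels, note that ``both are universal,'' and attribute $C_{\varepsilon,R}$ to ``the relative smoothness of the two feature maps.'' This fails for several reasons:
\begin{itemize}
\item Universality is a density statement in $C(\mathcal{X})$ and says nothing about RKHS norms.
\item The quadratic kernel has a finite-dimensional RKHS, so it is not universal.
\item $\mathcal{H}_{\E}$ is not universal on any domain containing $\mathbf{0}$. Every $f\in\mathcal{H}_{\E}$ satisfies $|f(\mathbf{x})|\le\|f\|_{\mathcal{H}_{\E}}\sqrt{k_{\E}(\mathbf{x},\mathbf{x})}=\|f\|_{\mathcal{H}_{\E}}\|\mathbf{x}\|^2/\sqrt{\varepsilon}$.
\item Smoothness cuts the other way: the smoother kernel has the smaller RKHS and the larger norms.
\end{itemize}
Your Aronszajn reduction is the right framework, and both obstructions you found are real.

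Where you only suspect failure, you can settle it.

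\emph{The sphere.} Take $\mathcal{X}=\{\|\mathbf{x}\|=R\}$ with $d\ge 2$; both kernels are zonal there. Set $t=\hat{\mathbf{x}}^\top\hat{\mathbf{y}}$ and $q=2R^2/(2R^2+\varepsilon)<1$. Then $k_{\E}=\frac{R^4}{2R^2+\varepsilon}\sum_{n\ge0}q^n t^{n+2}$, so every harmonic degree $\ell$ has a coefficient $\lambda_\ell>0$ with $\lambda_\ell=O(q^{\ell})$. The arc-cosine profile has odd part exactly $t/2$. Hence its coefficients $\mu_\ell$ vanish for odd $\ell\ge3$ and satisfy $\mu_\ell\asymp\ell^{-(d+2)}$ for even $\ell$. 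Every even-degree spherical harmonic $Y_\ell$ therefore lies in both spaces, and $\|Y_\ell\|_{\mathcal{H}_{\E}}/\|Y_\ell\|_{\mathcal{H}_{\mathrm{ReLU}}}=\sqrt{\mu_\ell/\lambda_\ell}\to\infty$. So on this compact set of radius $R$ the literal statement, already restricted to the intersection, is false. Only the reverse inequality holds there, and your closed-graph route must fail.

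\emph{The ball.} Now take $\{\|\mathbf{x}\|\le R\}$, the case your diagonal argument pushes you away from. Every $f\in\mathcal{H}_{\mathrm{ReLU}}$ is positively $1$-homogeneous, because each $k_{\mathrm{ReLU}}(\cdot,\mathbf{y})$ is. Combined with the $\|\mathbf{x}\|^2$ bound above, this gives $|f(\mathbf{x})|=\tfrac{\|\mathbf{x}\|}{s}|f(s\hat{\mathbf{x}})|\le\|f\|_{\mathcal{H}_{\E}}\|\mathbf{x}\|\,s/\sqrt{\varepsilon}\to0$ as $s\to0^+$. Hence $\mathcal{H}_{\E}\cap\mathcal{H}_{\mathrm{ReLU}}=\{0\}$, and the statement is vacuous.

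Two minor slips. With your $\frac{1}{\pi}$ normalization, $k_{\mathrm{ReLU}}(\mathbf{x},\mathbf{x})=\|\mathbf{x}\|^2$, not $\|\mathbf{x}\|^2/2$. And the diagonal obstruction needs $\mathbf{0}$ to be an accumulation point of $\mathcal{X}$, since both diagonals vanish at $\mathbf{0}$ itself.
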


\begin{proof}[Proof sketch]
The \E-kernel can be written as a product of polynomial and Gaussian kernels (see the integral representation in Theorem~\ref{thm:mercer}). Both kernels are universal on compact domains, and the product RKHS inherits universality. The constant $C_{\varepsilon,R}$ arises from the relative smoothness of the two feature maps.
\end{proof}

\begin{remark}[Tighter Bounds via Self-Regulation]
The self-regulation property (Proposition~\ref{prop:self-regulation}) implies that \E-network outputs remain bounded even for large inputs, whereas ReLU networks can produce unbounded outputs requiring explicit normalization. This implicit regularization suggests that practical \E-networks may achieve better generalization than the worst-case RKHS bounds indicate, as the effective function class is implicitly constrained.
\end{remark}

\subsection{Orthogonality-Sensitive Neural Tangent Kernel}
\label{sec:orthogonality_ntk}

The \E-product's zero response to orthogonal inputs ($\E(\mathbf{w}, \mathbf{x}) = 0$ when $\mathbf{w} \perp \mathbf{x}$) propagates to the Neural Tangent Kernel, providing a theoretical foundation for implicit class separation.

\begin{theorem}[Orthogonality Preservation in \E-NTK]\label{thm:orthogonal-ntk}
Let $K^{\mathrm{NTK}}_\E$ denote the Neural Tangent Kernel of an infinite-width single-layer \E-network (Proposition~\ref{prop:ntk}). Let $\mathbf{x}, \mathbf{x}' \in \mathbb{R}^d$ and suppose the weight initialization distribution $\mu$ over $\mathbf{w}$ is spherically symmetric (e.g., $\mathbf{w} \sim \mathcal{N}(0, \sigma^2 I)$). If $\mathbf{x} \perp \mathbf{x}'$ (i.e., $\mathbf{x}^\top \mathbf{x}' = 0$), then:
\[
K^{\mathrm{NTK}}_\E(\mathbf{x}, \mathbf{x}') = o(\|\mathbf{x}\|^2 \|\mathbf{x}'\|^2)
\]
as the angle between $\mathbf{x}$ and $\mathbf{x}'$ approaches $\pi/2$. In particular, the NTK exhibits strong decay for near-orthogonal inputs.
\end{theorem}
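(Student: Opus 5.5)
The plan is to read the Landau symbol in Theorem~\ref{thm:orthogonal-ntk} as a statement about the normalized kernel
\[
\rho(\mathbf{x},\mathbf{x}') := \frac{K^{\mathrm{NTK}}_\E(\mathbf{x},\mathbf{x}')}{\|\mathbf{x}\|^2\|\mathbf{x}'\|^2}
\]
near the orthogonal configuration. The goal is to show that $\rho\to 0$ when $\mathbf{x}^\top\mathbf{x}'\to 0$ and the scale of the inputs is not vanishing, i.e.\ $\|\mathbf{x}\|,\|\mathbf{x}'\|\to\infty$ along directions $\mathbf{u},\mathbf{u}'$ whose angle tends to $\pi/2$.

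I would first concede a point about the fixed-norm reading. At exact orthogonality with fixed norms, the conjugate-kernel term $\mathbb{E}_{\mathbf{w}}[\E(\mathbf{w},\mathbf{x})\E(\mathbf{w},\mathbf{x}')]$ is an expectation of a nonnegative quantity. It is strictly positive whenever $\mu$ charges an open set, because $(\mathbf{w}^\top\mathbf{x})^2(\mathbf{w}^\top\mathbf{x}')^2>0$ off two hyperplanes. So a literal ``$=0$'' or ``$\to 0$ at fixed scale'' cannot hold. The content of the statement has to come from comparing the kernel to the quadratic growth $\|\mathbf{x}\|^2\|\mathbf{x}'\|^2$ that the unnormalized polynomial part $(\mathbf{x}^\top\mathbf{x}')^2$ would have. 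The proof will establish that the kernel is in fact $O(1)$ uniformly in the input norms, which is much smaller.

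\textbf{Step 1 (conjugate term).} I would use Proposition~\ref{prop:self-regulation} in its uniform form, obtained from the same division by $k^2$ as in its proof:
\[
\E(\mathbf{w},k\mathbf{u}) \le \frac{(\mathbf{w}^\top\mathbf{u})^2}{(1-\|\mathbf{w}\|/k)^2} \quad \text{for } k>2\|\mathbf{w}\|.
\]
Since $\E(\mathbf{w},\mathbf{x})\le \|\mathbf{w}\|^2\|\mathbf{x}\|^2/\varepsilon$ on the complementary region $\|\mathbf{x}\|\le 2\|\mathbf{w}\|$, in all cases $\E(\mathbf{w},\mathbf{x})\le C(1+\|\mathbf{w}\|^4/\varepsilon)\|\mathbf{w}\|^2$. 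Gaussian (or any spherically symmetric, finite-moment) $\mu$ makes the product of two such bounds integrable. The first NTK term is therefore $O(1)$ uniformly in $\|\mathbf{x}\|,\|\mathbf{x}'\|$. Dominated convergence then gives the limit $\mathbb{E}[(\mathbf{w}^\top\mathbf{u})^2(\mathbf{w}^\top\mathbf{u}')^2]$, which is finite.

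\textbf{Step 2 (gradient term).} I would use Theorem~\ref{thm:gradient} with the roles of $\mathbf{w}$ and $\mathbf{x}$ swapped:
\[
\nabla_{\mathbf{w}}\E=\frac{2s}{D}\Bigl(\mathbf{x}-\frac{s(\mathbf{w}-\mathbf{x})}{D}\Bigr), \qquad s=\mathbf{w}^\top\mathbf{x},\quad D=\varepsilon+\|\mathbf{w}-\mathbf{x}\|^2 .
\]
Exactly as in the proof of Proposition~\ref{prop:stable-learning}, one has $|s|\le\|\mathbf{w}\|\|\mathbf{x}\|$, and $D\ge\|\mathbf{x}\|^2/4$ once $\|\mathbf{x}\|\ge 2\|\mathbf{w}\|$. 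This gives $\|\nabla_{\mathbf{w}}\E\|\le C'\|\mathbf{w}\|(1+\|\mathbf{w}\|)$, while $\|\nabla_{\mathbf{w}}\E\|\le C''(\|\mathbf{w}\|,\varepsilon)$ otherwise. Cauchy--Schwarz and integrability under $\mu$ then bound $\mathbb{E}_{\mathbf{w}}[\nabla_{\mathbf{w}}\E(\mathbf{w},\mathbf{x})^\top\nabla_{\mathbf{w}}\E(\mathbf{w},\mathbf{x}')]$ by an $O(1)$ constant. The $\alpha_i^2$ scaling of Proposition~\ref{prop:ntk} contributes only a constant factor.

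\textbf{Step 3 (conclusion and sharpening).} Steps 1--2 give $K^{\mathrm{NTK}}_\E(\mathbf{x},\mathbf{x}')=O(1)$, hence $\rho=O(\|\mathbf{x}\|^{-2}\|\mathbf{x}'\|^{-2})\to 0$, which is the claimed $o(\|\mathbf{x}\|^2\|\mathbf{x}'\|^2)$. To make the role of the angle visible, I would expand the limiting integrands in $c=\mathbf{u}^\top\mathbf{u}'$. Spherical symmetry lets me rotate so that $\mathbf{u}=\mathbf{e}_1$ and $\mathbf{u}'=c\,\mathbf{e}_1+\sqrt{1-c^2}\,\mathbf{e}_2$. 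Gaussian moment identities then give, for example,
\[
\mathbb{E}\bigl[(\mathbf{w}^\top\mathbf{u})^2(\mathbf{w}^\top\mathbf{u}')^2\bigr]=\sigma^4(1+2c^2),
\]
and similarly for the gradient inner product, which contains an explicit factor $c$ from $\mathbf{x}^\top\mathbf{x}'$. The orthogonality-sensitive part is therefore $O(c^2)$ or $O(|c|)$ and vanishes at $\pi/2$.

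The main obstacle is Step 2 near the singular region $\mathbf{w}\approx\mathbf{x}$, where $D$ is only bounded below by $\varepsilon$. I would handle it by splitting the expectation over $\{\|\mathbf{w}\|\ge\|\mathbf{x}\|/2\}$. Under Gaussian $\mu$ this event has probability $e^{-\Omega(\|\mathbf{x}\|^2)}$, and on it the gradient is at most polynomial in $\|\mathbf{w}\|/\varepsilon$. The bad region therefore contributes negligibly, uniformly in the angle.
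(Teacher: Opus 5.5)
Your Steps 1--2 are correct as estimates. The denominator is at least $\varepsilon$, and at least $\|\mathbf{x}\|^2/4$ once $\|\mathbf{x}\|\ge 2\|\mathbf{w}\|$. So the product and its $\mathbf{w}$-gradient are bounded by polynomials in $\|\mathbf{w}\|$ and $1/\varepsilon$ uniformly in $\mathbf{x}$, which gives $K^{\mathrm{NTK}}(\mathbf{x},\mathbf{x}')=O(1)$ under Gaussian $\mu$.

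The gap is in how you turn this into the theorem. Your conclusion $\rho\to 0$ comes entirely from sending $\|\mathbf{x}\|,\|\mathbf{x}'\|\to\infty$, a limit that is not in the statement. Note also that ``scale not vanishing'' is not the same as ``scale tending to infinity''; at any bounded scale your bound gives no decay. The hypothesis $\mathbf{x}\perp\mathbf{x}'$ is never used. The same argument gives $K^{\mathrm{NTK}}(\mathbf{x},\mathbf{x}')=o(\|\mathbf{x}\|^2\|\mathbf{x}'\|^2)$ for parallel inputs, so what you prove has no orthogonality content and is a different, vacuous statement.

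In fact your own Step 3 refutes the claim in its natural reading. Take $\mathbf{x}=k\mathbf{u}$, $\mathbf{x}'=k'\mathbf{u}'$ with $k,k'\to\infty$. Then $T_1\to\sigma^4(1+2c^2)$. Since the $\mathbf{w}$-gradient of the product tends to $2(\mathbf{w}^\top\mathbf{u})\mathbf{u}$, also $T_2\to 4\sigma^2c^2$ (up to the readout scaling). So at $c=0$ the kernel tends to $\sigma^4>0$, and at a fixed large scale the NTK stays bounded away from zero as the angle tends to $\pi/2$. The honest conclusion is that the statement as posed is false. Re-reading the Landau symbol so that it becomes true is not a proof of it.

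Your opening concession, which relies on $T_1>0$ alone, is also incomplete, because $T_2$ can be negative. For small orthogonal inputs the integrand of $T_2$ is, to leading order, $-4(\mathbf{w}^\top\mathbf{x})^2(\mathbf{w}^\top\mathbf{x}')^2(\|\mathbf{w}\|^2+2\varepsilon)/(\|\mathbf{w}\|^2+\varepsilon)^4$, which is of the same order as $T_1$. It is the explicit large-scale limit that actually settles the matter.

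The paper's argument uses the same split into $T_1$ and $T_2$ and, like your Step 3, a rotational moment identity, but it takes no norm limit. It drops the $\mathbf{w}$-dependent denominators and averages the numerator with $\mathbb{E}_{\boldsymbol{\omega}}[(\boldsymbol{\omega}^\top\mathbf{x})^2(\boldsymbol{\omega}^\top\mathbf{x}')^2]=(\|\mathbf{x}\|^2\|\mathbf{x}'\|^2+2(\mathbf{x}^\top\mathbf{x}')^2)/(d(d+2))$. It then asserts that the factor $(\mathbf{w}^\top\mathbf{x})(\mathbf{w}^\top\mathbf{x}')$ in $T_2$ integrates to terms in $\mathbf{x}^\top\mathbf{x}'$. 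In other words, it attributes the decay to vanishing cross-terms and $1/d$ factors.

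That does not establish the statement either, for three reasons:
\begin{itemize}
\item The surviving term $\|\mathbf{x}\|^2\|\mathbf{x}'\|^2/(d(d+2))$ is one third of the aligned value, not $O(1/d^2)$ relative to it.
\item The denominators cannot be averaged out separately from the numerator.
\item The small-input computation above shows $T_2$ does not vanish at orthogonality.
\end{itemize}

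What your Step 3 and the paper's identity genuinely support is a weaker fact. In the large-scale limit the kernel tends to $\sigma^4(1+2c^2)+4\sigma^2c^2$, so only its angle-dependent part vanishes at $c=0$. That, not an $o(\|\mathbf{x}\|^2\|\mathbf{x}'\|^2)$ decay, is what a correct write-up can prove.
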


\begin{proof}
Recall from Proposition~\ref{prop:ntk} that
\[
K^{\mathrm{NTK}}_\E(\mathbf{x}, \mathbf{x}') = \underbrace{\mathbb{E}_{\mathbf{w}}\left[\E(\mathbf{w}, \mathbf{x}) \E(\mathbf{w}, \mathbf{x}')\right]}_{=: T_1} + \underbrace{\mathbb{E}_{\mathbf{w}}\left[\nabla_{\mathbf{w}} \E(\mathbf{w}, \mathbf{x})^\top \nabla_{\mathbf{w}} \E(\mathbf{w}, \mathbf{x}')\right]}_{=: T_2}.
\]

\emph{Analysis of $T_1$.} Since $\E(\mathbf{w}, \mathbf{x}) = \frac{(\mathbf{w}^\top \mathbf{x})^2}{\|\mathbf{w} - \mathbf{x}\|^2 + \varepsilon}$, the product $\E(\mathbf{w}, \mathbf{x}) \E(\mathbf{w}, \mathbf{x}')$ contains the factor $(\mathbf{w}^\top \mathbf{x})^2 (\mathbf{w}^\top \mathbf{x}')^2$ in the numerator. Under spherical symmetry, define $\mathbf{w} = r \boldsymbol{\omega}$ where $r = \|\mathbf{w}\|$ and $\boldsymbol{\omega} \in \mathbb{S}^{d-1}$ is uniform. Then:
\[
\mathbb{E}_{\boldsymbol{\omega}}[(\boldsymbol{\omega}^\top \mathbf{x})^2 (\boldsymbol{\omega}^\top \mathbf{x}')^2] = \frac{\|\mathbf{x}\|^2 \|\mathbf{x}'\|^2 + 2(\mathbf{x}^\top \mathbf{x}')^2}{d(d+2)}.
\]
When $\mathbf{x} \perp \mathbf{x}'$, the cross-term $(\mathbf{x}^\top \mathbf{x}')^2 = 0$, leaving only the $\|\mathbf{x}\|^2 \|\mathbf{x}'\|^2$ term which is $O(1/d^2)$ relative to the aligned case.

\emph{Analysis of $T_2$.} From Theorem~\ref{thm:gradient}, the gradient $\nabla_{\mathbf{w}} \E(\mathbf{w}, \mathbf{x})$ is proportional to $\mathbf{w}^\top \mathbf{x}$. Thus, $T_2$ also contains factors of $(\mathbf{w}^\top \mathbf{x})(\mathbf{w}^\top \mathbf{x}')$ which integrate to terms involving $\mathbf{x}^\top \mathbf{x}'$. Under orthogonality, these cross-correlations vanish or become $O(1/d)$.

\emph{Conclusion.} Both terms decay when $\mathbf{x} \perp \mathbf{x}'$, establishing the orthogonality sensitivity of the NTK.
\end{proof}

\begin{corollary}[Implicit Class Separation]\label{cor:class-separation}
Consider a classification problem where inputs from different classes, $\mathbf{x}_i$ (class $c$) and $\mathbf{x}_j$ (class $c'$), become approximately orthogonal during training: $\mathbf{x}_i^\top \mathbf{x}_j \approx 0$ for $c \neq c'$. Then the \E-NTK satisfies:
\[
K^{\mathrm{NTK}}_\E(\mathbf{x}_i, \mathbf{x}_j) \ll K^{\mathrm{NTK}}_\E(\mathbf{x}_i, \mathbf{x}_k) \quad \text{for } \mathbf{x}_k \text{ in class } c.
\]
Thus, the NTK itself encodes class structure without explicit contrastive objectives.
\end{corollary}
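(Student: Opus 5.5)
The plan is to read the corollary as a quantitative comparison between two evaluations of the same limiting kernel from Proposition~\ref{prop:ntk}. I would write $K^{\mathrm{NTK}}_\E=T_1+T_2$ as in the proof of Theorem~\ref{thm:orthogonal-ntk}, and then compare the cross-class value $K^{\mathrm{NTK}}_\E(\mathbf{x}_i,\mathbf{x}_j)$ with $\mathbf{x}_i^\top\mathbf{x}_j\approx 0$ against the same-class value $K^{\mathrm{NTK}}_\E(\mathbf{x}_i,\mathbf{x}_k)$. To make ``$\ll$'' meaningful I would add two standing assumptions: inputs have comparable norms $\|\mathbf{x}\|\in[r_0,R]$, and same-class inputs are well aligned, $\cos\angle(\mathbf{x}_i,\mathbf{x}_k)\ge 1-\eta$.

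\textbf{Step 1: continuity.} I would show that $K^{\mathrm{NTK}}_\E$ is jointly continuous on $\{r_0\le\|\mathbf{x}\|\le R\}^2$. Each integrand in $T_1$ and $T_2$ is analytic in $(\mathbf{x},\mathbf{x}')$ by Lemma~\ref{lem:analyticity}. The integrands are dominated by a polynomial in $\|\mathbf{w}\|$ divided by $\varepsilon^2$, using the bounds from the proof of Proposition~\ref{prop:lipschitz} with $D\ge\varepsilon$. Gaussian $\mathbf{w}$ has all moments, so dominated convergence applies. This reduces ``approximately orthogonal'' and ``approximately aligned'' to the exact cases $\mathbf{x}_i\perp\mathbf{x}_j$ and $\mathbf{x}_k=\mathbf{x}_i$, up to an error that vanishes as the approximations become exact.

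\textbf{Step 2: exact comparison.} In the exact cases I would compare numerators and denominators separately. For the numerators of $T_1$, the Gaussian moment identity behind Theorem~\ref{thm:orthogonal-ntk} gives
\[
\mathbb{E}[(\mathbf{w}^\top\mathbf{x})^2(\mathbf{w}^\top\mathbf{x}')^2]=\sigma^4\bigl(\|\mathbf{x}\|^2\|\mathbf{x}'\|^2+2(\mathbf{x}^\top\mathbf{x}')^2\bigr).
\]
This equals $3\sigma^4\|\mathbf{x}\|^4$ in the aligned case and $\sigma^4\|\mathbf{x}\|^2\|\mathbf{x}'\|^2$ in the orthogonal case. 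For the denominators, $\|\mathbf{w}-\mathbf{x}\|^2\|\mathbf{w}-\mathbf{x}'\|^2$ does not favour the orthogonal pair. The factor $\mathbf{w}^\top\mathbf{x}$ weights the integral toward $\mathbf{w}$ near the span of $\mathbf{x}$; there the aligned pair shares a potential well, while the orthogonal pair cannot both be close to $\mathbf{w}$. For $T_2$ I would expand with Theorem~\ref{thm:gradient} (with roles swapped so the gradient is in $\mathbf{w}$). The term $\propto(\mathbf{w}^\top\mathbf{x})(\mathbf{w}^\top\mathbf{x}')\,\mathbf{x}^\top\mathbf{x}'$ vanishes under orthogonality. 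The remaining terms again carry the product $(\mathbf{w}^\top\mathbf{x})(\mathbf{w}^\top\mathbf{x}')$ against $\mathbf{w}$-dependent vectors, and I would bound these by the same moment comparison.

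\textbf{Step 3: conclusion.} Combining Steps 1 and 2 gives a strict inequality $K^{\mathrm{NTK}}_\E(\mathbf{x}_i,\mathbf{x}_j)\le c\,K^{\mathrm{NTK}}_\E(\mathbf{x}_i,\mathbf{x}_k)$ for some $c<1$ depending on $\sigma,\varepsilon,r_0,R,d$. By continuity the inequality persists for $\eta$ and $|\mathbf{x}_i^\top\mathbf{x}_j|$ small.

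\textbf{Main obstacle.} The hardest step is Step 2, because the $\mathbf{w}$-dependent denominators couple with the numerators. The clean spherical-moment computation only controls the numerators, and it yields a constant factor (about $1/3$) rather than a genuine ``$\ll$''. I would first try a high-dimensional concentration argument: by the Bernstein bound in the proof of Corollary~\ref{cor:dimensional-scaling}, $\|\mathbf{w}-\mathbf{x}\|^2$ concentrates at $d\sigma^2+\|\mathbf{x}\|^2$. Replacing the denominators by their concentrated values turns the comparison into a pure numerator comparison, with error $O(e^{-cd})$. If a sharper separation is needed, I would let $\varepsilon\to 0$ with small $\sigma$ and use the diagonal blow-up from Theorem~\ref{thm:max-similarity}: the same-class value then grows like $\|\mathbf{x}\|^4/\varepsilon$ while the orthogonal value stays bounded, giving an arbitrarily large ratio.
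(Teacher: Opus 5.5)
Your proposal has a genuine gap, and the problem is which term you rely on, not the technique you use. Step 2 compares only the $T_1$ numerators and treats $T_2$ as a remainder. The uniform constant $c<1$ claimed in Step 3 is then false under your standing assumptions. Take the concentrated regime you suggest: $\sigma=1$ and $\|\mathbf{x}\|^2\ll d$, so every denominator is $\approx d$. There $T_1(\mathbf{x},\mathbf{x}')\approx(\|\mathbf{x}\|^2\|\mathbf{x}'\|^2+2(\mathbf{x}^\top\mathbf{x}')^2)/d^2$ and $T_2(\mathbf{x},\mathbf{x}')\approx 4(\mathbf{x}^\top\mathbf{x}')^2/d^2$. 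So for $\mathbf{x}_k\parallel\mathbf{x}_i$ and $\mathbf{x}_j\perp\mathbf{x}_i$, the ratio $K^{\mathrm{NTK}}(\mathbf{x}_i,\mathbf{x}_j)/K^{\mathrm{NTK}}(\mathbf{x}_i,\mathbf{x}_k)$ tends to $\|\mathbf{x}_j\|^2/(7\|\mathbf{x}_k\|^2)$. This exceeds $1$ once $R/r_0>\sqrt7$.

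Two smaller inaccuracies. First, Step 1 only reduces to $\mathbf{x}_k=\lambda\mathbf{x}_i$, not to $\mathbf{x}_k=\mathbf{x}_i$. Second, replacing $\|\mathbf{w}-\mathbf{x}\|^2$ by its mean costs a relative error of order $d^{-1/2}$, not $O(e^{-cd})$; the Bernstein bound only controls constant-factor deviations.

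The $\varepsilon\to0$ fallback fails because it conflates the pointwise value $\|\mathbf{x}_i\|^4/\varepsilon$ from Theorem~\ref{thm:max-similarity} with the NTK entry, which is an average over $\mathbf{w}$. The well at $\mathbf{w}=\mathbf{x}_i$ enters only through the Gaussian density at $\mathbf{x}_i$, and that density is exponentially small precisely in the small-$\sigma$ regime you pair it with. For $d\ge 7$ the singularity is integrable uniformly in $\varepsilon$ anyway. Finally, the blow-up concerns $\mathbf{x}_k=\mathbf{x}_i$, not a distinct same-class point.

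The missing idea is that the separation is carried by $T_2$, the term you set aside. Write $s=\mathbf{w}^\top\mathbf{x}$, $s'=\mathbf{w}^\top\mathbf{x}'$. The leading part of the gradient inner product is $\frac{4ss'}{DD'}\,\mathbf{x}^\top\mathbf{x}'$. Its expectation is proportional to $(\mathbf{x}^\top\mathbf{x}')^2$, so it vanishes quadratically at orthogonality. By contrast, $T_1$ keeps the floor $\sigma^4\|\mathbf{x}\|^2\|\mathbf{x}'\|^2$ in its numerator, and $T_1$ carries weight $\sigma^4$ against $\sigma^2$ for $T_2$.

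Concretely, $T_1\le\mathbb{E}[s^2s'^2]/\varepsilon^2=O(\sigma^4)$. Expand $1/D$ around $\mathbf{w}=\mathbf{0}$, using that odd Gaussian moments vanish and that $D\ge\varepsilon$ controls the remainders. This gives, uniformly for $\|\mathbf{x}\|,\|\mathbf{x}'\|\le R$,
\[
K^{\mathrm{NTK}}(\mathbf{x},\mathbf{x}')=\frac{4\sigma^2(\mathbf{x}^\top\mathbf{x}')^2}{(\varepsilon+\|\mathbf{x}\|^2)(\varepsilon+\|\mathbf{x}'\|^2)}+O(\sigma^4).
\]
Hence the ratio is at most $\frac{(\varepsilon+r_0^2)\cos^2\theta_{ij}}{r_0^2(1-\eta)^2}+O(\sigma^2)$, where $\theta_{ij}$ is the angle between $\mathbf{x}_i$ and $\mathbf{x}_j$. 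This is a genuine ``$\ll$'' in the small-initialization regime, and the norm window enters only through the bounded factor $r^2/(\varepsilon+r^2)$.

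For comparison, the paper's proof simply invokes Theorem~\ref{thm:orthogonal-ntk}. That theorem asserts the orthogonal moment is ``$O(1/d^2)$ relative to the aligned case'', which the identity you wrote down contradicts: the ratio is $1/3$. So citing it would not close the gap either, and a regime assumption such as small $\sigma$ is needed to make the statement true.
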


\begin{proof}
Direct consequence of Theorem~\ref{thm:orthogonal-ntk}: same-class inputs (which remain correlated) produce large NTK entries, while different-class inputs (which become orthogonal) produce small NTK entries. In the NTK regime, gradient descent dynamics are governed by $\dot{\mathbf{f}} = -K^{\mathrm{NTK}} (\mathbf{f} - \mathbf{y})$, so the NTK structure directly influences which samples affect each other's predictions.
\end{proof}

\begin{remark}[Contrast with ReLU-NTK]
The ReLU-NTK (arc-cosine kernel) depends on the angle between inputs as $K_{\mathrm{ReLU}}(\mathbf{x}, \mathbf{x}') \propto \|\mathbf{x}\| \|\mathbf{x}'\| (\sin\theta + (\pi - \theta)\cos\theta)$ where $\theta$ is the angle between $\mathbf{x}$ and $\mathbf{x}'$. At $\theta = \pi/2$ (orthogonality), $K_{\mathrm{ReLU}} \propto \|\mathbf{x}\| \|\mathbf{x}'\|$, which is non-zero. In contrast, the \E-NTK approaches zero, providing stronger implicit separation.
\end{remark}

\begin{proposition}[NTK Spectral Decay]\label{prop:ntk-spectrum}
Let $\{(\lambda_k, \phi_k)\}_{k=1}^\infty$ be the eigenvalue-eigenfunction pairs of the \E-NTK on a compact domain $\mathcal{X}$. The eigenvalues decay at least polynomially:
\[
\lambda_k = O(k^{-2/d}) \quad \text{as } k \to \infty.
\]
This rate matches the Gaussian kernel and is faster than the ReLU-NTK, which exhibits $\lambda_k = O(k^{-1/d})$ decay.
\end{proposition}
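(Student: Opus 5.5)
The plan is to derive the decay from the regularity of $K^{\mathrm{NTK}}_\E$ together with general eigenvalue estimates for integral operators. We do not compute the spectrum directly. Fix a finite Borel measure $\rho$ on the compact set $\mathcal{X}$; the natural choice is normalized Lebesgue measure. Consider the integral operator $T_K f(\mathbf{x}) = \int_{\mathcal{X}} K^{\mathrm{NTK}}_\E(\mathbf{x},\mathbf{x}') f(\mathbf{x}')\,d\rho(\mathbf{x}')$. By Proposition~\ref{prop:ntk} the kernel is symmetric and positive semidefinite.

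\emph{Step 1 (continuity and trace class).} Once we know $K^{\mathrm{NTK}}_\E$ is continuous on $\mathcal{X}\times\mathcal{X}$, Mercer's theorem gives $\sum_k \lambda_k = \int_{\mathcal{X}} K^{\mathrm{NTK}}_\E(\mathbf{x},\mathbf{x})\,d\rho < \infty$. Because the eigenvalues are nonincreasing, $\tfrac{k}{2}\lambda_k \le \sum_{k/2< j\le k}\lambda_j \to 0$, so $\lambda_k = o(k^{-1})$. For $d\ge 2$ we have $k^{-1}\le k^{-2/d}$, which already gives the claim. The remaining work is the case $d=1$, and obtaining a margin in general.

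\emph{Step 2 (smoothness of the NTK).} Both summands in Proposition~\ref{prop:ntk} are expectations over $\mathbf{w}$ of products of $\E(\mathbf{w},\cdot)$ and $\nabla_{\mathbf{w}}\E(\mathbf{w},\cdot)$. These are rational functions whose denominators are powers of $\|\mathbf{w}-\mathbf{x}\|^2+\varepsilon\ge\varepsilon$, and by Lemma~\ref{lem:analyticity} they are real-analytic.

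We will bound every mixed partial derivative in $(\mathbf{x},\mathbf{x}')$ of order at most $m$, uniformly over $\mathbf{x},\mathbf{x}'\in\mathcal{X}$, by $C_m(\varepsilon,R)\,(1+\|\mathbf{w}\|)^{p_m}$. The explicit gradient of Theorem~\ref{thm:gradient} and the quotient rule show that each differentiation raises the numerator degree by at most one and the denominator power by one, while the denominator stays $\ge \varepsilon$. For the Gaussian (or any sub-Gaussian) initialization, $(1+\|\mathbf{w}\|)^{p_m}$ is integrable. Dominated convergence then lets us differentiate under $\mathbb{E}_{\mathbf{w}}$, so $K^{\mathrm{NTK}}_\E\in C^\infty(\mathcal{X}\times\mathcal{X})$.

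\emph{Step 3 (eigenvalue estimate).} We invoke the classical result for $C^{2s}$ (or Sobolev $H^{s}$) positive semidefinite kernels on a compact domain of $\mathbb{R}^d$, due to Birman--Solomyak and Kühn. Applied via a smooth extension to a cube containing $\mathcal{X}$, it gives $\lambda_k = O(k^{-1-2s/d})$. Taking $s=1$ yields $\lambda_k = O(k^{-1-2/d}) \subseteq O(k^{-2/d})$ for every $d\ge1$. Since the kernel is smooth of every order, the decay is in fact faster than any polynomial, which is consistent with the "at least polynomially" wording.

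The main obstacle is Step 2: obtaining derivative bounds that are uniform in $(\mathbf{x},\mathbf{x}')$ and integrable in $\mathbf{w}$. The first attempt will be an induction on the order of differentiation, tracking the form $P(\mathbf{w},\mathbf{x},\mathbf{x}')/(\|\mathbf{w}-\mathbf{x}\|^2+\varepsilon)^{a}(\|\mathbf{w}-\mathbf{x}'\|^2+\varepsilon)^{b}$, then bounding the denominators below by $\varepsilon^{a+b}$ and the polynomial $P$ by powers of $(R+\|\mathbf{w}\|)$. The comparison with the ReLU-NTK in the statement is not needed for the upper bound and would be treated as a remark. We note that it would require a lower bound on the ReLU spectrum, which this plan does not provide.
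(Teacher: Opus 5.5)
Your argument is correct for the upper bound, and it takes a genuinely different route from the paper.

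\textbf{The paper's route.} Its sketch works with the base kernel $k_{\E}$, not with $K^{\mathrm{NTK}}_\E$. It reads the Laplace representation \eqref{eq:yat_integral} as a Gaussian mixture with polynomial weight $(\mathbf{x}^\top\mathbf{w})^2$ and recalls that Gaussian components have exponentially decaying spectra. It then asserts that the polynomial factor yields polynomial decay with exponent $2/d$ ``inherited from the Gaussian component,'' citing a textbook rather than deriving the rate. The gradient term of the NTK in Proposition~\ref{prop:ntk} is never addressed.

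\textbf{Your route.} You treat the NTK directly. Continuity plus Mercer's trace formula gives $\lambda_k=o(k^{-1})$. That already settles every $d\ge 2$ and shows how weak the stated rate is. You then get $C^\infty$ regularity by differentiating under $\mathbb{E}_{\mathbf{w}}$ and combine it with a H\"older/Sobolev eigenvalue estimate. This handles $d=1$ and actually gives super-polynomial decay, so the exact exponent in whichever theorem you cite does not matter.

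\textbf{What each buys.} Your argument is about the object actually named in the statement, and each step can be checked. The cost is an explicit moment assumption on the initialization. That assumption is harmless and can even be dropped. Uniformly for $\mathbf{x}\in\mathcal{X}$, every $\mathbf{x}$-derivative of $\E(\mathbf{w},\cdot)$ and of $\nabla_{\mathbf{w}}\E(\mathbf{w},\cdot)$ stays bounded as $\|\mathbf{w}\|\to\infty$, because the numerator's $\mathbf{w}$-degree never exceeds the denominator's. So the integrands are bounded under any initialization law.

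You are also right to leave the comparison clauses aside, since neither argument proves them. ``Faster than the ReLU-NTK'' would need a lower bound on that spectrum. ``Matches the Gaussian kernel'' is at best vacuous, because Gaussian spectra decay super-polynomially.
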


\begin{proof}[Proof sketch]
The \E-kernel's integral representation (Eq.~\eqref{eq:yat_integral}) expresses it as a mixture of Gaussian kernels weighted by polynomial factors. Gaussian kernels on compact domains have exponentially decaying eigenvalues. The polynomial weighting $(\mathbf{x}^\top \mathbf{w})^2$ introduces at most polynomial growth, yielding overall polynomial decay. The specific rate $O(k^{-2/d})$ follows from the smoothness inherited from the Gaussian component (see~\citet{scholkopf2002learning} for eigenvalue decay rates of smooth kernels).
\end{proof}

\subsection{Computational Complexity Analysis}
\label{sec:computational_analysis}

We analyze the computational complexity of the \E-product layer and prove it maintains the same asymptotic complexity as standard linear layers.

\subsubsection{Layer Definition}

For input $\mathbf{X} \in \mathbb{R}^{B \times d}$ and weights $\mathbf{W} \in \mathbb{R}^{n \times d}$, the \E-product layer computes output $\mathbf{Y} \in \mathbb{R}^{B \times n}$:
\[
\mathbf{Y}_{ij} = \frac{(\mathbf{X}_i^\top \mathbf{W}_j)^2}{\|\mathbf{X}_i - \mathbf{W}_j\|^2 + \varepsilon}
\]

Using the algebraic identity $\|\mathbf{X}_i - \mathbf{W}_j\|^2 = \|\mathbf{X}_i\|^2 + \|\mathbf{W}_j\|^2 - 2\mathbf{X}_i^\top \mathbf{W}_j$:
\[
\mathbf{Y}_{ij} = \frac{\mathbf{S}_{ij}^2}{\|\mathbf{X}_i\|^2 + \|\mathbf{W}_j\|^2 - 2\mathbf{S}_{ij} + \varepsilon}, \quad \mathbf{S} = \mathbf{X}\mathbf{W}^\top
\]

\subsubsection{Forward Pass Complexity}

\begin{theorem}[Forward Complexity]\label{thm:forward-complexity}
The \E-product forward pass requires $\Theta(Bnd)$ operations:
\begin{enumerate}
    \item GEMM: $\mathbf{S} = \mathbf{X}\mathbf{W}^\top$ \dotfill $2Bnd$
    \item Row norms: $\|\mathbf{X}_i\|^2$ \dotfill $Bd$
    \item Weight norms: $\|\mathbf{W}_j\|^2$ (cached) \dotfill $nd$
    \item Element-wise: square, assemble, reciprocal, multiply \dotfill $5Bn$
\end{enumerate}
Total: $T_{\text{forward}} = 2Bnd + Bd + nd + 5Bn = \Theta(Bnd)$.
\end{theorem}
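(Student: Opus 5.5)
The plan is a direct operation count: write the layer in the matrix form $\mathbf{Y}_{ij}=\mathbf{S}_{ij}^2/(\|\mathbf{X}_i\|^2+\|\mathbf{W}_j\|^2-2\mathbf{S}_{ij}+\varepsilon)$ with $\mathbf{S}=\mathbf{X}\mathbf{W}^\top$, as derived just before the statement, and tally the arithmetic in each stage.

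\emph{Step 1: the GEMM.} $\mathbf{S}$ has $Bn$ entries. Each entry is an inner product of length $d$, costing $d$ multiplications and $d-1$ additions. This gives $Bn(2d-1)\le 2Bnd$, which we record as $2Bnd$ in the standard FLOP convention.

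\emph{Step 2: the norms.} Each row norm $\|\mathbf{X}_i\|^2$ costs about $2d$ operations, for $O(Bd)$ in total. Each weight norm $\|\mathbf{W}_j\|^2$ costs about $2d$, for $O(nd)$ in total. The weight norms depend only on $\mathbf{W}$, so they are computed once per forward pass (or cached) rather than once per batch element. Up to the constant-factor conventions used in the statement, these give the $Bd$ and $nd$ terms.

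\emph{Step 3: the elementwise assembly.} For each of the $Bn$ pairs we perform a constant number of scalar operations:
\begin{itemize}
\item square $\mathbf{S}_{ij}$;
\item form the denominator $\|\mathbf{X}_i\|^2+\|\mathbf{W}_j\|^2-2\mathbf{S}_{ij}+\varepsilon$, with a few adds and a scale;
\item take a reciprocal;
\item multiply.
\end{itemize}
Counting these as five grouped operations gives $5Bn$. The exact constant depends on how fused adds are counted, and it only matters up to $O(1)$ per entry.

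\emph{Step 4: asymptotics.} The total is
\[
T=2Bnd+Bd+nd+5Bn.
\]
Since $B,n,d\ge 1$, each lower-order term is at most $Bnd$, so $T\le 9Bnd$. For the lower bound, $T\ge 2Bnd$. Hence $T=\Theta(Bnd)$.

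It is also worth noting that any correct algorithm must read the inputs and produce every $\mathbf{Y}_{ij}$. Generically, each $\mathbf{Y}_{ij}$ depends on all $d$ coordinates, so the GEMM-based method is optimal up to fast-matrix-multiplication caveats. The statement, however, is only about this algorithm's count, so the upper and lower bounds above suffice.

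The only real subtlety is the bookkeeping convention: whether a multiply-add counts as one or two FLOPs, and whether the cached weight norms are included. I would state the convention explicitly at the start (one FLOP per scalar add or multiply, weight norms included once per pass). With that fixed, the $\Theta(Bnd)$ conclusion is insensitive to the specific constants.
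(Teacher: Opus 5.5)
Your proposal is correct and takes the same approach as the paper, which gives no argument beyond the itemized per-stage tally (GEMM, row norms, cached weight norms, $O(1)$ elementwise work per entry) and its sum. Your explicit bounds $2Bnd \le T \le 9Bnd$ for $B,n,d\ge 1$ make the $\Theta(Bnd)$ step rigorous that the paper leaves implicit, and you are right to read the claim as being about this algorithm's count, since fast matrix multiplication rules out an $\Omega(Bnd)$ lower bound over all algorithms.
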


\subsubsection{Backward Pass Complexity}

\begin{proposition}[Gradient Formulas]\label{prop:backward-gradients}
For $\mathbf{Y} = \mathbf{S}^2/\mathbf{D}$ with $\mathbf{D} = \|\mathbf{X}\|^2 + \|\mathbf{W}\|^2 - 2\mathbf{S} + \varepsilon$ and $\mathbf{S} = \mathbf{X}^\top \mathbf{W}$:
\begin{align}
\nabla_\mathbf{X} \mathbf{Y} &= \frac{2\mathbf{S}(\|\mathbf{W}\|^2 + \|\mathbf{X}\|^2 + \varepsilon - \mathbf{S})}{\mathbf{D}^2}\mathbf{W} - \frac{2\mathbf{S}^2}{\mathbf{D}^2}\mathbf{X} \\
\nabla_\mathbf{W} \mathbf{Y} &= \frac{2\mathbf{S}(\|\mathbf{W}\|^2 + \|\mathbf{X}\|^2 + \varepsilon - \mathbf{S})}{\mathbf{X}} - \frac{2\mathbf{S}^2}{\mathbf{D}^2}\mathbf{W}
\end{align}
\end{proposition}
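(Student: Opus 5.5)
The plan is to treat one entry $\mathbf{Y}_{ij}$ at a time as a scalar function of the row vectors $\mathbf{X}_i$ and $\mathbf{W}_j$, differentiate it with the quotient rule, and then regroup the result into the stated form. Write $S=\mathbf{X}_i^\top\mathbf{W}_j$ and $D=\|\mathbf{X}_i\|^2+\|\mathbf{W}_j\|^2-2S+\varepsilon$, so that $\mathbf{Y}_{ij}=S^2/D$.

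This is the same computation as in the proof of Theorem~\ref{thm:gradient}, now expressed in the expanded-denominator variables. We have $\nabla_{\mathbf{X}_i}S=\mathbf{W}_j$ and
\[
\nabla_{\mathbf{X}_i}D=2\mathbf{X}_i-2\mathbf{W}_j .
\]
The quotient rule then gives
\[
\nabla_{\mathbf{X}_i}\mathbf{Y}_{ij}=\frac{2SD\,\mathbf{W}_j-S^2(2\mathbf{X}_i-2\mathbf{W}_j)}{D^2}=\frac{2S(D+S)}{D^2}\mathbf{W}_j-\frac{2S^2}{D^2}\mathbf{X}_i .
\]

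The one step that needs care is the simplification of the coefficient of $\mathbf{W}_j$. Substituting the definition of $D$ gives
\[
D+S=\|\mathbf{W}_j\|^2+\|\mathbf{X}_i\|^2+\varepsilon-S .
\]
This is exactly the bracket in the first displayed formula, so the $\mathbf{X}$-gradient follows.

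For the $\mathbf{W}$-gradient I would use the symmetry of $S^2/D$ under $\mathbf{X}_i\leftrightarrow\mathbf{W}_j$: both $S$ and $D$ are symmetric in the two arguments. Swapping the roles in the computation above therefore gives
\[
\nabla_{\mathbf{W}_j}\mathbf{Y}_{ij}=\frac{2S(\|\mathbf{W}_j\|^2+\|\mathbf{X}_i\|^2+\varepsilon-S)}{D^2}\mathbf{X}_i-\frac{2S^2}{D^2}\mathbf{W}_j .
\]
In the second displayed formula the factor $\frac{1}{\mathbf{X}}$ should read $\frac{1}{\mathbf{D}^2}\mathbf{X}$; I would state this correction explicitly, since the symmetry argument forces it.

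The formulas in the proposition are entrywise in $(i,j)$, with products and quotients taken elementwise in $\mathbf{S}$ and $\mathbf{D}$. To go from entries to matrices, I would sum the per-entry gradients against an upstream gradient $\mathbf{G}=\partial\mathcal{L}/\partial\mathbf{Y}$. Then $\nabla_{\mathbf{X}}\mathcal{L}$ is one GEMM of an elementwise-weighted $B\times n$ matrix with $\mathbf{W}$, minus a row-scaled copy of $\mathbf{X}$. The analogous statement holds for $\mathbf{W}$, using the transposed weighted matrix times $\mathbf{X}$.

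I do not expect any real obstacle here. The only places an error could slip in are the sign bookkeeping in $\nabla D$, which produces the $+S$ absorbed into $D+S$, and the notational correction to the second formula.
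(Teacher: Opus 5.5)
Your proposal is correct and matches the derivation the paper relies on. The paper gives no separate proof of this proposition; the formulas come from the same quotient-rule computation as in the proof of Theorem~\ref{thm:gradient}, rewritten entrywise using $D+S=\|\mathbf{W}\|^2+\|\mathbf{X}\|^2+\varepsilon-S$. You are also right that the second formula has a typo: it should read $\frac{2\mathbf{S}(\|\mathbf{W}\|^2+\|\mathbf{X}\|^2+\varepsilon-\mathbf{S})}{\mathbf{D}^2}\mathbf{X}-\frac{2\mathbf{S}^2}{\mathbf{D}^2}\mathbf{W}$, which is what your symmetry argument gives.
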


\begin{theorem}[Backward Complexity]\label{thm:backward-complexity}
Given upstream gradient $G \in \mathbb{R}^{B \times n}$:
\begin{enumerate}
    \item Scalar gradient: $\mathbf{G}_S = \mathbf{G} \odot \partial \mathbf{Y}/\partial \mathbf{S}$ \dotfill $6Bn$
    \item Weight gradient: $\partial L/\partial \mathbf{W} = \mathbf{G}_S^\top \mathbf{X}$ \dotfill $2Bnd$
    \item Input gradient: $\partial L/\partial \mathbf{X} = \mathbf{G}_S \mathbf{W}$ \dotfill $2Bnd$
\end{enumerate}
Total: $T_{\text{backward}} = 4Bnd + 6Bn + O(Bd + nd) = \Theta(Bnd)$.
\end{theorem}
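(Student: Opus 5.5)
The plan is to count arithmetic operations in reverse-mode differentiation through the factored forward pass of Theorem~\ref{thm:forward-complexity}. We write $\mathbf{Y}=\mathbf{S}^{\odot 2}\oslash\mathbf{D}$ with $\mathbf{S}=\mathbf{X}\mathbf{W}^\top$ and $\mathbf{D}_{ij}=a_i+c_j-2\mathbf{S}_{ij}+\varepsilon$, where $a_i=\|\mathbf{X}_i\|^2$ and $c_j=\|\mathbf{W}_j\|^2$. The key observation is that $\mathbf{Y}$ depends on $(\mathbf{X},\mathbf{W})$ only through three intermediates: the matrix $\mathbf{S}$ and the two norm vectors $a,c$. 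The chain rule therefore splits the backward pass into two parts. The first is an entrywise part on $B\times n$ arrays. The second is a pullback through $\mathbf{S}$ (two GEMMs) and through the norms (rank-one, row-scaled updates). We treat each part in turn, counting operations in the same way as in Theorem~\ref{thm:forward-complexity}.

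\textbf{Entrywise step.} From the scalar quotient rule used in the proof of Theorem~\ref{thm:gradient}, we have $\partial \mathbf{Y}_{ij}/\partial \mathbf{S}_{ij}=2\mathbf{S}_{ij}(\mathbf{D}_{ij}+\mathbf{S}_{ij})/\mathbf{D}_{ij}^2$ and $\partial \mathbf{Y}_{ij}/\partial a_i=\partial \mathbf{Y}_{ij}/\partial c_j=-\mathbf{S}_{ij}^2/\mathbf{D}_{ij}^2$. We reuse $\mathbf{S}$, $\mathbf{D}$ and the reciprocal $1/\mathbf{D}$ cached from the forward pass. Then $\mathbf{G}_S=\mathbf{G}\odot\partial\mathbf{Y}/\partial\mathbf{S}$ costs a constant number of flops per entry, which we count as $6Bn$. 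The companion array $\mathbf{G}_N=-\mathbf{G}\odot\mathbf{Y}\oslash\mathbf{D}$ also costs $O(Bn)$.

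\textbf{Pullbacks.} Through $\mathbf{S}=\mathbf{X}\mathbf{W}^\top$, the gradients are $\mathbf{G}_S^\top\mathbf{X}$ and $\mathbf{G}_S\mathbf{W}$, each a GEMM costing $2Bnd$. Through the norms we get $\partial L/\partial\mathbf{X}_i \mathrel{+}= 2(\sum_j (\mathbf{G}_N)_{ij})\mathbf{X}_i$ and $\partial L/\partial\mathbf{W}_j \mathrel{+}= 2(\sum_i (\mathbf{G}_N)_{ij})\mathbf{W}_j$. The row and column sums cost $O(Bn)$, and the scalings and additions cost $O(Bd+nd)$. Summing all contributions gives $4Bnd+6Bn+O(Bn+Bd+nd)$. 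Since $B,n,d\ge 1$, every lower-order term is at most a constant multiple of $Bnd$, so the total is $\Theta(Bnd)$. The lower bound holds because the two GEMMs alone already require $\Omega(Bnd)$ operations.

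\textbf{Main obstacle.} The delicate step is verifying that the norm-dependence of $\mathbf{D}$ does not introduce hidden $Bnd$-scale work. A naive per-pair gradient $\nabla_{\mathbf{X}_i}\mathbf{Y}_{ij}$, as in Proposition~\ref{prop:backward-gradients}, would materialize $B n$ vectors of length $d$. The fix is to contract over $j$ (respectively $i$) before multiplying by $\mathbf{X}_i$ (respectively $\mathbf{W}_j$), which collapses these terms into a single row-scaling of cost $O(Bd)$. A secondary point is that the exact constant $6Bn$ depends on the implementation. We would therefore state it as the count for one concrete evaluation order, and note that the $\Theta(Bnd)$ conclusion does not depend on this constant.
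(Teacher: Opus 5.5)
Your proposal is correct and follows the same route as the paper. The paper gives no argument beyond the itemized flop count: an $O(Bn)$ entrywise step for $\mathbf{G}_S$, followed by the two GEMMs $\mathbf{G}_S^\top\mathbf{X}$ and $\mathbf{G}_S\mathbf{W}$ at $2Bnd$ each. Your explicit pullback through $\|\mathbf{X}_i\|^2$ and $\|\mathbf{W}_j\|^2$ is a genuine refinement, since the paper's listed formulas for $\partial L/\partial\mathbf{W}$ and $\partial L/\partial\mathbf{X}$ omit these terms; it correctly shows the lower-order cost is $O(Bn+Bd+nd)$ rather than $6Bn+O(Bd+nd)$, which leaves the $\Theta(Bnd)$ conclusion unchanged.
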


\subsubsection{Asymptotic Comparison}

\begin{table}[ht]
\centering
\begin{tabular}{lcc}
\toprule
\textbf{Component} & \textbf{Linear} & \textbf{\E-Product} \\
\midrule
Forward main & $2Bnd$ & $2Bnd$ \\
Forward aux & $Bn$ & $Bd + nd + 5Bn$ \\
Backward main & $4Bnd$ & $4Bnd$ \\
Backward aux & $2Bn$ & $6Bn + Bd + nd$ \\
\midrule
Total & $\Theta(Bnd)$ & $\Theta(Bnd)$ \\
Overhead & $1$ & $1 + \frac{1}{2n} + \frac{1}{2B} + \frac{2}{d}$ \\
\bottomrule
\end{tabular}
\caption{Complexity comparison. Overhead $<5\%$ for $d, n \geq 64$, $B \geq 16$.}
\label{tab:complexity_summary}
\end{table}

\subsubsection{Per-Neuron FLOPs}

\begin{table}[ht]
\centering
\begin{tabular}{lcc}
\toprule
\textbf{Method} & \textbf{FLOPs} & \textbf{Relative} \\
\midrule
Linear + ReLU & $2d + 1$ & $1.00\times$ \\
Linear + GELU & $2d + 15$ & $\approx 1.03\times$ \\
\E-product (naive) & $5d + 1$ & $\approx 2.5\times$ \\
\E-product (optimized) & $4d + 4$ & $\approx 2.0\times$ \\
\bottomrule
\end{tabular}
\caption{Per-neuron FLOPs. Optimized variant avoids redundant norm computation.}
\label{tab:flop_single}
\end{table}

\subsubsection{Numerical Stability}

\begin{remark}[Stability Properties]
The \E-product inherits numerical stability from:
\begin{enumerate}
    \item \textbf{Bounded outputs}: Self-regulation (Proposition~\ref{prop:self-regulation}) ensures $\E \leq \|\mathbf{W}\|^4/\varepsilon$
    \item \textbf{Lipschitz gradients}: $\|\nabla\E\| \leq L = O(1/\varepsilon^2)$ (Proposition~\ref{prop:lipschitz})
    \item \textbf{Gradient decay}: Outliers produce vanishing gradients (Proposition~\ref{prop:stable-learning})
\end{enumerate}
This eliminates the need for gradient clipping or normalization layers.
\end{remark}

\subsubsection{Implementation Optimizations}

\begin{enumerate}
    \item \textbf{Algebraic identity}: Use $\|\mathbf{X} - \mathbf{W}\|^2 = \|\mathbf{X}\|^2 + \|\mathbf{W}\|^2 - 2\mathbf{X}^\top \mathbf{W}$ to reuse GEMM.
    \item \textbf{Norm caching}: Cache $\|\mathbf{W}\|^2$ between forward passes.
    \item \textbf{Kernel fusion}: Fuse element-wise operations for memory efficiency.
    \item \textbf{Mixed precision}: Use FP32 for denominator, BF16/FP16 elsewhere.
\end{enumerate}

\textbf{Empirical performance}: comparable or slightly higher throughput than a linear baseline in our BF16 runs; 15--25\% memory reduction from eliminated activation storage.

\section{XOR Separability Analysis}
\label{sec:xor_analysis}

This section provides a formal analysis of why a single \E-product unit can solve the XOR problem, which is not linearly separable.

\subsection{Linear Inseparability of XOR}

\begin{proposition}[XOR is Not Linearly Separable]\label{prop:xor-inseparable}
Let $\mathcal{X} = \{(0,0), (0,1), (1,0), (1,1)\}$ with labels $y = \{0, 1, 1, 0\}$ (XOR function). There exists no hyperplane $\{\mathbf{x} : \mathbf{w}^\top\mathbf{x} + b = 0\}$ that separates the classes.
\end{proposition}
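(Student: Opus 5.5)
We argue by contradiction. Suppose some $\mathbf{w}\in\mathbb{R}^2$ and $b\in\mathbb{R}$ separate the classes, with the convention that class $1$ lies in the open half-space $\{\mathbf{w}^\top\mathbf{x}+b>0\}$ and class $0$ in the closed half-space $\{\mathbf{w}^\top\mathbf{x}+b\le 0\}$. The reversed orientation is handled by replacing $(\mathbf{w},b)$ with $(-\mathbf{w},-b)$. The same argument also covers any version where one side is strict and the other non-strict, since we only add inequalities of matching type.

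Writing $\mathbf{w}=(w_1,w_2)$, the four labelled points give the following system:
\begin{align*}
(0,0)\ \text{(class 0)}:&\quad b\le 0,\\
(1,1)\ \text{(class 0)}:&\quad w_1+w_2+b\le 0,\\
(0,1)\ \text{(class 1)}:&\quad w_2+b>0,\\
(1,0)\ \text{(class 1)}:&\quad w_1+b>0.
\end{align*}

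The key step is to compare sums. Adding the two class-$1$ inequalities gives $w_1+w_2+2b>0$. Adding the two class-$0$ inequalities gives $w_1+w_2+2b\le 0$. These cannot both hold, so no separating hyperplane exists.

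Conceptually, the argument uses the midpoint identity $(0,1)+(1,0)=(0,0)+(1,1)=(1,1)$. Both classes share the common midpoint $(\tfrac12,\tfrac12)$, so their convex hulls intersect. Any affine function takes the same value at that point whether it is computed from the class-$0$ pair or the class-$1$ pair, and it cannot be both $>0$ and $\le 0$ there.

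There is essentially no technical obstacle. The only care needed is to fix a precise meaning of ``separates'', namely strict on one side, and to note that the sign-flip symmetry covers the other orientation.
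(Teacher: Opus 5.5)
Your proof is correct and takes the same route as the paper's: both reduce separability to the four sign constraints at the XOR points and derive a contradiction. The paper only asserts that these constraints are inconsistent. Your step of adding the two class-$1$ inequalities and the two class-$0$ inequalities, which gives $w_1+w_2+2b>0$ and $w_1+w_2+2b\le 0$ (equivalently, the shared midpoint $(\tfrac12,\tfrac12)$ of the two convex hulls), supplies the argument the paper leaves out.
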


\begin{proof}
For a separating hyperplane to exist, we require $\operatorname{sign}(\mathbf{w}^\top\mathbf{x} + b)$ to match $y$ for all $\mathbf{x} \in \mathcal{X}$. This yields four constraints that form a contradiction: the positive class points $(0,1), (1,0)$ and negative class points $(0,0), (1,1)$ cannot be separated by any linear function.
\end{proof}

\subsection{Single-Unit \texorpdfstring{\E}{Yat}-Product Solution}

\begin{theorem}[\E-Product Solves XOR]\label{thm:xor-solution}
A single \E-product unit with weight $\mathbf{w} = [1, -1]^\top$ and $\varepsilon > 0$ separates the XOR classes:
\begin{center}
\begin{tabular}{ccc}
\toprule
$\mathbf{x}$ & $\mathbf{w}^\top\mathbf{x}$ & $\E(\mathbf{w}, \mathbf{x})$ \\
\midrule
$(0,0)$ & $0$ & $0$ \\
$(0,1)$ & $-1$ & $1/(5 + \varepsilon)$ \\
$(1,0)$ & $1$ & $1/(1 + \varepsilon)$ \\
$(1,1)$ & $0$ & $0$ \\
\bottomrule
\end{tabular}
\end{center}
The threshold $\tau = 0$ separates class 1 (positive response) from class 0 (zero response).
\end{theorem}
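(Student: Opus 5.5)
The plan is a direct evaluation at the four XOR points followed by a sign check. Throughout, $\mathbf{w}=[1,-1]^\top$ and $\varepsilon>0$ are fixed. The denominator $\|\mathbf{w}-\mathbf{x}\|^2+\varepsilon\ge\varepsilon>0$ never vanishes, so $\E(\mathbf{w},\mathbf{x})$ is well defined and finite for every input. The same bound gives the key equivalence: $\E(\mathbf{w},\mathbf{x})=0$ holds if and only if $\mathbf{w}^\top\mathbf{x}=0$, and otherwise $\E(\mathbf{w},\mathbf{x})>0$. This mirrors the first step of the proof of Theorem~\ref{thm:min-similarity}.

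Next I will fill in the table entries one row at a time.
\begin{itemize}
\item For $(0,0)$: the inner product is $0$, so the numerator vanishes and $\E=0$.
\item For $(1,1)$: $\mathbf{w}^\top\mathbf{x}=1-1=0$, so again $\E=0$. This is the orthogonality case noted after Eq.~\eqref{eq:yat_product}.
\item For $(1,0)$: $\mathbf{w}^\top\mathbf{x}=1$ and $\|\mathbf{w}-\mathbf{x}\|^2=\|(0,-1)\|^2=1$, giving $\E=1/(1+\varepsilon)$.
\item For $(0,1)$: $\mathbf{w}^\top\mathbf{x}=-1$, whose square is $1$, and $\|\mathbf{w}-\mathbf{x}\|^2=\|(1,-2)\|^2=5$, giving $\E=1/(5+\varepsilon)$.
\end{itemize}
Both positive-class values are strictly positive because $\varepsilon>0$.

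Finally, define the classifier $\mathbf{x}\mapsto\mathbf{1}[\E(\mathbf{w},\mathbf{x})>\tau]$ with $\tau=0$. By the table, it outputs $0$ on $(0,0)$ and $(1,1)$ and $1$ on $(0,1)$ and $(1,0)$, which reproduces the XOR labels. Any threshold in the interval $[0,1/(5+\varepsilon))$ would also work, which gives a positive margin. Setting this beside Proposition~\ref{prop:xor-inseparable} shows that a single \E-unit achieves what no single linear unit can.

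None of these steps is genuinely difficult. The only point that needs care is the distance arithmetic in the denominators, where it is easy to confuse $\|\mathbf{w}-\mathbf{x}\|^2$ with $\|\mathbf{x}\|^2$. The asymmetry between $1$ and $5$ in the two positive rows comes from $\mathbf{w}$ lying at $(1,-1)$ rather than on the unit square, so I will recompute those two entries explicitly.
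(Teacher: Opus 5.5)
Your proposal is correct and follows the paper's proof: both evaluate $(\mathbf{w}^\top\mathbf{x})^2$ and $\|\mathbf{w}-\mathbf{x}\|^2+\varepsilon$ directly at the four XOR points, and your distances ($\|(0,-1)\|^2=1$, $\|(1,-2)\|^2=5$) match the paper's. Your remarks on well-definedness via $\varepsilon>0$ and on the margin for any $\tau\in[0,1/(5+\varepsilon))$ are valid additions that the paper leaves implicit.
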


\begin{proof}
For $\mathbf{x} = (0,0)$: $\mathbf{w}^\top\mathbf{x} = 0$, so $\E = 0^2/(\|\mathbf{w}\|^2 + \varepsilon) = 0$.

For $\mathbf{x} = (1,1)$: $\mathbf{w}^\top\mathbf{x} = 1 - 1 = 0$, so $\E = 0$.

For $\mathbf{x} = (0,1)$: $\mathbf{w}^\top\mathbf{x} = -1$, $\|\mathbf{w} - \mathbf{x}\|^2 = 1 + 4 = 5$, so $\E = 1/(5 + \varepsilon) > 0$.

For $\mathbf{x} = (1,0)$: $\mathbf{w}^\top\mathbf{x} = 1$, $\|\mathbf{w} - \mathbf{x}\|^2 = 0 + 1 = 1$, so $\E = 1/(1 + \varepsilon) > 0$.

Thus $\E > 0$ for the positive class and $\E = 0$ for the negative class.
\end{proof}

\subsection{Geometric Mechanism}

\begin{corollary}[Orthogonality-Based Separation]\label{cor:orthogonal-xor}
The solution exploits the \E-product's orthogonality property: $\E(\mathbf{w}, \mathbf{x}) = 0$ if and only if $\mathbf{w} \perp \mathbf{x}$ (Theorem~\ref{thm:min-similarity}). The weight $\mathbf{w} = [1, -1]^\top$ is orthogonal to exactly the negative class points $(0,0)$ and $(1,1)$.
\end{corollary}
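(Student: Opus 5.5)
The plan is to prove the two claims of Corollary~\ref{cor:orthogonal-xor} separately. The first is the general orthogonality characterization. The second is a finite check on the four XOR inputs.

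For the characterization I will not route through Theorem~\ref{thm:min-similarity}, because that result is stated only on the simplex $\Delta^{n-1}$ and $\mathbf{w}=[1,-1]^\top$ does not lie there. Instead I argue directly from Eq.~\eqref{eq:yat_product}, exactly as in the first line of the proof of Theorem~\ref{thm:min-similarity}. For $\varepsilon>0$ the denominator satisfies $\|\mathbf{w}-\mathbf{x}\|^2+\varepsilon\ge\varepsilon>0$ for every $\mathbf{w},\mathbf{x}\in\mathbb{R}^d$. Hence $\E(\mathbf{w},\mathbf{x})=0$ if and only if $(\mathbf{w}^\top\mathbf{x})^2=0$, that is, if and only if $\mathbf{w}^\top\mathbf{x}=0$. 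This is precisely $\mathbf{w}\perp\mathbf{x}$, with the convention that the zero vector is orthogonal to everything. I will also remark that on the simplex this specializes to the disjoint-support statement of Theorem~\ref{thm:min-similarity}, which is what the citation in the corollary intends.

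For the second claim I compute $\mathbf{w}^\top\mathbf{x}=x_1-x_2$ at each of the four points. This gives $0$ at $(0,0)$ and at $(1,1)$, and gives $-1$ and $1$ at $(0,1)$ and $(1,0)$ respectively. So $\mathbf{w}$ is orthogonal to exactly the two negative-class points. Combined with the characterization, the response is $\E=0$ on class $0$ and $\E>0$ on class $1$, which matches the values already recorded in Theorem~\ref{thm:xor-solution}.

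I expect no real obstacle here. The only point needing care is the one above: the cited theorem has a restricted (simplex) hypothesis, so the denominator-positivity argument must be given in $\mathbb{R}^d$ directly rather than by appeal to Theorem~\ref{thm:min-similarity}.
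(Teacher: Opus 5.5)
Your proposal is correct and follows the reasoning the paper relies on. The paper gives no separate proof of this corollary: it justifies the characterization by citing Theorem~\ref{thm:min-similarity}, whose proof opens with your denominator-positivity step ($\|\mathbf{p}-\mathbf{q}\|_2^2+\varepsilon>0$), and the orthogonality check is the same evaluation of $\mathbf{w}^\top\mathbf{x}$ done in the proof of Theorem~\ref{thm:xor-solution}. Running that step directly in $\mathbb{R}^d$ is the right fix, since $\mathbf{w}=[1,-1]^\top$, $(0,0)$ and $(1,1)$ are not on the simplex, so the cited theorem does not literally apply.
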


\begin{remark}[Superposition Property]
The squared numerator $(\mathbf{w}^\top\mathbf{x})^2$ induces superposition: the response is identical for $\mathbf{x}$ and $-\mathbf{x}$. This enables classifying antipodal points similarly, which is essential for XOR where $(0,1)$ and $(1,0)$ must share a class despite $[1,-1]^\top(0,1) = -1$ and $[1,-1]^\top(1,0) = +1$.
\end{remark}

\subsection{Gradient Stability}

\begin{proposition}[Well-Posed Optimization Landscape]\label{prop:xor-landscape}
The \E-product's gradient properties (Proposition~\ref{prop:stable-learning}, Theorem~\ref{thm:gradient}) ensure:
\begin{enumerate}
    \item No vanishing gradients at $\mathbf{x} = \mathbf{0}$: the gradient is defined and non-degenerate for $\varepsilon > 0$
    \item Lipschitz continuity prevents exploding gradients (Proposition~\ref{prop:lipschitz})
    \item Outlier robustness: gradient magnitude decays as $O(1/\|\mathbf{x}\|)$ for distant inputs
\end{enumerate}
\end{proposition}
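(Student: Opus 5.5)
The plan is to derive all three items directly from the explicit gradient of Theorem~\ref{thm:gradient}. For the map $\mathbf{x}\mapsto\E(\mathbf{w},\mathbf{x})$ that gradient reads
\[
\nabla_{\mathbf{x}}\E=\frac{2s}{D}\Bigl(\mathbf{w}-\frac{s(\mathbf{x}-\mathbf{w})}{D}\Bigr),\qquad s=\mathbf{w}^\top\mathbf{x},\quad D=\varepsilon+\|\mathbf{x}-\mathbf{w}\|^2 .
\]
The single structural fact used throughout is $D\ge\varepsilon>0$. Together with Lemma~\ref{lem:analyticity}, it shows that $\nabla_{\mathbf{x}}\E$ and $\nabla_{\mathbf{w}}\E$ are finite, real-analytic functions on all of $\mathbb{R}^d\times\mathbb{R}^d$.

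\textbf{Item 1 (behaviour at $\mathbf{x}=\mathbf{0}$).} Substituting $\mathbf{x}=\mathbf{0}$ gives $s=0$ and $D=\|\mathbf{w}\|^2+\varepsilon$, so the gradient is well-defined and free of singularities. It is, however, exactly zero, since the whole expression carries the prefactor $s$. The literal reading of ``non-degenerate'' is therefore false, and I would make the claim precise at second order instead. Differentiating $\E=s^2/D$ twice and using that every term containing $s$ vanishes at the origin, the Hessian there is
\[
\nabla^2_{\mathbf{x}}\E(\mathbf{w},\mathbf{0})=\frac{2\,\mathbf{w}\mathbf{w}^\top}{\|\mathbf{w}\|^2+\varepsilon}.
\]
This matrix is nonzero whenever $\mathbf{w}\neq\mathbf{0}$. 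Consequently, for any $\mathbf{x}$ near $\mathbf{0}$ with $\mathbf{w}^\top\mathbf{x}\neq 0$,
\[
\nabla_{\mathbf{x}}\E=\frac{2(\mathbf{w}^\top\mathbf{x})}{\|\mathbf{w}\|^2+\varepsilon}\,\mathbf{w}+O(\|\mathbf{x}\|^2),
\]
so the signal is recovered linearly away from the origin. This is the sense in which the landscape is non-degenerate rather than flat. The same computation applies with the roles of $\mathbf{w}$ and $\mathbf{x}$ exchanged.

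\textbf{Item 2 (no exploding gradients).} Proposition~\ref{prop:lipschitz} already bounds $\|\nabla_{\mathbf{x}}\E\|$ on the unit ball. On a ball of radius $R$ I would repeat the same estimate using $|s|\le R^2$, $\|\mathbf{x}-\mathbf{w}\|\le 2R$ and $D\ge\varepsilon$, which gives
\[
\|\nabla_{\mathbf{x}}\E\|\le \frac{2R^3}{\varepsilon}+\frac{4R^5}{\varepsilon^2}.
\]
This is finite for fixed $\varepsilon$ and $R$. Outside bounded sets, item~3 controls the gradient, so it is bounded globally for fixed $\mathbf{w}$.

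\textbf{Item 3 (outlier robustness, and the main obstacle).} Proposition~\ref{prop:stable-learning} gives only that the gradient tends to zero, so the explicit rate $O(1/\|\mathbf{x}\|)$ needs a sharper bookkeeping of the gradient formula. The delicate part is the second term: naive bounds $|s|=O(\|\mathbf{x}\|)$ and $D=\Omega(1)$ would lose the rate. The fix is to use $D\ge\|\mathbf{x}-\mathbf{w}\|^2\ge(\|\mathbf{x}\|-\|\mathbf{w}\|)^2$, so that $D\gtrsim\|\mathbf{x}\|^2$ for large $\|\mathbf{x}\|$. Then the two factors behave as follows.
\begin{itemize}
\item The outer factor satisfies $2|s|/D\le 2\|\mathbf{w}\|\|\mathbf{x}\|/D=O(1/\|\mathbf{x}\|)$.
\item The bracket is $O(1)$, because $|s|\,\|\mathbf{x}-\mathbf{w}\|/D\le \|\mathbf{w}\|\|\mathbf{x}\|\,\|\mathbf{x}-\mathbf{w}\|/\|\mathbf{x}-\mathbf{w}\|^2=O(1)$.
\end{itemize}
Multiplying the two gives $\|\nabla_{\mathbf{x}}\E\|=O(\|\mathbf{w}\|(1+\|\mathbf{w}\|)/\|\mathbf{x}\|)$, which is the claimed rate.
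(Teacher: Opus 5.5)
Your proposal is correct and uses the same ingredients the paper invokes: the explicit gradient of Theorem~\ref{thm:gradient}, the Lipschitz estimate of Proposition~\ref{prop:lipschitz}, and the decay argument behind Proposition~\ref{prop:stable-learning}. The paper gives no proof of this proposition beyond that citation, and in each item your more careful version is the one that holds up.

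For item~1 you are right that the claim is false as written. The prefactor $2s/D$ makes the gradient in $\mathbf{x}$ vanish exactly at $\mathbf{x}=\mathbf{0}$, so Theorem~\ref{thm:gradient} contradicts ``non-degenerate'' rather than supporting it. Only ``defined'' (from $D\ge\varepsilon$) survives.

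For item~2, Proposition~\ref{prop:lipschitz} only covers $\|\mathbf{w}\|,\|\mathbf{x}\|\le 1$. Your radius-$R$ estimate combined with the tail bound is what a global (fixed-$\mathbf{w}$) statement actually requires.

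For item~3, the paper's proof of Proposition~\ref{prop:stable-learning} justifies the rate by writing $D=O(\|\mathbf{x}\|^2)$. That is the wrong direction for controlling $1/D$. Your lower bound $D\ge(\|\mathbf{x}\|-\|\mathbf{w}\|)^2$ is the missing step.

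There are two qualifications on your repair of item~1.

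First, the Hessian $2\mathbf{w}\mathbf{w}^\top/(\|\mathbf{w}\|^2+\varepsilon)$ has rank one. Because every term of the gradient carries the factor $s=\mathbf{w}^\top\mathbf{x}$, the gradient vanishes on the whole hyperplane $\mathbf{w}^\perp$, which consists of global minima (value $0$). So for $d\ge 2$ the origin is a degenerate, non-isolated critical point. What you have shown is linear growth of the gradient along $\mathbf{w}$, while the landscape is genuinely flat along $\mathbf{w}^\perp$. State the repaired claim that way, not as ``non-degenerate rather than flat.''

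Second, your remark that ``the same computation applies with the roles of $\mathbf{w}$ and $\mathbf{x}$ exchanged'' is valid only if it means evaluating $\nabla_{\mathbf{w}}$ at $\mathbf{w}=\mathbf{0}$. The reading that matters for training is the parameter gradient contributed by the input $\mathbf{x}=\mathbf{0}$. There the response is identically $0$ as a function of $\mathbf{w}$, so $\nabla_{\mathbf{w}}$ vanishes for every $\mathbf{w}$ and no higher-order expansion helps. Likewise, every input in $\mathbf{w}^\perp$ contributes zero gradient to $\mathbf{w}$; in the XOR example with $\mathbf{w}=[1,-1]^\top$, these are $(0,0)$ and $(1,1)$. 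This only strengthens your conclusion that item~1 cannot be proved as stated.
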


These properties contrast with ReLU neurons, which suffer from "dead neuron" problems when $\mathbf{w}^\top\mathbf{x} < 0$, and linear neurons, which cannot separate XOR at all.

\subsection{Connection to Kernel Theory}

The single-unit XOR solution demonstrates the expressive power of a Mercer kernel (Theorem~\ref{thm:mercer}) in the primal form. The \E-product's RKHS existence (Theorem~\ref{thm:rkhs-embedding}) guarantees that this solution lies within a well-defined function space, connecting to classical kernel methods while avoiding the computational overhead of Gram matrix inversion.

\section{Decision Boundary Analysis}
\label{sec:vortex_analysis}

This section provides a formal analysis of the decision boundaries and space-partitioning behavior induced by \E-product neurons.

\subsection{Localized Response Fields}

\begin{proposition}[Bounded Activation Landscape]\label{prop:bounded-activation}
For fixed $\mathbf{w}$ and $\varepsilon > 0$, the \E-product satisfies:
\begin{enumerate}
    \item $0 \leq \E(\mathbf{w}, \mathbf{x}) \leq \|\mathbf{w}\|^4/\varepsilon$ for all $\mathbf{x}$
    \item $\E(\mathbf{w}, \mathbf{w}) = \|\mathbf{w}\|^4/\varepsilon$ (maximum at identity)
    \item $\lim_{k \to \infty} \E(\mathbf{w}, k\mathbf{u}) = \|\mathbf{w}\|^2\cos^2\theta$ for unit $\mathbf{u}$ (Proposition~\ref{prop:self-regulation})
\end{enumerate}
Thus each neuron defines a bounded, localized response field centered at its prototype.
\end{proposition}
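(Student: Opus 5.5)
The plan is to verify each of the three items in turn. Before that, item~(1) as stated is false and has to be corrected, which also breaks item~(2)'s claim that the maximum is attained at identity.

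\emph{Items (2) and (3).} Item~(2) is immediate: setting $\mathbf{x}=\mathbf{w}$ gives $\E(\mathbf{w},\mathbf{w})=\|\mathbf{w}\|^4/\varepsilon$, exactly the algebra of Theorem~\ref{thm:max-similarity}. Item~(3) is Proposition~\ref{prop:self-regulation} verbatim. Nonnegativity in item~(1) is also immediate, since the numerator is a square and the denominator is at least $\varepsilon>0$.

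\emph{Why the upper bound in item~(1) fails.} Write $a=\|\mathbf{w}\|^2$ and restrict to the ray $\mathbf{x}=c\mathbf{w}$. Then
\[
\E(\mathbf{w},c\mathbf{w})=\frac{c^2a^2}{a(1-c)^2+\varepsilon},
\]
and for $c=1+\delta$ with small $\delta>0$ this equals $a^2(1+2\delta+O(\delta^2))/(\varepsilon+a\delta^2)>a^2/\varepsilon$. So the identity is not the maximizer, and $\|\mathbf{w}\|^4/\varepsilon$ is not an upper bound. The correct statement to prove is
\[
0\le \E(\mathbf{w},\mathbf{x})\le \frac{\|\mathbf{w}\|^2(\|\mathbf{w}\|^2+\varepsilon)}{\varepsilon},
\]
with equality at $\mathbf{x}=(1+\varepsilon/\|\mathbf{w}\|^2)\mathbf{w}$. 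This is still a finite, $\mathbf{x}$-independent bound, so the qualitative conclusion (a bounded response field) survives.

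\emph{Proof of the corrected bound.} Set $t=\mathbf{w}^\top\mathbf{x}$, so the denominator is $\|\mathbf{x}\|^2+a-2t+\varepsilon$. Cauchy--Schwarz gives $\|\mathbf{x}\|^2\ge t^2/a$ (assuming $\mathbf{w}\neq 0$; if $\mathbf{w}=0$ then $\E\equiv 0$). Hence
\[
\E(\mathbf{w},\mathbf{x})\le \frac{t^2}{t^2/a-2t+a+\varepsilon}=\frac{a\,t^2}{(t-a)^2+a\varepsilon}.
\]
This reduces the problem to maximizing the one-variable function $g(t)=t^2/((t-a)^2+a\varepsilon)$ over $t\in\mathbb{R}$.

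\emph{The main obstacle: maximizing $g$.} Setting $g'(t)=0$ gives $(t-a)^2+a\varepsilon=t(t-a)$, i.e.\ $t=a+\varepsilon$, and then $g(a+\varepsilon)=(a+\varepsilon)/\varepsilon$. To confirm this is the global maximum, note $g(0)=0$ and $g(t)\to 1<(a+\varepsilon)/\varepsilon$ as $|t|\to\infty$, and $a+\varepsilon$ is the only nonzero critical point. Equality in Cauchy--Schwarz forces $\mathbf{x}\parallel\mathbf{w}$, which identifies the maximizer as $\mathbf{x}=(1+\varepsilon/a)\mathbf{w}$. The point to make explicit in the write-up is that this peak sits slightly beyond the prototype, not at it. Item~(2) should therefore be reworded to say only that $\|\mathbf{w}\|^4/\varepsilon$ is the value at identity, which exceeds the value at every $\mathbf{x}$ orthogonal to $\mathbf{w}$ (those give $0$).
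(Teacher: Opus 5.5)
Your proposal is correct, and your objection to item~(1) is justified: this is exactly where the paper's own argument breaks. The paper proves the upper bound by asserting that the maximum occurs where the denominator is minimized, i.e.\ at $\mathbf{x}=\mathbf{w}$. That step tacitly treats the numerator $(\mathbf{w}^\top\mathbf{x})^2$ as constant. Along the ray $\mathbf{x}=c\mathbf{w}$ the numerator has nonzero derivative at $c=1$, while the denominator is stationary there. So the response keeps increasing past the prototype, as your expansion shows; concretely, $\|\mathbf{w}\|=\varepsilon=1$ and $\mathbf{x}=2\mathbf{w}$ give $\E(\mathbf{w},\mathbf{x})=2>1=\|\mathbf{w}\|^4/\varepsilon$.

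Your replacement argument is sound: use Cauchy--Schwarz to reduce to the scalar $t=\mathbf{w}^\top\mathbf{x}$, then maximize $g(t)=t^2/((t-a)^2+a\varepsilon)$. It gives the sharp bound $\E(\mathbf{w},\mathbf{x})\le\|\mathbf{w}\|^4/\varepsilon+\|\mathbf{w}\|^2$, attained (for $\mathbf{w}\neq\mathbf{0}$) only at $\mathbf{x}=(1+\varepsilon/\|\mathbf{w}\|^2)\mathbf{w}$. The one step to spell out is why the critical point is the global maximum. We have $g\ge 0=g(0)$, and $g(a+\varepsilon)=(a+\varepsilon)/\varepsilon$ exceeds the limit $1$ at $\pm\infty$. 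Hence the supremum is attained at a critical point, and $a+\varepsilon$ is the only candidate. For items~(2) and~(3) you follow the paper (direct substitution, and Proposition~\ref{prop:self-regulation}), minus the false phrase \emph{maximum at identity}.

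Two consequences are worth recording. First, the same incorrect bound reappears in the paper's numerical-stability remark ($\E\le\|\mathbf{W}\|^4/\varepsilon$) and needs the same additive correction. Part~(3) of Corollary~\ref{cor:orthogonal-separation}, by contrast, only uses the value $\E(\mathbf{w}_i,\mathbf{w}_i)=\|\mathbf{w}_i\|^4/\varepsilon$ and is unaffected. Second, the closing sentence of the proposition holds only loosely. Your corrected item~(1) establishes boundedness, but the peak is displaced outward along $\mathbf{w}$. Moreover, by item~(3) the response tends to $\|\mathbf{w}\|^2\cos^2\theta$ rather than $0$ at infinity, so the field is not localized in any strict sense.
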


\begin{proof}
(1) Non-negativity is immediate. The maximum occurs when the denominator is minimized ($\mathbf{x} = \mathbf{w}$), giving $\|\mathbf{w}\|^4/\varepsilon$.
(2) Direct substitution.
(3) From Proposition~\ref{prop:self-regulation}.
\end{proof}

\subsection{Non-Linear Decision Boundaries}

\begin{theorem}[Algebraic Decision Surfaces]\label{thm:decision-boundary}
The decision boundary between prototypes $\mathbf{w}_i$ and $\mathbf{w}_j$ is the algebraic surface:
\[
\langle \mathbf{w}_i, \mathbf{x}\rangle^2(\|\mathbf{w}_j - \mathbf{x}\|^2 + \varepsilon) = \langle \mathbf{w}_j, \mathbf{x}\rangle^2(\|\mathbf{w}_i - \mathbf{x}\|^2 + \varepsilon).
\]
This surface is generically non-linear (quartic in $\mathbf{x}$), smooth by analyticity (Lemma~\ref{lem:analyticity}), and Lipschitz-continuous in its parameters (Proposition~\ref{prop:lipschitz}).
\end{theorem}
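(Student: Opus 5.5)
The plan is to derive the boundary equation directly from the decision rule, then read off the three properties: degree, smoothness and parameter regularity. Assign $\mathbf{x}$ to the prototype with the larger \E-response. The decision boundary is then the level set $\{\mathbf{x} : F(\mathbf{x}) = 0\}$ of
\[
F(\mathbf{x}) := \E(\mathbf{w}_i,\mathbf{x}) - \E(\mathbf{w}_j,\mathbf{x}).
\]
Both denominators $D_i(\mathbf{x}) = \|\mathbf{w}_i-\mathbf{x}\|^2+\varepsilon$ and $D_j(\mathbf{x})$ are at least $\varepsilon>0$, so multiplying $F=0$ by $D_iD_j$ gives an equivalence, not merely an implication. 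This yields exactly the stated identity $\langle\mathbf{w}_i,\mathbf{x}\rangle^2 D_j(\mathbf{x}) = \langle\mathbf{w}_j,\mathbf{x}\rangle^2 D_i(\mathbf{x})$. Equivalently, the boundary is the zero set of the polynomial $P(\mathbf{x}) := D_iD_j\,F$.

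For the degree, expand $D_j = \|\mathbf{x}\|^2 - 2\mathbf{w}_j^\top\mathbf{x} + \|\mathbf{w}_j\|^2+\varepsilon$. Then $P$ is a polynomial of degree at most four in $\mathbf{x}$. Its quartic part is $\|\mathbf{x}\|^2\big(\langle\mathbf{w}_i,\mathbf{x}\rangle^2-\langle\mathbf{w}_j,\mathbf{x}\rangle^2\big)$, which does not vanish identically unless $\mathbf{w}_i=\pm\mathbf{w}_j$. So for generic prototypes, meaning outside the closed measure-zero set $\mathbf{w}_i=\pm\mathbf{w}_j$, the surface is genuinely quartic and in particular non-linear.

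For smoothness, $F$ is real-analytic on $\mathbb{R}^d$ by Lemma~\ref{lem:analyticity}, so the boundary is a real-analytic variety. Strictly, it is a smooth hypersurface only near points where $\nabla F \neq 0$. There the implicit function theorem applies, and Theorem~\ref{thm:gradient} gives $\nabla F$ explicitly. I would state smoothness in that sense: off the singular locus $\{F=0,\ \nabla F=0\}$, which is itself a proper analytic subset for generic $\mathbf{w}_i,\mathbf{w}_j$.

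The step I expect to be the main obstacle is ``Lipschitz in its parameters,'' because Proposition~\ref{prop:lipschitz} is stated in $\mathbf{x}$, not in $\mathbf{w}$. By the symmetry of $\E$ in its two arguments, the same gradient bound applies to $\mathbf{w}\mapsto\E(\mathbf{w},\mathbf{x})$ on the unit ball. Hence the defining function $F$ is Lipschitz in $(\mathbf{w}_i,\mathbf{w}_j)$ with constant $2/\varepsilon+4/\varepsilon^2$, uniformly for $\mathbf{x}$ in the unit ball. To pass from the defining function to the surface itself, I would work at regular points. There the implicit function theorem expresses the boundary locally as a graph whose displacement under a parameter perturbation $\delta$ is at most $L\delta/|\nabla_{\mathbf{x}}F|$. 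This gives local Lipschitz dependence of the surface on $(\mathbf{w}_i,\mathbf{w}_j)$ wherever $\nabla_{\mathbf{x}}F$ is bounded away from zero. I would make this restriction explicit in the write-up.
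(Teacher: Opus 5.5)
Your argument follows the paper's proof. Like the paper, you cross-multiply using the positive denominators, expand to degree at most four, and then invoke Lemma~\ref{lem:analyticity} and Proposition~\ref{prop:lipschitz}. You go further in three places. You identify the quartic part and the exceptional set $\mathbf{w}_i=\pm\mathbf{w}_j$, which the paper does not. You restrict smoothness to regular points, whereas the paper asserts it globally. And you transfer the Lipschitz bound from $\mathbf{x}$ to the parameters via symmetry and the implicit function theorem, whereas the paper simply cites Proposition~\ref{prop:lipschitz} for the surface.

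The regular-point restriction is necessary, not merely cautious. Write $a_k:=\langle\mathbf{w}_k,\mathbf{x}\rangle/\sqrt{D_k(\mathbf{x})}$, which is analytic with $\nabla a_k(\mathbf{0})=\mathbf{w}_k/\sqrt{\|\mathbf{w}_k\|^2+\varepsilon}$. Then $F=(a_i-a_j)(a_i+a_j)$. The point $\mathbf{x}=\mathbf{0}$ always lies on the boundary, with $\nabla F(\mathbf{0})=\mathbf{0}$. For linearly independent prototypes the two factors have independent gradients at $\mathbf{0}$, so near $\mathbf{0}$ the boundary consists of two transversally crossing sheets. Hence the statement's unqualified claim ``smooth by analyticity'' fails at $\mathbf{0}$.
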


\begin{proof}
The boundary is defined by $\E(\mathbf{w}_i, \mathbf{x}) = \E(\mathbf{w}_j, \mathbf{x})$. Cross-multiplying:
\[
\frac{\langle \mathbf{w}_i, \mathbf{x}\rangle^2}{\|\mathbf{w}_i - \mathbf{x}\|^2 + \varepsilon} = \frac{\langle \mathbf{w}_j, \mathbf{x}\rangle^2}{\|\mathbf{w}_j - \mathbf{x}\|^2 + \varepsilon}
\]
yields the stated polynomial equation. Expanding shows terms up to degree 4 in $\mathbf{x}$. Smoothness follows from Lemma~\ref{lem:analyticity} since the \E-product is real-analytic. Lipschitz continuity of the boundary location follows from Proposition~\ref{prop:lipschitz}.
\end{proof}

\begin{remark}[Contrast with Linear Boundaries]
Conventional linear classifiers produce hyperplane boundaries $({\mathbf{w}_i - \mathbf{w}_j})^\top\mathbf{x} = 0$. The \E-product's quartic surfaces enable more flexible class regions while maintaining the smoothness properties required for stable optimization.
\end{remark}

\subsection{Orthogonality and Maximal Separation}

\begin{corollary}[Orthogonal Prototypes Induce Maximal Separation]\label{cor:orthogonal-separation}
If prototypes satisfy $\langle \mathbf{w}_i, \mathbf{w}_j \rangle = 0$ for $i \neq j$, then:
\begin{enumerate}
    \item $\E(\mathbf{w}_i, \mathbf{w}_j) = 0$ (zero cross-response)
    \item On the probability simplex, this corresponds to $\mathrm{KL}(\mathbf{w}_i \| \mathbf{w}_j) = \infty$ (Theorem~\ref{thm:min-similarity})
    \item Decision boundaries are maximally separated from prototype cores
\end{enumerate}
\end{corollary}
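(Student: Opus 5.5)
Item (1) follows directly from the definition. If $\langle \mathbf{w}_i,\mathbf{w}_j\rangle=0$, the numerator of $\E(\mathbf{w}_i,\mathbf{w}_j)=\langle \mathbf{w}_i,\mathbf{w}_j\rangle^2/(\|\mathbf{w}_i-\mathbf{w}_j\|^2+\varepsilon)$ vanishes. The denominator is at least $\varepsilon>0$, so the quotient is exactly $0$. This is the same one-line observation used in the XOR analysis (Theorem~\ref{thm:xor-solution}).

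For item (2) I will assume that the prototypes lie in $\Delta^{n-1}$. On the simplex all coordinates are nonnegative, so $\langle \mathbf{w}_i,\mathbf{w}_j\rangle=\sum_k w_{i,k}w_{j,k}=0$ forces $w_{i,k}w_{j,k}=0$ for every $k$. The supports are therefore disjoint, and Theorem~\ref{thm:min-similarity} (equivalently Proposition~\ref{prop:simplex-extremal}) applies. Pick any $k\in\supp(\mathbf{w}_i)$; then $w_{j,k}=0$, so the term $w_{i,k}\log(w_{i,k}/0)=+\infty$ makes $\mathrm{KL}(\mathbf{w}_i\|\mathbf{w}_j)=\infty$.

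Item (3) is the only qualitative claim, and making it precise is the main obstacle. I will interpret it through the algebraic boundary of Theorem~\ref{thm:decision-boundary}. The aim is to show that each prototype core lies strictly on its own side of the boundary, with a margin. At $\mathbf{x}=\mathbf{w}_i$, Proposition~\ref{prop:bounded-activation} gives $\E(\mathbf{w}_i,\mathbf{w}_i)=\|\mathbf{w}_i\|^4/\varepsilon$, while item (1) gives $\E(\mathbf{w}_j,\mathbf{w}_i)=0$. So the gap $\E(\mathbf{w}_i,\cdot)-\E(\mathbf{w}_j,\cdot)$ is maximal among all possible values at the core. By Proposition~\ref{prop:bounded-activation}(1) it cannot exceed $\|\mathbf{w}_i\|^4/\varepsilon$.

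Next I will convert this gap into a distance. On a bounded region where the Lipschitz bound of Proposition~\ref{prop:lipschitz} applies (after rescaling so that $\|\mathbf{w}\|\le 1$, $\|\mathbf{x}\|\le1$), each of the two responses changes by at most $L\|\mathbf{x}-\mathbf{w}_i\|$. Hence the difference changes by at most $2L\|\mathbf{x}-\mathbf{w}_i\|$. The boundary, where the difference is zero, therefore lies at distance at least $\|\mathbf{w}_i\|^4/(2L\varepsilon)$ from $\mathbf{w}_i$.

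For the word "maximally" I will argue as follows. For a general pair, $\E(\mathbf{w}_j,\mathbf{w}_i)>0$, so the gap at the core is strictly smaller and the resulting margin bound is smaller. Orthogonality makes the cross term vanish and thus maximizes this lower bound. I expect this interpretive step to carry the most weight, and I will state explicitly that "maximal" refers to maximizing the core-to-boundary margin certified by this Lipschitz argument.
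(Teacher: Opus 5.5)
Items (1) and (2) match the paper's proof: the numerator vanishes, and on the simplex nonnegativity turns $\langle\mathbf{w}_i,\mathbf{w}_j\rangle=0$ into disjoint supports, hence $\mathrm{KL}=\infty$. For (3) you take a different route. The paper only observes that $\E(\mathbf{w}_i,\mathbf{w}_i)=\|\mathbf{w}_i\|^4/\varepsilon\neq 0=\E(\mathbf{w}_j,\mathbf{w}_i)$. So the core is not on the boundary; in fact it lies strictly inside its own region. You upgrade this pointwise fact, via the Lipschitz bound, to an explicit radius around $\mathbf{w}_i$ that contains no boundary points. That is more informative than the paper's argument, but three things need repair. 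Both arguments also tacitly need $\mathbf{w}_i\neq\mathbf{0}$.

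First, drop the appeal to Proposition~\ref{prop:bounded-activation}(1), because it is false. Along $\mathbf{x}=t\mathbf{w}$ the derivative of $\E(\mathbf{w},\cdot)$ at $t=1$ is $2\|\mathbf{w}\|^4/\varepsilon>0$. Reducing to that ray with Cauchy--Schwarz gives $\sup_{\mathbf{x}}\E(\mathbf{w},\mathbf{x})=\|\mathbf{w}\|^4/\varepsilon+\|\mathbf{w}\|^2$, attained at $\mathbf{x}^\star=(1+\varepsilon/\|\mathbf{w}\|^2)\mathbf{w}$. Taking $\mathbf{w}=\mathbf{w}_i$, orthogonality gives $\mathbf{w}_j^\top\mathbf{x}^\star=0$, so the gap at $\mathbf{x}^\star$ exceeds the gap at the core; the gap is not maximal at the core over $\mathbf{x}$. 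What you actually use needs only $\E\ge 0$: the core gap $G=\|\mathbf{w}_i\|^4/\varepsilon-\E(\mathbf{w}_j,\mathbf{w}_i)$ is largest over choices of $\mathbf{w}_j$ exactly when $\mathbf{w}_j\perp\mathbf{w}_i$.

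Second, localize the Lipschitz step. Rescaling obeys $\E_\varepsilon(c\mathbf{w},c\mathbf{x})=c^2\,\E_{\varepsilon/c^2}(\mathbf{w},\mathbf{x})$, so $\varepsilon$ changes and must be tracked. Also, if $\|\mathbf{w}_i\|=1$ after rescaling, the stated constant does not control boundary points just outside the unit ball. Rescale so both prototypes have norm at most $1/2$. Then every boundary point is at distance at least $\min\{1/2,\,G/(2L)\}$ from $\mathbf{w}_i$, and here the minimum is $G/(2L)$.

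Third, be explicit that your notion of maximality is only item (1) restated: with $L$ fixed, maximizing $G/(2L)$ over $\mathbf{w}_j$ is the same as minimizing $\E(\mathbf{w}_j,\mathbf{w}_i)$. The true core-to-boundary distance is not maximized by orthogonality. Take $\mathbf{w}_i=\mathbf{e}_1\in\mathbb{R}^2$. For $\mathbf{w}_j=\mathbf{e}_2$ the boundary polynomial factors as $(x_1-x_2)\bigl[(x_1+x_2)(x_1^2+x_2^2+1+\varepsilon)-2x_1x_2\bigr]$. So $(\tfrac12,\tfrac12)$ lies on the boundary, at distance $1/\sqrt2$. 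For the non-orthogonal $\mathbf{w}_j=-\mathbf{e}_1$ the boundary is exactly $\{x_1=0\}$, at distance $1$. Hence the literal claim in (3) is false, and neither your argument nor the paper's establishes it. What both actually prove is that the core lies strictly inside its own region; yours adds a quantitative margin. State that as the result rather than presenting the certificate as the meaning of ``maximal''.
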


\begin{proof}
(1) If $\langle \mathbf{w}_i, \mathbf{w}_j \rangle = 0$, the numerator of $\E(\mathbf{w}_i, \mathbf{w}_j)$ vanishes.
(2) By Theorem~\ref{thm:min-similarity}, zero \E-product on the simplex implies disjoint supports and infinite KL divergence.
(3) At $\mathbf{x} = \mathbf{w}_i$: $\E(\mathbf{w}_i, \mathbf{w}_i) = \|\mathbf{w}_i\|^4/\varepsilon$ while $\E(\mathbf{w}_j, \mathbf{w}_i) = 0$. Since $\|\mathbf{w}_i\|^4/\varepsilon \neq 0$, the prototype core cannot lie on the decision boundary.
\end{proof}

\subsection{Gradient-Based Competitive Dynamics}

\begin{proposition}[Gradient Structure for Prototype Learning]\label{prop:prototype-gradient}
The gradient of the \E-product with respect to prototype $\mathbf{w}$ is (Theorem~\ref{thm:gradient}):
\[
\nabla_{\mathbf{w}} \E(\mathbf{w}, \mathbf{x}) = \frac{2\langle \mathbf{w}, \mathbf{x}\rangle}{\varepsilon + \|\mathbf{w} - \mathbf{x}\|^2}\left(\mathbf{x} + \frac{\langle \mathbf{w}, \mathbf{x}\rangle(\mathbf{w} - \mathbf{x})}{\varepsilon + \|\mathbf{w} - \mathbf{x}\|^2}\right).
\]
Key properties:
\begin{enumerate}
    \item Gradients decay for distant inputs (Proposition~\ref{prop:stable-learning})
    \item Perturbation robustness on bounded domains: $|\Delta\E| \leq \left(\frac{2}{\varepsilon}+\frac{4}{\varepsilon^2}\right)\delta$ (Proposition~\ref{prop:input-robustness})
    \item Lipschitz gradients with constant $O(1/\varepsilon^2)$ (Proposition~\ref{prop:lipschitz})
\end{enumerate}
\end{proposition}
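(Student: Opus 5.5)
The plan is to treat the displayed formula as an instance of Theorem~\ref{thm:gradient}. Since $\E(\mathbf{w},\mathbf{x})$ is symmetric in its two arguments, I will apply that theorem with $\mathbf{e}_i=\mathbf{w}$ and $\mathbf{e}_j=\mathbf{x}$. To avoid sign slips I will also redo the quotient rule directly. Set $s=\langle\mathbf{w},\mathbf{x}\rangle$ and $D=\varepsilon+\|\mathbf{w}-\mathbf{x}\|^2$. Then $\nabla_{\mathbf{w}}s^2=2s\,\mathbf{x}$ and $\nabla_{\mathbf{w}}D=2(\mathbf{w}-\mathbf{x})$, so
\[
\nabla_{\mathbf{w}}\E=\frac{2s}{D}\Bigl(\mathbf{x}-\frac{s(\mathbf{w}-\mathbf{x})}{D}\Bigr).
\]
This agrees with Theorem~\ref{thm:gradient} but differs from the display in the statement by the sign of the second term. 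I would therefore state and prove the corrected sign, i.e.\ read the display as $\mathbf{x}+s(\mathbf{x}-\mathbf{w})/D$. The properties below depend only on norms of the two terms, so the sign does not affect them.

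For property 2, I take $\|\mathbf{w}\|_2,\|\mathbf{x}\|_2\le1$. Then $|s|\le1$, $\|\mathbf{w}-\mathbf{x}\|\le2$ and $D\ge\varepsilon$. The triangle inequality applied to the formula gives $\|\nabla_{\mathbf{w}}\E\|\le\frac{2}{\varepsilon}\bigl(1+\frac{2}{\varepsilon}\bigr)$, which is the constant of Proposition~\ref{prop:lipschitz}. The perturbation bound $|\Delta\E|\le(2/\varepsilon+4/\varepsilon^2)\delta$ is then exactly Proposition~\ref{prop:input-robustness}: the same mean-value argument works in either argument, because the unit ball is convex.

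For property 3, the same computation bounds the gradient uniformly by $O(1/\varepsilon^2)$ on the unit ball, which is what Proposition~\ref{prop:lipschitz} actually provides. If a genuine Lipschitz bound on the gradient map is wanted, I would differentiate once more. Every entry of the Hessian is a polynomial in $\mathbf{w},\mathbf{x}$ divided by $D^2$ or $D^3$, so on the unit ball it is $O(\varepsilon^{-3})$; I would report that weaker constant rather than overclaim.

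The main obstacle is property 1. Proposition~\ref{prop:stable-learning} concerns $\nabla_{\mathbf{x}}$, not $\nabla_{\mathbf{w}}$, and a direct check shows the prototype gradient does not decay. With $\mathbf{x}=k\mathbf{u}$ and $a=\mathbf{w}^\top\mathbf{u}$, the first term is $(2s/D)\mathbf{x}\to2a\mathbf{u}$, which is nonzero. The second term is $O(1/k)$ and vanishes. My plan is to prove item 1 in the form the paper can support: the input gradient decays (Proposition~\ref{prop:stable-learning}), while the prototype gradient remains bounded and tends to $2(\mathbf{w}^\top\mathbf{u})\mathbf{u}$. This still means a single distant outlier cannot produce an exploding update, and I would explicitly note that genuine decay of $\nabla_{\mathbf{w}}$ fails in general.
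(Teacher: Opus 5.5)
Your proposal is correct and follows the paper's route. The paper gives no separate argument here: it instantiates Theorem~\ref{thm:gradient} with $\mathbf{e}_i=\mathbf{w}$, $\mathbf{e}_j=\mathbf{x}$ and cites Propositions~\ref{prop:stable-learning}, \ref{prop:input-robustness} and~\ref{prop:lipschitz}. Both of your corrections to the statement are right:
\begin{itemize}
\item The display should read $\frac{2s}{D}\bigl(\mathbf{x}-\frac{s(\mathbf{w}-\mathbf{x})}{D}\bigr)$.
\item Proposition~\ref{prop:stable-learning} controls only $\nabla_{\mathbf{x}}$. For the prototype gradient, $\nabla_{\mathbf{w}}\E(\mathbf{w},k\mathbf{u})\to 2(\mathbf{w}^\top\mathbf{u})\mathbf{u}\neq\mathbf{0}$, which is consistent with the limit $(\mathbf{w}^\top\mathbf{u})^2$ in Proposition~\ref{prop:self-regulation}. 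So item~1 is false as written for $\nabla_{\mathbf{w}}$.
\end{itemize}

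The one place you are too cautious is the optional gradient-Lipschitz version of item~3. Bound $\|\mathbf{w}-\mathbf{x}\|^2\le D$ instead of using $\|\mathbf{w}-\mathbf{x}\|\le 2$. Then the Hessian terms $s^2\|\mathbf{w}-\mathbf{x}\|^2/D^3$ and $|s|\,\|\mathbf{x}\|\,\|\mathbf{w}-\mathbf{x}\|/D^2$ are at most $\varepsilon^{-2}$ and $\varepsilon^{-3/2}$. The remaining terms are bounded by $s^2/D^2\le\varepsilon^{-2}$ and $\|\mathbf{x}\|^2/D\le\varepsilon^{-1}$. So the gradient map is genuinely Lipschitz on the unit ball with constant $O(\varepsilon^{-2})$. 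This order is attained at $\mathbf{w}=\mathbf{x}$ on the unit sphere, where the Hessian contains $-2I/\varepsilon^2$. Claiming $O(\varepsilon^{-2})$ there is therefore not an overclaim, and you need not fall back to $O(\varepsilon^{-3})$.
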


This gradient structure ensures stable competitive learning: neurons specialize on nearby, aligned inputs while remaining robust to outliers and noise.

\subsection{Softmax Tessellation}

With softmax normalization over \E-product responses:
\[
p_i = \frac{\exp(\E(\mathbf{w}_i, \mathbf{x}))}{\sum_{j=1}^C \exp(\E(\mathbf{w}_j, \mathbf{x}))},
\]
the input space is tessellated into regions $\mathcal{R}_i = \{\mathbf{x} : p_i > p_j \text{ for all } j \neq i\}$. By Theorem~\ref{thm:decision-boundary}, each $\mathcal{R}_i$ has smooth, curved boundaries. The self-regulation property (Proposition~\ref{prop:self-regulation}) ensures that these regions remain well-defined even for extreme inputs, preventing the unbounded confidence growth observed in linear classifiers.

\subsection{Language Model Experiments: Detailed Configuration}
\label{sec:language_experiments}

This section provides the detailed experimental configurations for the language modeling experiments comparing a standard GPT-2 with our Aether-GPT2 implementation. Both models were trained on identical datasets and hardware to ensure a fair comparison. The primary architectural distinctions are: (i) replacement of conventional linear layers, activation functions, and layer normalization in GPT-2 with \ⵟ-product NMN blocks in Aether-GPT2, and (ii) replacement of scaled dot-product attention with \E-multi-head attention. These substitutions provide inherent non-linearity and bounded responses, simplifying the architecture while enhancing stability.

Table~\ref{tab:model_comparison} presents a comprehensive comparison of the two models, detailing their architectural parameters, training configurations, and final performance metrics.

\begin{table*}[htbp]
\centering
\caption{GPT-2 vs Aether-GPT2 detailed comparison (2.5B tokens from FineWeb).}
\label{tab:model_comparison}
\begin{tabular}{lcc}
\toprule
\textbf{Parameter} & \textbf{GPT-2} & \textbf{Aether} \\
\midrule
\multicolumn{3}{c}{\textit{Architecture}} \\
Total Params & 124M & $\sim$124M \\
Embed Params & 39M & 39M \\
Non-Embed Params & 85M & $\sim$85M \\
Embed Dim & 768 & 768 \\
MLP Hidden Dim & 3072 & 3072 \\
Layers & 12 & 12 \\
Heads & 12 & 12 \\
Activation & GeLU & \E \\
LayerNorm & Yes & No \\
Bias & No & No \\
\midrule
\multicolumn{3}{c}{\textit{Training}} \\
Optimizer & AdamW & AdamW \\
LR & 1e-4 & 3e-4 \\
Batch Size & 32 & 32 \\
Context & 1024 & 1024 \\
Vocab Size & 50,257 & 50,257 \\
Tokenizer & GPT-2 & GPT-2 \\
\midrule
\multicolumn{3}{c}{\textit{Performance}} \\
Val Loss (BF16) & 4.6417 & \textbf{4.5747} \\
\bottomrule
\end{tabular}
\end{table*}

\subsection{Use of Large Language Models (LLMs)}
\label{sec:llm_usage}

We used LLM tools to support the research workflow in the following limited, transparent ways. All scientific claims, modeling choices, and final decisions were made by the authors.

\paragraph{Code assistance} LLMs were used to draft boilerplate code, refactor utilities, and surface API patterns. All generated code was reviewed, tested, and integrated by the authors.

\paragraph{Literature digestion} We used LLMs to summarize papers and extract key comparisons across related work. Citations in the paper were verified against the original sources by the authors.

\paragraph{Brainstorming} We used LLMs as a sounding board to enumerate alternative hypotheses, ablations, and experimental checks. Only ideas that survived empirical or theoretical scrutiny were included.

\paragraph{Language polishing} To improve readability and clarity, LLMs suggested minor edits to English phrasing. Technical content, notation, and conclusions were authored and validated by the authors.

\paragraph{NotebookLM podcasts} We generated short audio summaries (``podcasts'') of internal notes using Google NotebookLM to help the team asynchronously digest drafts. These summaries did not introduce new claims and were based solely on our own materials.

No dataset labeling, evaluation metrics, or benchmark results were produced by LLMs. The authors take responsibility for all content and errors.

\end{document}